\pgfplotsset{compat=newest}
\newcommand{\oea}{\mbox{$(1 + 1)$~EA}\xspace}
\newcommand{\mplea}{\mbox{$(\mu+\lambda)$~EA}\xspace}
\newcommand{\ollga}{${(1 + (\lambda , \lambda))}$~GA\xspace}
\newcommand{\onemax}{\textsc{OneMax}\xspace}
\newcommand{\leadingones}{\textsc{LeadingOnes}\xspace}
\newcommand{\om}{\textsc{OM}\xspace}
\newcommand{\jump}{\textsc{Jump}\xspace}
\newcommand{\maxsat}{\textsc{MAX-3SAT}\xspace}
\newcommand{\N}{{\mathbb N}}
\newcommand{\R}{{\mathbb R}}
\DeclareMathOperator{\Bin}{Bin}
\DeclareMathOperator{\pow}{pow}
\DeclareMathOperator{\Geom}{Geom}
\DeclareMathOperator*{\argmax}{arg\,max}
\newtheorem{theorem}{Theorem}
\newtheorem{lemma}[theorem]{Lemma}
\newtheorem{definition}[theorem]{Definition}
\begin{document}

\title{First Steps Towards a Runtime Analysis When Starting With a Good Solution}

\author{Denis Antipov \\
		The University of Adelaide \\
  		Adelaide, Australia \\
        \and
        Maxim Buzdalov\\
		Aberystwyth University \\
  		Aberystwyth, Wales, UK \\
		\and
		Benjamin Doerr \\
		Laboratoire d'Informatique (LIX), \\
		CNRS, \'Ecole Polytechnique, \\ 
		Institut Polytechnique de Paris \\
		Palaiseau, France \\
}

\maketitle

{\sloppy

\begin{abstract}
  The mathematical runtime analysis of evolutionary algorithms traditionally regards the time an algorithm needs to find a solution of a certain quality when initialized with a random population. In practical applications it may be possible to guess solutions that are better than random ones. We start a mathematical runtime analysis for such situations. We observe that different algorithms profit to a very different degree from a better initialization. We also show that the optimal parameterization of an algorithm can depend strongly on the quality of the initial solutions. To overcome this difficulty, self-adjusting and randomized heavy-tailed parameter choices can be profitable. Finally, we observe a larger gap between the performance of the best evolutionary algorithm we found and the corresponding black-box complexity. This could suggest that evolutionary algorithms better exploiting good initial solutions are still to be found. These first findings stem from analyzing the performance of the $(1+1)$ evolutionary algorithm and the static, self-adjusting, and heavy-tailed $(1 + (\lambda,\lambda))$ GA on the OneMax benchmark. We are optimistic that the question how to profit from good initial solutions is interesting beyond these first examples. 
%    In most optimization problems where the random search heuristics are used it is common that the closer we get to the optimum, the harder becomes the progress. E.g, the $(1 + 1)$ evolutionary algorithm spends most of the time in the last $O(\sqrt{n})$ fitness levels when optimizing OneMax. At the same time in practice it is a common case when some solution which is quite close to the optimal one is known and the goal is to improve it. This arises the interest in the algorithms which are efficient in the late phases of optimization. 
%
%    In this paper we show how the different versions of the $(1+(\lambda,\lambda))$ genetic algorithm solve the OneMax function starting in Hamming distance $D$ from the optimum. We give the general recommendations on how to tune the parameters of these algorithms to obtain a runtime which is equal or close to $O(\sqrt{nD})$.  

    % \keywords{Theory, runtime analysis, initialization of evolutionary algorithms, crossover, fast mutation.}
\end{abstract}

\section{Introduction}

The mathematical runtime analysis  (see, e.g,.~\cite{AugerD11,DoerrN20,Jansen13,NeumannW10,ZhouYQ19}) has contributed to our understanding of evolutionary algorithms (EAs) via rigorous analyses how long an EA takes to optimize a particular problem. The overwhelming majority of these results considers a random or worst-case initialization of the algorithm. In this work, we argue that it also makes sense to analyze the runtime of algorithms starting already with good solutions. This is justified because such situations arise in practice and because, as we observe in this work, different algorithms show a different runtime behavior when started with such good solutions. In particular, we observe that the $(1 + (\lambda,\lambda))$ genetic algorithm (\ollga) profits from good initial solutions by much more than, e.g., the \oea. From a broader perspective, this work suggests that the recently proposed fine-grained runtime notions like fixed budget analysis~\cite{JansenZ14} and fixed target analysis~\cite{BuzdalovDDV22}, which consider optimization up to a certain solution quality, should be extended to also take into account different initial solution qualities. 

\subsection{Starting with Good Solutions}

As just said, the vast majority of the runtime analyses assume a random initialization of the algorithm or they prove performance guarantees that hold for all initializations (worst-case view). This is justified for two reasons. (i)~When optimizing a novel problem for which little problem-specific understanding is available, starting with random initial solutions is a recommended approach. This avoids that a wrong understanding of the problem leads to an unfavorable initialization. Also, with independent runs of the algorithm automatically reasonably diverse initializations are employed. (ii)~For many optimizations processes analyzed with mathematical means it turned out that there is not much advantage of starting with a good solution. For this reason, such results are not stated explicitly, but can often be derived from the proofs. For example, when optimizing the simple \onemax benchmark via the equally simple \oea, then results like~\cite{Muhlenbein92,DrosteJW02,DoerrFW11,DoerrDY20} show a very limited advantage from a good initialization. When starting with a solution having already 99\% of the maximal fitness,  the expected runtime has the same $en \ln(n) \pm O(n)$ order of magnitude as the one starting from a random solution. Hence the gain from starting with the good solution is bounded by an $O(n)$ lower order term. Even when starting with a solution of fitness $n - \sqrt n$, that is, with fitness distance $\sqrt n$ to the optimum of fitness $n$, then only a runtime reduction by asymptotically a factor of a half results. Clearly, a factor-two runtime improvement is interesting in practice, but the assumption that an initial solution can be found that differs from the optimum in only $\sqrt n$ of the $n$ bit positions, is very optimistic. Without going into details, we note that other problem which exhibit a multiplicative drift behavior~\cite{DoerrJW12algo}, such as the optimization of minimum spanning trees~\cite{NeumannW07}, Euler cycles~\cite{DoerrJ07gecco}, or shortest paths~\cite{BaswanaBDFKN09,DoerrJ10}, would similarly little profit from a good initialization. 

So there is some justification for random initializations, but we also see a number of situations in which better-than-random solutions are available (and this is the motivation of this work). The obvious one is that a problem is to be solved for which some, at least intuitive, understanding is available. This is a realistic assumption in scenarios where similar problems are to be solved over a longer time period or where problems are solved by combining a human understanding of the problem with randomized heuristics. A second situation in which we expect to start with a good solution is reoptimization. Reoptimization~\cite{SchieberSTT18,Zych18} means that we had already solved a problem, then a mild change of the problem data arises (due to a change in the environment, a customer being unhappy with a particular aspect of the solution, etc.), and we react to this change not by optimizing the new problem from scratch, but by initializing the EA with solutions that were good in the original problem. While there is a decent amount of runtime analysis literature on how EAs cope with dynamic optimization problems, see~\cite{NeumannPR20bookchapter}, almost all of them regard the situation that a dynamic change of the instance happens frequently and the question is how well the EA adjusts to these changes. The only mathematical runtime analysis of a true reoptimization problem we are aware of is~\cite{DoerrDN19}. The focus there, however, is to modify an existing algorithm so that it better copes with the situation that the algorithm is started with a solution that is structurally close to the optimum, but has a low fitness obscuring to the algorithm that the current solution is already structurally good. A third way optimization problems starting with a good solution can show up is when using a first heuristics for some time and then switching to a second heuristic. This is done frequently in practice, however, we are aware of only one rigorous result in this direction, namely that such a hybridization of the Metropolis algorithm and the \oea gives a better performance than the \oea alone~\cite{DoerrRW22arxiv}. 

We note that using a known good solution to initialize a randomized search heuristic is again a heuristic approach. It is intuitive that an iterative optimization heuristic can profit from such an initialization, but there is no guarantee and, clearly, there are also situations where using such initializations is detrimental. As one example, assume that we obtain good initial solutions from running a simple hill-climber. Then these initial solutions could be local optima which are very hard to leave. An evolutionary algorithm initialized with random solutions might find it easier to generate a sufficient diversity that allows to reach the basin of attraction of the optimum. So obviously some care is necessary when initializing a search heuristic with good solutions. Several practical applications of evolutionary algorithms have shown advantages of initializations with good solutions, e.g.,~\cite{Liaw00} on the open shop scheduling problem.

While there are no explicit mathematical runtime analyses for EAs starting with a good solution, it is clear that many of the classic results in their proofs reveal much information also on runtimes starting from a good solution. This is immediately clear for the fitness level method~\cite{Wegener01}, but also for drift arguments like~\cite{HeY01,DoerrG13algo,DoerrJW12algo,MitavskiyRC09,Johannsen10} when as potential function the fitness or a similar function is used, and for many other results. By not making these results explicit, however, it is hard to see the full picture and to draw the right conclusions. 

\subsection{The \ollga Starting with Good Solutions}

In this work, we make explicit how the \ollga optimizes \onemax when starting from a solution with fitness distance $D$ from the optimum. We observe that the \ollga profits in a much stronger way from such a good initialization than other known algorithms. For example, when starting in fitness distance $D = \sqrt n$, the expected time to find the optimum is only $\tilde O(n^{3/4})$ when using optimal parameters. We recall that this algorithm has a runtime of roughly $n \sqrt{\log n}$ when starting with a random solution~\cite{DoerrDE15,DoerrD18}. We recall further that the \oea has an expected runtime of $(1 \pm o(1)) \frac 12 e n \ln(n)$ when starting in fitness distance $\sqrt n$ and an expected runtime of $(1 \pm o(1)) e n \ln n$ when starting with a random solution. So clearly, the \ollga profits to a much higher degree from a good initialization than the $\oea$. We made this precise for the \oea, but it is clear from other works such as~\cite{JansenJW05,Witt06,DoerrK15,AntipovD21algo} that similar statements hold as well for many other \mplea{}s optimizing \onemax, at least for some ranges of the parameters.

The runtime stated above for the \ollga assumes that the algorithm is used with the optimal parameter setting, more precisely, with the optimal setting for starting with a solution of fitness-distance $D$. Besides that we usually do not expect the algorithm user to guess the optimal parameter values, it is also not very realistic to assume that the user has a clear picture on how far the initial solution is from the optimum. For that reason, we also regard two parameter-less variants of the \ollga (where parameterless means that parameters with a crucial influence on the performance are replaced by hyperparameters for which the influence is less critical or for which we can give reasonable general rules of thumb). 

Already in~\cite{DoerrDE15}, a self-adjusting choice based on the one-fifth success rule of the parameters of the \ollga was proposed. This was shown to give a linear runtime on \onemax in~\cite{DoerrD18}. We note that this is, essentially, a parameterless algorithm since the target success rate (the ``one-fifth'') and the update factor had only a small influence on the result provided that they were chosen not too large (where the algorithm badly fails). See~\cite[Section~6.4]{DoerrD18} for more details. For this algorithm, we show that it optimizes \onemax in time $O(\sqrt{nD})$ when starting in distance $D$. Again, this is a parameterless approach (when taking the previous recommendations on how to set the hyperparameters).

A second parameterless approach for the \ollga was recently analyzed in~\cite{AntipovBD22}, namely to choose the parameter $\lambda$ randomly from a power-law distribution. Such a heavy-tailed parameter choice was shown to give a performance only slightly below the one obtainable from the best instance-specific values for the \oea optimizing jump functions~\cite{DoerrLMN17}. Surprisingly, the \ollga with heavy-tailed parameter choice could not only overcome the need to specify parameter values, it even outperformed any static parameter choice and had the same $O(n)$ runtime that the self-adjusting \ollga had~\cite{AntipovBD22}. When starting with a solution in fitness distance $D$, this algorithm with any power-law exponent equal to or slightly above two gives a performance which is only by a small factor slower than $O(\sqrt{nD})$.
%  between $1$ and $2$, uniformly and independent of $D$, gives the best performance, which is again $\tilde O(\sqrt{nD})$. 

\subsection{Experimental Results}

We support our theoretical findings with an experimental validation, which shows that both the self-adjusting and the heavy-tailed version of the \ollga indeed show the desired asymptotic behavior and this with only moderate implicit constants. In particular, the one-fifth self-adjusting version can be seen as a very confident winner in all cases, and the heavy-tailed versions with different power-law exponents follow it with the accordingly distributed runtimes. Interestingly enough, the logarithmically-capped self-adjusting version, which has been shown to be beneficial for certain problems other than \onemax~\cite{BuzdalovD17} and just a tiny bit worse than the basic one-fifth version on \onemax, starts losing ground to the heavy-tailed versions at distances just slightly smaller than $\sqrt{n}$.

\subsection{Black-Box Complexity and Lower Bounds}

The results above show that some algorithms can profit considerably from good initial solutions (but many do not). This raises the question of how far we can go in this direction, or formulated inversely, what lower bounds on this runtime problem we can provide. Given an objective function from a given class of functions and a search point at a Hamming distance $D$ from the optimum, one can define the relevant flavor of the unrestricted black-box complexity as the smallest expected number of fitness evaluations that an otherwise unrestricted black-box algorithm performs to find the optimum (of a worst-case input). 

If the class of functions consists of all \onemax-type functions, that is, \onemax and all functions with an isomorphic fitness landscape, we show that the black-box complexity is $\Theta(\log_{1 + \min\{D, n-D\}} \binom{n}{D})$, which is $\Theta(\frac{D \log (n/D)}{\log D})$ assuming $D \le n/2$. The lower bound uses the classic argument via randomized search trees and Yao's minimax principle from~\cite{DrosteJW06}, with additional careful considerations for how many different answers one can get for each query. The upper bound uses the classic random guessing strategy of~\cite{erd63}, which we show to require at most twice as many evaluations as the lower bound.

For small $D$, this black-box complexity of order $\Theta(\frac{D \log (n/D)}{\log D})$ is considerably lower than our upper bounds. Also, this shows a much larger gap between black-box complexity and EA performance than in the case of random initialization, where the black-box complexity is $\Theta(\frac{n}{\log n})$ and simple EAs have an $O(n \log n)$ performance.

\subsection{Synopsis and Structure of the Paper}
Overall, our results show that the question of how EAs work when started with a good initial solution is far from trivial. Some algorithms profit more from this than others, the question of how to set the parameters might be influenced by the starting level $D$ and this may make parameterless approaches more important, and the larger gap to the black-box complexity could suggest that there is room for further improvements. 

The rest of the paper is organized as follows. 
In Section~\ref{sec:preliminaries} we formally define the considered algorithms and the problem and collect some useful analysis tools. In Section~\ref{sec:runtimes} we prove the upper bounds on the runtime of the algorithms and deliver general recommendations on how to use each algorithm. In Section~\ref{sec:bbc} we formally define the conditional unrestricted black-box complexity and prove first upper and lower bounds for that complexity. In Section~\ref{sec:experiments} we check how our recommendations work in experiments.

\section{Preliminaries}
\label{sec:preliminaries}

% \subsection{Notation}

% In this paper we use the following notation.
% \begin{itemize}
%     \item $D$ --- the initial distance from the optimum.
%     \item $d$ --- the current distance from the optimum.
%     \item $T_F$ --- runtime in terms of fitness evaluations.
%     \item $T_I$ --- runtime in terms of iteration.
% \end{itemize}

\subsection{The \ollga and Its Modifications}
\label{sec:algos}

We consider the \ollga, which is a genetic algorithm for the optimization of $n$-dimensional pseudo-Boolean functions, first proposed in~\cite{DoerrDE15}. This algorithm has three parameters, which are the mutation rate $p$, the crossover bias $c$, and the population size $\lambda$.

The \ollga stores the current individual $x$, which is initialized with a random bit string. Each iteration of the algorithm consists of a mutation phase and a crossover phase. In the mutation phase we first choose a number $\ell$ from the binomial distribution with parameters $n$ and $p$. Then we create $\lambda$ offsprings by flipping $\ell$ random bits in $x$, independently for each offspring. An offspring with the best fitness is chosen as the mutation winner $x'$ (all ties are broken uniformly at random). Note that $x'$ can and often will have a worse fitness than~$x$.

In the crossover phase we create $\lambda$ offspring by applying a biased crossover to $x$ and $x'$ (independently for each offspring). This biased crossover takes each bit from $x$ with probability $(1 - c)$ and from $x'$ with probability $c$. A crossover offspring with best fitness is selected as the crossover winner $y$ (all ties are broken uniformly at random). If $y$ is not worse than $x$, it replaces the current individual. The pseudocode of the \ollga is shown in Algorithm~\ref{alg:pseudo}. 

\begin{algorithm}[h]
    $x \gets $ random bit string of length $n$\;
    \While{not terminated}
        {
        \textbf{Mutation phase:}\\
        Choose $\ell \sim \Bin\left(n, p\right)$\;
        \For{$i \in [1..\lambda]$}
            {$x^{(i)} \gets$ a copy of $x$\;
            Flip $\ell$ bits in $x^{(i)}$ chosen uniformly at random\;
            }
        $x' \gets \argmax_{z \in \{x^{(1)}, \dots, x^{(\lambda)}\}}f(z)$\;
        \textbf{Crossover phase:}\\
        \For{$i \in [1..\lambda]$}
            {Create $y^{(i)}$ by taking each bit from $x'$ with probability $c$ and from $x$ with probability $(1 - c)$\;
            }
        $y \gets \argmax_{z \in \{y^{(1)}, \dots, y^{(\lambda)}\} }f(z)$\;
        \If{$f(y) \ge f(x)$}
            {
             $x \gets y$\;   
            }
        }
    \caption{The \ollga maximizing a pseudo-Boolean function~$f$.}
    \label{alg:pseudo}
\end{algorithm}

Based on intuitive considerations and rigorous runtime analyses, a standard parameter settings was proposed in which the mutation rate and crossover bias are defined via the population size, namely, $p = \frac{\lambda}{n}$ and $c = \frac{1}{\lambda}$.

It was shown in~\cite{DoerrDE15} that with a suitable \textbf{static parameter} value for $\lambda$, this algorithm can solve the \onemax function in $O(n\sqrt{\log(n)})$ fitness evaluations (this bound was minimally reduced and complemented with a matching lower bound in~\cite{DoerrD18}). The authors of~\cite{DoerrDE15} noticed that with the \textbf{fitness-dependent parameter $\lambda = \sqrt{\frac{n}{d}}$} the algorithm solves \onemax in only $\Theta(n)$ iterations. 

The fitness-depending parameter setting was not satisfying, since it is too problem-specific and most probably does not work on practical problems. For this reason, also a \textbf{self-adjusting parameter choice} for $\lambda$ was proposed in~\cite{DoerrDE15} and analyzed rigorously in~\cite{DoerrD18}. It uses a simple one-fifth rule, multiplying the parameter $\lambda$ by some constant $A > 1$ at the end of the iteration when $f(y) \le f(x)$, and dividing $\lambda$ by $A^4$ otherwise (the forth power ensures the desired property that the parameter does not change in the long run when in average one fifth of the iterations are successful). This simple rule was shown to keep the parameter $\lambda$ close to the optimal fitness-dependent value during the whole optimization process, leading to a $\Theta(n)$ runtime on \onemax. However, this method of parameter control was not efficient on the \maxsat problem, which has a lower fitness-distance correlation than \onemax~\cite{BuzdalovD17}. Therefore, capping the maximal value of $\lambda$ at $2\ln(n + 1)$ was needed to obtain a good performance on this problem.

Inspired by~\cite{DoerrLMN17}, the recent paper~\cite{AntipovBD22} proposed use a heavy-tailed random $\lambda$, which gave a birth to the \textbf{fast \ollga}. In this algorithm the parameter~$\lambda$ is chosen from the power-law distribution with exponent $\beta$ and with upper limit $u$, which we denote by $\pow(\beta, u)$. Here for all $i \in \N$ we have
\begin{align*}
    \Pr[\lambda = i] = \begin{cases}
        C_{\beta, u}i^{-\beta}, \text{ if } i \in [1..u], \\
        0, \text{ otherwise,}
    \end{cases}
\end{align*} 
where $C_{\beta, u} = (\sum_{j = 1}^u j^{-\beta})^{-1}$ is the normalization coefficient. It was proven that the fast \ollga finds the optimum of \onemax in $\Theta(n)$ fitness evaluations if $\beta \in (2, 3)$ and $u$ is large enough. Also it was empirically shown that this algorithm without further capping of $\lambda$ is quite efficient on \maxsat.

When talking about the runtime of the \ollga, we denote the number of iterations until the optimum is found by $T_I$ and the number of fitness evaluations until the optimum is found by $T_F$. We denote the distance of the current individual to the optimum by~$d$.

We note in passing the the \ollga was also analyzed on the \leadingones problem, where it was shown to have a performance comparable to the one of simpler evolutionary algorithms~\cite{AntipovDK19foga}. It was also analyzed on \jump functions, where it greatly outperformed simple algorithms, however only with a different parameter setting than the one that was successful~\cite{AntipovDK22,FajardoS22}.

\subsection{Problem Statement}
\label{sec:problem}

The main object of this paper is the runtime of the algorithms discussed in Section~\ref{sec:algos} when they start in distance $D$ from the optimum, where $D$ should be smaller than the distance of a random solution. For this purpose we consider the classic \onemax function, which is defined on the space of bit strings of length $n$ by
\begin{align*}
    \onemax(x) = \om(x) = \sum_{i = 1}^n x_i.
\end{align*}

In the context of black-box complexity, however, we are rather interested in a family of isomorphic \onemax-like problems defined as follows
\begin{align*}
    \om_z(x) = \sum_{i = 1}^n (1 - |x_i - z_i|),
\end{align*}
where $z \in \{0,1\}^n$ is the optimum. Clearly, all evolutionary algorithms considered in this paper are unbiased (in the sense of~\cite{LehreW12,RoweV11}), so they behave identically on all these problems regardless of the values of $z$.

%More formally, it returns the number of one-bits in its argument.
%
%We aim at finding the expected runtime of the algorithms listed in the previous section, when they start in distance $D$ from the optimum, where $D$ is supposed to be less than the mean distance of random initialization $\frac{n}{2}$.

\subsection{Probability for Progress}

To prove our upper bounds on the runtimes we use the following estimate for the probability that the \ollga finds a better solution in one iteration.

\begin{lemma}
    \label{lem:progress}
    The probability that $\om(y) > \om(x)$ is $\Omega(\min\{1, \frac{d\lambda^2}{n}\})$.
\end{lemma}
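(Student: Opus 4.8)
The plan is to lower-bound the one-iteration improvement probability by isolating one concrete favorable scenario and estimating its probability, using the standard parameter relations $p=\lambda/n$ and $c=1/\lambda$. Call the $d$ positions in which the current individual $x$ differs from the optimum the \emph{good} bits (flipping one of them raises $\om$) and the other $n-d$ positions the \emph{bad} bits. Since every mutation offspring flips exactly $\ell$ bits, its fitness is $\om(x)+g-b$ where $g,b$ are the numbers of good and bad bits it flips and $g+b=\ell$; hence the mutation winner $x'$ is an offspring maximizing $g$, and in particular has $b=\ell-g\le\ell$ bad flips. This observation (the winner automatically has few bad flips) is what later lets the crossover undo the bad flips cheaply.

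First I would pin down a good range for $\ell$. Since $\ell\sim\Bin(n,\lambda/n)$ has mean $\lambda\ge 1$, elementary tail bounds (Markov for the upper tail, a Chernoff bound for the lower tail) give $\Pr[\lambda/2\le\ell\le 2\lambda]=\Omega(1)$; I condition on this event $E$, which forces $1\le\ell\le 2\lambda$. For the mutation step, the probability that a single offspring flips at least one good bit is $q(\ell)=1-\binom{n-d}{\ell}\binom{n}{\ell}^{-1}$, and for $\ell\ge\lambda/2$ the standard estimate $q(\ell)=\Omega(\min\{1,\lambda d/n\})$ holds. As the $\lambda$ offspring are independent given $\ell$, the winner $x'$ has at least one good flip with probability $1-(1-q(\ell))^\lambda=\Omega(\min\{1,\lambda q(\ell)\})=\Omega(\min\{1,\lambda^2 d/n\})$, where the first step uses $1-(1-q)^\lambda=\Omega(\min\{1,\lambda q\})$ and the second substitutes the bound on $q(\ell)$ and collapses the nested minima.

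Finally, conditioned on $x'$ having $g\ge 1$ good and $b=\ell-g\le 2\lambda$ bad flips, a single crossover offspring inherits at least one good bit of $x'$ and none of its bad bits with probability at least $[1-(1-c)^g](1-c)^b\ge c\,(1-c)^b=\tfrac1\lambda(1-\tfrac1\lambda)^b\ge\tfrac1\lambda(1-\tfrac1\lambda)^{2\lambda}=\Omega(1/\lambda)$ for $\lambda\ge 2$, and such an offspring strictly beats $x$. Since the $\lambda$ crossover offspring are independent, at least one succeeds with probability $1-(1-\Omega(1/\lambda))^\lambda=\Omega(1)$. Multiplying the constant probability of $E$, the mutation bound, and the crossover bound gives $\Pr[\om(y)>\om(x)]=\Omega(\min\{1,\lambda^2 d/n\})$. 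The degenerate case $\lambda=1$ (where $c=1$, so $y=x'$) is treated directly via the event that $\ell=1$ and this single flip hits a good bit, which already has probability $\Omega(d/n)$.

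I expect the main obstacle to be the mutation estimate: establishing $q(\ell)=\Omega(\min\{1,\lambda d/n\})$ cleanly over the whole range of $\ell$ and $d$, and then combining it with the $1-(1-q)^\lambda$ amplification so that both regimes $\lambda^2 d/n\lessgtr 1$ come out as $\Omega(\min\{1,\lambda^2 d/n\})$ without losing constant factors; the concentration of $\ell$ for small $\lambda$ and the bookkeeping on bad flips are comparatively routine.
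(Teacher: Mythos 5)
Your proposal is correct in substance but takes a genuinely different route from the paper. The paper's proof is a two-line citation argument: it invokes Lemma~7 of~\cite{DoerrDE15}, which already states that the probability of a true progress in one iteration is $\Omega\bigl(1 - (\frac{n-d}{n})^{\lambda^2/2}\bigr)$, and then applies the Bernoulli-type estimate $1-(1-p)^\lambda \ge \frac{\lambda p}{1+\lambda p}$ (Lemma~\ref{lem:Bernoulli}) to turn this into $\Omega(\min\{1,\frac{d\lambda^2}{n}\})$. What you do instead is re-derive the content of that cited lemma from first principles: your three-stage decomposition (concentration of $\ell$ around its mean $\lambda$, at least one of the $\lambda$ mutation offspring hitting a good bit, the crossover extracting one good flip while discarding all bad flips of the mutation winner) is essentially the argument inside Lemma~7 of~\cite{DoerrDE15}, and your amplification steps of the form $1-(1-q)^\lambda = \Omega(\min\{1,\lambda q\})$ are the same Bernoulli-type bound that the paper applies once at the end. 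Your version is self-contained and makes visible \emph{why} the bound holds --- in particular, why the standard choices $p=\lambda/n$ and $c=1/\lambda$ make the per-offspring repair probability $\Omega(1/\lambda)$, so that the crossover phase succeeds with constant probability --- whereas the paper's version is shorter but opaque without consulting the reference.

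One step needs patching. Your claim $\Pr[\lambda/2\le\ell\le 2\lambda]=\Omega(1)$ does not follow from ``Markov for the upper tail, Chernoff for the lower tail'' when $\lambda$ is a small constant: Markov gives only $\Pr[\ell>2\lambda]\le 1/2$, and for $\lambda\le 5$ the Chernoff lower-tail bound $e^{-\lambda/8}$ is itself at least $1/2$, so the union bound is vacuous. The statement itself is true for all $\lambda\ge 1$, but for constant $\lambda$ you should argue directly, e.g., via $\Pr[\ell=\lceil\lambda\rceil]=\Omega(1)$, which holds because $\ell\sim\Bin(n,\lambda/n)$ is approximately Poisson with constant mean; note that your separate treatment at the end covers only $\lambda=1$ (where $c=1$), not the remaining small constants $\lambda\in\{2,\dots,5\}$ for which the main argument is invoked. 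With that repair, the proof is complete and yields the lemma as stated.
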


To prove this lemma we use the following auxiliary result from~\cite{AntipovBD22}, a slight adaptation of~\cite[Lemma~8]{RoweS14}.

\begin{lemma}[Lemma 2.2 in~\cite{AntipovBD22}]
	\label{lem:Bernoulli}
	For all $p \in [0, 1]$ and all $\lambda > 0$ we have
	\[
		1 - (1 - p)^\lambda \ge \frac{\lambda p}{1 + \lambda p}.
	\]
\end{lemma}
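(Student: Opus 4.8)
The plan is to reduce the claimed inequality to a clean product form and then bound each factor by an elementary exponential estimate. First I would observe that, since $1 + \lambda p > 0$ for all $p \in [0,1]$ and $\lambda > 0$, the stated inequality $1 - (1-p)^\lambda \ge \frac{\lambda p}{1 + \lambda p}$ is equivalent, after isolating $(1-p)^\lambda$, to $(1-p)^\lambda \le 1 - \frac{\lambda p}{1+\lambda p} = \frac{1}{1+\lambda p}$, that is, to
\[
    (1-p)^\lambda (1 + \lambda p) \le 1.
\]
This reformulation is the conceptual heart of the argument: it turns the bound into a statement that factorizes and can be attacked with the two most standard convexity estimates for the exponential function.

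Next I would bound the two factors separately. For the first factor I use $1 - p \le e^{-p}$, which holds for every real $p$; since $p \in [0,1]$ gives $1 - p \ge 0$ and $\lambda > 0$, and the map $t \mapsto t^\lambda$ is nondecreasing on $[0,\infty)$, raising both nonnegative sides to the power $\lambda$ preserves the inequality and yields $(1-p)^\lambda \le e^{-\lambda p}$. For the second factor I use $1 + x \le e^x$ with $x = \lambda p \ge 0$, which gives $1 + \lambda p \le e^{\lambda p}$. Multiplying these two estimates gives $(1-p)^\lambda (1 + \lambda p) \le e^{-\lambda p} e^{\lambda p} = 1$, which is exactly the reformulated claim, and reversing the first step recovers the statement of the lemma.

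I do not expect any genuine obstacle here; the only points needing a moment of care are the boundary behaviour and the validity of the algebraic manipulations. At $p = 1$ we have $(1-p)^\lambda = 0$ (using $\lambda > 0$), so the left-hand side of the reformulated product is $0 \le 1$, and the original inequality reads $1 \ge \frac{\lambda}{1+\lambda}$, which holds. At $p = 0$, and in the limit $\lambda \to 0$, both sides of the product tend to $1$, so the bound is tight there, confirming that the two exponential estimates are applied in the correct direction. Since every quantity involved is nonnegative and the denominator $1 + \lambda p$ is strictly positive throughout, the passage between the original form and the product form is a genuine equivalence, so no additional case distinction is required.
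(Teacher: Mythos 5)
Your proof is correct and complete. Note that the paper itself offers no proof of this statement: it is imported verbatim as Lemma~2.2 of~\cite{AntipovBD20}, which in turn is described as a slight adaptation of Lemma~8 in~\cite{RoweS14}, so there is no in-paper argument to compare against. Your route --- rewriting the claim as the equivalent product inequality $(1-p)^\lambda (1+\lambda p) \le 1$ and then combining the two elementary estimates $1-p \le e^{-p}$ (raised to the power $\lambda$, legitimately, since both sides are nonnegative and $t \mapsto t^\lambda$ is nondecreasing on $[0,\infty)$ for $\lambda > 0$) and $1+\lambda p \le e^{\lambda p}$ --- is the standard one and is essentially the argument given in the cited source. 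It has the added virtue of working directly for all real $\lambda > 0$, whereas the alternative proof via Bernoulli's inequality, natural for the integer-$\lambda$ version in~\cite{RoweS14} (write $(1-p)^{-\lambda} = (1 + \frac{p}{1-p})^\lambda \ge 1 + \lambda p$ for $p<1$), requires extra care for $0 < \lambda < 1$, where Bernoulli's inequality reverses. Your boundary checks at $p=1$ and $p=0$ are apt, and the equivalence step is justified exactly as you say, since $1+\lambda p \ge 1 > 0$.
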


\begin{proof}[Proof of Lemma~\ref{lem:progress}]
    By Lemma~7 in~\cite{DoerrDE15} the probability to have a true progress in one iteration is $\Omega(1 - (\frac{n - d}{n})^\frac{\lambda^2}{2})$. By Lemma~\ref{lem:Bernoulli} this is at least $\Omega(\min\{1, \frac{d\lambda^2}{n}\})$.
\end{proof}

\subsection{Useful Tools}

When working with the power law distribution one often has to estimate partial sums of the generalized harmonic series or of even more complicated series. The following two lemmas immensely help us to do it with a reasonable precision.

\begin{lemma}\label{lem:sum-int}
    Let $f(x)$ be a non-negative integrable function on $[a, b]$, were $a, b$ are some integer numbers.
    \begin{enumerate}
        \item If $f(x)$ is non-decreasing on $[a, b]$, then we have
        \begin{align*}
            f(a) + \int_a^b f(x) dx \le \sum_{i = a}^b f(i) \le \int_a^b f(x) dx + f(b).
        \end{align*}
        \item If $f(x)$ is non-increasing on $[a, b]$, then we have
        \begin{align*}
            \int_a^b f(x) dx + f(b) \le \sum_{i = a}^b f(i) \le f(a) + \int_a^b f(x) dx.
        \end{align*}
        \item If there exists some real number $c \in (a, b)$ such that $f(x)$ is non-decreasing on $[a, c]$ and $f(x)$ is non-increasing on $[c, b]$, then we have
        \begin{align*}
            f(a) + f(b) - f(c) + \int_a^b f(x) dx \le \sum_{i = a}^b f(i) \le \int_a^b f(x) dx + f(c).
        \end{align*}
    \end{enumerate} 
\end{lemma}

\begin{proof}
    To prove all three statements of the lemma we use the following expression of the sum, which is true for all functions.
    \begin{align*}
        \sum_{i = a}^b f(i) = \int_{a}^{b + 1} f(\lfloor x \rfloor) dx.
    \end{align*}
    In the case of \textbf{non-decreasing} function $f$ for all $x \in [a, b]$ we have
    \begin{align*}
        f(x - 1) \le f(\lfloor x \rfloor) \le f(x).
    \end{align*}
    Therefore, we compute
    \begin{align*}
        \sum_{i = a}^b f(i) &= f(a) + \sum_{i = a + 1}^{b} f(i) = f(a) + \int_{a + 1}^{b + 1} f(\lfloor x \rfloor) dx \\
        & \ge f(a) + \int_{a + 1}^{b + 1} f(x - 1) dx = f(a) + \int_{a}^{b} f(x) dx.  
    \end{align*}
    We also compute the upper bound.
    \begin{align*}
        \sum_{i = a}^b f(i) &= \sum_{i = a}^{b - 1} f(i) + f(b) = \int_{a}^{b} f(\lfloor x \rfloor) dx  + f(b)\\
        & \le \int_{a}^{b} f(x) dx + f(b).  
    \end{align*}
    These two bounds are illustrated in Figure~\ref{fig:integral-increasing}.
    \begin{figure}
        \caption{Illustration of the bounding integrals for an increasing $f(x)$. In both plots the gray area is equal to the estimated sum. In the left plot the red area is an upper bound for the sum. In the right plot the blue area is the lower bound for the sum.}
        \label{fig:integral-increasing}
        \begin{center}
            \begin{tikzpicture}
                \begin{axis}[
                ylabel={$f(x)$},
                xlabel={$x$},
                legend pos=outer north east,
                ymin=0,
                xtick={1, 4, 5},
                xticklabels={$a$, $b$, $b + 1$},
                ymajorticks=false,
                width=0.5\textwidth,
                legend pos=north west]
                    \addplot[domain=1:5, samples=81, draw=red, ultra thick]{f1(x)};
                    \addlegendentry{$f(x)$};
                    \addplot[domain=1:5, samples=81, draw=none, fill=red, fill opacity=0.2]{f1(x)} \closedcycle;
                    \addplot[thick, dashed, draw=red] coordinates
                    {(1, 1) (1, 0)};
                    \addplot[thick, dashed, draw=red] coordinates
                    {(4, 8) (4, 0)};
                    \addplot[thick, dashed, draw=red] coordinates
                    {(5, 8) (5, 0)};
                
                    \addplot[ultra thick] coordinates
                    {(1,1) (2, 1)};
                    \addplot[draw=none, fill=black, fill opacity=0.2]coordinates
                    {(1,1) (2, 1)} \closedcycle;
                    \addplot[ultra thick] coordinates
                    {(2,2.83) (3, 2.83)};
                    \addplot[draw=none, fill=black, fill opacity=0.2]coordinates
                    {(2,2.83) (3, 2.83)} \closedcycle;
                    \addplot[ultra thick] coordinates
                    {(3,5.2) (4, 5.2)};
                    \addplot[draw=none, fill=black, fill opacity=0.2]coordinates
                    {(3,5.2) (4, 5.2)} \closedcycle;
                    \addplot[ultra thick] coordinates
                    {(4,8) (5, 8)};
                    \addplot[draw=none, fill=black, fill opacity=0.2]coordinates
                    {(4,8) (5, 8)} \closedcycle;
                    \node [red] at (3, 1.2) {$\int\limits_a^b f(x) dx$};
                    \node [red] at (4.5, 4) {$f(b)$}; 
                \end{axis}
            \end{tikzpicture}
            \begin{tikzpicture}
                \begin{axis}[
                ylabel={$f(x)$},
                xlabel={$x$},
                legend pos=outer north east,
                ymin=0,
                xtick={1, 2, 5},
                xticklabels={$a$, $a + 1$, $b + 1$},
                ymajorticks=false,
                width=0.5\textwidth,
                legend pos=north west]
                    \addplot[domain=1:5, samples=81, draw=blue, ultra thick]{f2(x)};
                    \addlegendentry{$f(x - 1)$};
                    \addplot[domain=1:5, samples=81, draw=none, fill=blue, fill opacity=0.2]{f2(x)} \closedcycle;
                    \addplot[thick, dashed, draw=blue] coordinates
                    {(1, 1) (1, 0)};
                    \addplot[thick, dashed, draw=blue] coordinates
                    {(2, 1) (2, 0)};
                    \addplot[thick, dashed, draw=blue] coordinates
                    {(5, 8) (5, 0)};
                
                    \addplot[ultra thick] coordinates
                    {(1,1) (2, 1)};
                    \addplot[draw=none, fill=black, fill opacity=0.2]coordinates
                    {(1,1) (2, 1)} \closedcycle;
                    \addplot[ultra thick] coordinates
                    {(2,2.83) (3, 2.83)};
                    \addplot[draw=none, fill=black, fill opacity=0.2]coordinates
                    {(2,2.83) (3, 2.83)} \closedcycle;
                    \addplot[ultra thick] coordinates
                    {(3,5.2) (4, 5.2)};
                    \addplot[draw=none, fill=black, fill opacity=0.2]coordinates
                    {(3,5.2) (4, 5.2)} \closedcycle;
                    \addplot[ultra thick] coordinates
                    {(4,8) (5, 8)};
                    \addplot[draw=none, fill=black, fill opacity=0.2]coordinates
                    {(4,8) (5, 8)} \closedcycle;
                    \node [blue] at (3.8, 1.2) {\footnotesize{$\int\limits_{a + 1}^{b + 1} f(x - 1) dx$}};
                    \node [blue] at (1.5, 0.5) {$f(a)$}; 
                \end{axis}
            \end{tikzpicture}
        \end{center}
    \end{figure}    

    In the case of \textbf{non-increasing} function $f$ for all $x \in[a,b]$ we have 
    \begin{align*}
        f(x - 1) \ge f(\lfloor x \rfloor) \ge f(x).
    \end{align*}
    We use similar arguments as for a non-decreasing function and compute
    \begin{align*}
        \sum_{i = a}^b f(i) &= \sum_{i = a}^{b - 1} f(i) + f(b) = \int_{a}^{b} f(\lfloor x \rfloor) dx  + f(b)\\
        & \ge \int_{a}^{b} f(x) dx + f(b).  
    \end{align*}
    We also compute the upper bound.
    \begin{align*}
        \sum_{i = a}^b f(i) &= f(a) + \sum_{i = a + 1}^{b} f(i) = f(a) + \int_{a + 1}^{b + 1} f(\lfloor x \rfloor) dx \\
        & \le f(a) + \int_{a + 1}^{b + 1} f(x - 1) dx = f(a) + \int_{a}^{b} f(x) dx.  
    \end{align*}
    These two bounds are illustrated in Figure~\ref{fig:integral-decreasing}.

    \begin{figure}
        \caption{Illustration of the bounding integrals for an decreasing $f(x)$. In both plots the gray area is equal to the estimated sum. In the left plot the red area is an upper bound for the sum. In the right plot the blue area is the lower bound for the sum.}
        \label{fig:integral-decreasing}
        \begin{center}
            \begin{tikzpicture}
                \begin{axis}[
                ylabel={$f(x)$},
                xlabel={$x$},
                legend pos=outer north east,
                ymin=0,
                xtick={1, 2, 5},
                xticklabels={$a$, $a + 1$, $b + 1$},
                ymajorticks=false,
                width=0.5\textwidth,
                legend pos=north east]
                    \addplot[domain=1:5, samples=81, draw=red, ultra thick]{f3(x)};
                    \addlegendentry{$f(x - 1)$};
                    \addplot[domain=1:5, samples=81, draw=none, fill=red, fill opacity=0.2]{f3(x)} \closedcycle;
                    \addplot[thick, dashed, draw=red] coordinates
                    {(1, 1) (1, 0)};
                    \addplot[thick, dashed, draw=red] coordinates
                    {(2, 1) (2, 0)};
                    \addplot[thick, dashed, draw=red] coordinates
                    {(5, 0.25) (5, 0)};
                
                    \addplot[ultra thick] coordinates
                    {(1, 1) (2, 1)};
                    \addplot[draw=none, fill=black, fill opacity=0.2]coordinates
                    {(1, 1) (2, 1)} \closedcycle;
                    \addplot[ultra thick] coordinates
                    {(2, 0.5) (3, 0.5)};
                    \addplot[draw=none, fill=black, fill opacity=0.2]coordinates
                    {(2, 0.5) (3, 0.5)} \closedcycle;
                    \addplot[ultra thick] coordinates
                    {(3, 0.33) (4, 0.33)};
                    \addplot[draw=none, fill=black, fill opacity=0.2]coordinates
                    {(3, 0.33) (4, 0.33)} \closedcycle;
                    \addplot[ultra thick] coordinates
                    {(4, 0.25) (5, 0.25)};
                    \addplot[draw=none, fill=black, fill opacity=0.2]coordinates
                    {(4, 0.25) (5, 0.25)} \closedcycle;
                    \node [red] at (3.5, 0.15) {$\int\limits_{a + 1}^{b + 1} f(x - 1) dx$};
                    \node [red] at (1.5, 0.5) {$f(a)$}; 
                \end{axis}
            \end{tikzpicture}
            \begin{tikzpicture}
                \begin{axis}[
                ylabel={$f(x)$},
                xlabel={$x$},
                legend pos=outer north east,
                ymin=0,
                xtick={1, 4, 5},
                xticklabels={$a$, $b$, $b + 1$},
                ymajorticks=false,
                width=0.5\textwidth,
                legend pos=north east]
                    \addplot[domain=1:5, samples=81, draw=blue, ultra thick]{f4(x)};
                    \addlegendentry{$f(x)$};
                    \addplot[domain=1:5, samples=81, draw=none, fill=blue, fill opacity=0.2]{f4(x)} \closedcycle;
                    \addplot[thick, dashed, draw=blue] coordinates
                    {(1, 1) (1, 0)};
                    \addplot[thick, dashed, draw=blue] coordinates
                    {(4, 0.25) (4, 0)};
                    \addplot[thick, dashed, draw=blue] coordinates
                    {(5, 0.25) (5, 0)};
                
                    \addplot[ultra thick] coordinates
                    {(1, 1) (2, 1)};
                    \addplot[draw=none, fill=black, fill opacity=0.2]coordinates
                    {(1, 1) (2, 1)} \closedcycle;
                    \addplot[ultra thick] coordinates
                    {(2, 0.5) (3, 0.5)};
                    \addplot[draw=none, fill=black, fill opacity=0.2]coordinates
                    {(2, 0.5) (3, 0.5)} \closedcycle;
                    \addplot[ultra thick] coordinates
                    {(3, 0.33) (4, 0.33)};
                    \addplot[draw=none, fill=black, fill opacity=0.2]coordinates
                    {(3, 0.33) (4, 0.33)} \closedcycle;
                    \addplot[ultra thick] coordinates
                    {(4, 0.25) (5, 0.25)};
                    \addplot[draw=none, fill=black, fill opacity=0.2]coordinates
                    {(4, 0.25) (5, 0.25)} \closedcycle;
                    \node [blue] at (2.3, 0.2) {$\int\limits_a^b f(x) dx$};
                    \node [blue] at (4.5, 0.12) {$f(b)$}; 
                \end{axis}
            \end{tikzpicture}
        \end{center}
    \end{figure}

    For a function \textbf{with maximum in} $c \in (a, b)$ we note that for all $x \in [a, c]$ we have
    \begin{align*}
        f(x - 1) \le f(\lfloor x \rfloor) \le f(x),
    \end{align*}
    for all $x \in [c + 1, b]$ (if $c + 1 \le b$) we have 
    \begin{align*}
        f(x - 1) \ge f(\lfloor x \rfloor) \ge f(x),
    \end{align*}
    and for all $x \in [c, \min\{c + 1, b\}]$ we have
    \begin{align*}
        f(c) \ge f(\lfloor x \rfloor) \ge \min\{f(x), f(x - 1)\} \ge f(x) + f(x - 1) - f(c).
    \end{align*}

    When $b \ge c + 1$, then we compute the lower and upper bounds as follows (this case is illustrated in Figure~\ref{fig:integral-with-max}).
    \begin{align*}
        \sum_{i = a}^b f(i) &= f(a) + \sum_{i = a + 1}^{b - 1} f(i)  + f(b)  = f(a) + f(b) + \int_{a + 1}^b f(\lfloor x \rfloor) dx \\
        &\ge f(a) + f(b) + \int_{a + 1}^c f(x - 1) dx \\
        &+ \int_{c}^{c + 1} \left(f(x) + f(x - 1) - f(c)\right) dx + \int_{c + 1}^b f(x) dx \\
        &= f(a) + f(b) - f(c) + \int_{a + 1}^{c + 1} f(x - 1) dx + \int_{c}^b f(x) dx\\
        &= f(a) + f(b) - f(c) + \int_{a}^{b} f(x) dx.\\
        \sum_{i = a}^b f(i) &= \int_{a}^{b + 1} f(\lfloor x \rfloor) dx \\
        &\le \int_a^c f(x) dx + \int_c^{c + 1} f(c) dx +  \int_{c + 1}^{b + 1} f(x - 1) dx \\
        &= f(c) + \int_a^b f(x) dx.
    \end{align*}

    When $b < c + 1$ the upper bound can be shown with the same arguments. For the lower bound we have
    \begin{align*}
        \sum_{i = a}^b f(i) &= f(a) + \sum_{i = a + 1}^{b - 1} f(i)  + f(b)  = f(a) + f(b) + \int_{a + 1}^b f(\lfloor x \rfloor) dx \\
        &\ge f(a) + f(b) + \int_{a + 1}^c f(x - 1) dx \\
        &+ \int_{c}^{b} \left(f(x) + f(x - 1) - f(c)\right) dx \\
        &= f(a) + f(b) + \int_{a + 1}^{b} f(x - 1) dx + \int_{c}^{b} f(x) dx - \int_c^b  f(c) dx \\
        &\ge f(a) + f(b) + \int_{a + 1}^{b} f(x - 1) dx + \int_{c}^{b} f(x) dx \\
        &+ \int_{b - 1}^c (f(x) - f(c)) dx - (b - c) f(c) \\
        &= f(a) + f(b) - f(c) + \int_{a}^{b} f(x) dx.\\
    \end{align*}

    \begin{figure}
        \caption{The illustration of the bounding integrals for a function which is increasing in interval $[a,c]$ and decreasing in interval $[c, b]$. In both plots the gray area is equal to the estimated sum. In the left plot the red area is an upper bound for the sum. The left part of the red line represents $f(x)$ in interval $[a, c]$, the central part shows the line $y = f(c)$ in interval $[c, c + 1]$ and the right part stands for $f(x - 1)$ in interval $[c + 1, b + 1]$. In the right plot the solid blue line stands for $\min\{f(x), f(x - 1)\}$ and the sum of blue areas is not a lower bound for the sum, unless we subtract the overlapped area in interval $[c, c + 1]$ together with the small areas above the solid blue line. This area is at most $f(c)$.}
        \label{fig:integral-with-max}
        \begin{center}
            \begin{tikzpicture}
                \begin{axis}[
                ylabel={$f(x)$},
                xlabel={$x$},
                ymin=0,
                xtick={1, 3.46, 4.46, 7},
                xticklabels={$a$, $c$, $c + 1$, $b + 1$},
                ymajorticks=false,
                width=0.5\textwidth]
                    \addplot[domain=1.01:7, samples=81, draw=red, ultra thick]{f5(x)};
                    % \addlegendentry{$f(x - 1)$};
                    \addplot[domain=1.01:7, samples=81, draw=none, fill=red, fill opacity=0.2]{f5(x)} \closedcycle;
                    \addplot[thick, dashed, draw=red] coordinates
                    {(1, 1) (1, 0)};
                    \addplot[thick, dashed, draw=red] coordinates
                    {(3.46, 6.45) (3.46, 0)};
                    \addplot[thick, dashed, draw=red] coordinates
                    {(4.46, 6.45) (4.46, 0)};
                    \addplot[thick, dashed, draw=red] coordinates
                    {(7, 4.54) (7, 0)};
                
                    \addplot[ultra thick] coordinates
                    {(1, 1) (2, 1)};
                    \addplot[draw=none, fill=black, fill opacity=0.2]coordinates
                    {(1, 1) (2, 1)} \closedcycle;
                    \addplot[ultra thick] coordinates
                    {(2, 2.82) (3, 2.82)};
                    \addplot[draw=none, fill=black, fill opacity=0.2]coordinates
                    {(2, 2.82) (3, 2.82)} \closedcycle;
                    \addplot[ultra thick] coordinates
                    {(3, 5.19) (4, 5.19)};
                    \addplot[draw=none, fill=black, fill opacity=0.2]coordinates
                    {(3, 5.19) (4, 5.19)} \closedcycle;
                    \addplot[ultra thick] coordinates
                    {(4, 6) (5, 6)};
                    \addplot[draw=none, fill=black, fill opacity=0.2]coordinates
                    {(4, 6) (5, 6)} \closedcycle;
                    \addplot[ultra thick] coordinates
                    {(5, 5.37) (6, 5.37)};
                    \addplot[draw=none, fill=black, fill opacity=0.2]coordinates
                    {(5, 5.37) (6, 5.37)} \closedcycle;
                    \addplot[ultra thick] coordinates
                    {(6, 4.9) (7, 4.9)};
                    \addplot[draw=none, fill=black, fill opacity=0.2]coordinates
                    {(6, 4.9) (7, 4.9)} \closedcycle;
                    \node [red, rotate=90] at (2.3, 1.4) {$\int\limits_{a}^{b} f(x) dx$};
                    \node [red, rotate = 90] at (4, 3) {$f(c)$}; 
                    \node [red, rotate = 90] at (5.8, 2.5) {$\int\limits_{c + 1}^{b + 1} f(x - 1) dx$};
                \end{axis}
            \end{tikzpicture}
            \begin{tikzpicture}
                \begin{axis}[
                ylabel={$f(x)$},
                xlabel={$x$},
                ymin=0,
                xtick={1, 2, 3.46, 4.46, 6, 7},
                xticklabels={$a$, $a + 1$, $c$, $c + 1$, $b$, $b + 1$},
                ymajorticks=false,
                width=0.5\textwidth]
                    \addplot[domain=2:6, samples=81, draw=blue, ultra thick]{f6(x)};
                    \addplot[domain=3.46:4.46, samples=21, draw=blue, dashed, ultra thick]{f7(x)};
                    % \addlegendentry{$f(x - 1)$};
                    \addplot[domain=2:6, samples=81, draw=none, fill=blue, fill opacity=0.2]{f6(x)} \closedcycle;
                    \addplot[domain=3.46:4.46, samples=21, draw=none, fill=blue, fill opacity=0.2]{f7(x)} \closedcycle;
                    \addplot[draw=none, fill=blue, fill opacity=0.2]coordinates
                    {(1, 1) (2, 1)} \closedcycle;
                    \addplot[draw=none, fill=blue, fill opacity=0.2]coordinates
                    {(6, 4.9) (7, 4.9)} \closedcycle;
                    \addplot[thick, dashed, draw=blue] coordinates
                    {(1, 1) (1, 0)};
                    \addplot[thick, dashed, draw=blue] coordinates
                    {(3.46, 6.45) (3.46, 0)};
                    \addplot[thick, dashed, draw=blue] coordinates
                    {(4.46, 6.45) (4.46, 0)};
                    \addplot[thick, dashed, draw=blue] coordinates
                    {(7, 4.54) (7, 0)};
                    \addplot[thick, dashed, draw=blue] coordinates
                    {(2, 1) (2, 0)};
                    \addplot[thick, dashed, draw=blue] coordinates
                    {(6, 4.9) (6, 0)};
                
                    \addplot[ultra thick] coordinates
                    {(1, 1) (2, 1)};
                    \addplot[draw=none, fill=black, fill opacity=0.2]coordinates
                    {(1, 1) (2, 1)} \closedcycle;
                    \addplot[ultra thick] coordinates
                    {(2, 2.82) (3, 2.82)};
                    \addplot[draw=none, fill=black, fill opacity=0.2]coordinates
                    {(2, 2.82) (3, 2.82)} \closedcycle;
                    \addplot[ultra thick] coordinates
                    {(3, 5.19) (4, 5.19)};
                    \addplot[draw=none, fill=black, fill opacity=0.2]coordinates
                    {(3, 5.19) (4, 5.19)} \closedcycle;
                    \addplot[ultra thick] coordinates
                    {(4, 6) (5, 6)};
                    \addplot[draw=none, fill=black, fill opacity=0.2]coordinates
                    {(4, 6) (5, 6)} \closedcycle;
                    \addplot[ultra thick] coordinates
                    {(5, 5.37) (6, 5.37)};
                    \addplot[draw=none, fill=black, fill opacity=0.2]coordinates
                    {(5, 5.37) (6, 5.37)} \closedcycle;
                    \addplot[ultra thick] coordinates
                    {(6, 4.9) (7, 4.9)};
                    \addplot[draw=none, fill=black, fill opacity=0.2]coordinates
                    {(6, 4.9) (7, 4.9)} \closedcycle;

                    \node [blue, rotate=70] at (3.5, 2) {\footnotesize{$\int\limits_{a+1}^{b+1} f(x - 1) dx$}};
                    \node [blue, rotate = 70] at (5, 2.5) {\footnotesize{$\int\limits_{c}^{b} f(x) dx$}}; 
                    \node [blue] at (1.5, 0.5) {\footnotesize{$f(a)$}};
                    \node [blue] at (6.5, 2.5) {\footnotesize{$f(b)$}}; 
                \end{axis}
            \end{tikzpicture}
        \end{center}
    \end{figure}

\end{proof}

\begin{lemma}\label{lem:part-sum}
    Let $a$ and $b$ be some positive integers such that $a \le b$. Then for all $\beta \ne 1$ we have
    \begin{align*}
        \sum_{i = a}^b i^{-\beta} = \frac{b^{1 - \beta} - a^{1 - \beta}}{1 - \beta} + \delta,
    \end{align*}
    where $\delta$ satisfies
    \begin{align*}
        \min\{a^{-\beta}, b^{-\beta}\} \le \delta \le \max\{a^{-\beta}, b^{-\beta}\}.
    \end{align*}
    If $\beta = 1$, then we have 
    \begin{align*}
        \sum_{i = a}^b i^{-\beta} = \ln\left(\frac{b}{a}\right) + \delta,
    \end{align*}
    with $\delta \in [\frac{1}{b}, \frac{1}{a}]$.
\end{lemma}

\begin{proof}
    For $\beta < 0$ function $f(x) = x^{-\beta}$ is a increasing function and for $\beta \ge 0$ this is a non-increasing function. Hence, we can use the first two statements of Lemma~\ref{lem:sum-int} to estimate the sum. For $\beta \ne 1$ we have
    \begin{align*}
        \int_a^b x^{-\beta} dx = \frac{b^{1 - \beta} - a^{1 - \beta}}{1 - \beta}.
    \end{align*}
    Therefore, for $\beta < 0$ we have
    \begin{align*}
        a^{-\beta} + \frac{b^{1 - \beta} - a^{1 - \beta}}{1 - \beta} \le \sum_{i = a}^b i^{-\beta} \le \frac{b^{1 - \beta} - a^{1 - \beta}}{1 - \beta} + b^{-\beta}.
    \end{align*}
    For $\beta \ge 0$ (except for $\beta = 1$) we have
    \begin{align*}
        \frac{b^{1 - \beta} - a^{1 - \beta}}{1 - \beta} + b^{-\beta} \le \sum_{i = a}^b i^{-\beta} \le a^{-\beta} + \frac{b^{1 - \beta} - a^{1 - \beta}}{1 - \beta}.
    \end{align*}
    For $\beta = 1$ we have
    \begin{align*}
        \int_a^b x^{-\beta} dx = \ln(b) - \ln(a) = \ln\left(\frac{b}{a}\right),
    \end{align*}
    hence,
    \begin{align*}
        \ln\left(\frac{b}{a}\right) + \frac{1}{b} \le \sum_{i = a}^b i^{-\beta} \le \frac{1}{a} + \ln\left(\frac{b}{a}\right)
    \end{align*}
    % We estimate the partial sums with corresponding integrals. If $\beta < 0$, then the sequence $i^{-\beta}$ is increasing with growth of $i$ (see Figure~\ref{fig:integral-increasing}). Hence, we have
    % \begin{align*}
    %     \sum_{i = a}^b i^{-\beta} = \int_a^{b + 1} \lfloor x\rfloor^{-\beta}  dx \le \int_a^b x^{-\beta} dx + b^{-\beta} = \frac{b^{1 - \beta} - a^{1 - \beta}}{1 - \beta} + b^{-\beta}.
    % \end{align*}
    % We also have 
    % \begin{align*}
    %     \sum_{i = a}^b i^{-\beta} = \int_a^{b + 1} \lfloor x\rfloor^{-\beta}dx \ge a^{-\beta}\int_{a + 1}^{b + 1} (x - 1)^{-\beta} dx = \frac{b^{1 - \beta} - a^{1 - \beta}}{1 - \beta} + a^{-\beta}.
    % \end{align*}

    % If $\beta \ge 0$, then the sequence $i^{-\beta}$ is non-increasing (see Figure~\ref{fig:integral-decreasing}). Hence, we have
    % \begin{align*}
    %     \sum_{i = a}^b i^{-\beta} &= \int_a^{b + 1} \lfloor x\rfloor^{-\beta}dx \le a^{-\beta}\int_{a + 1}^{b + 1} (x - 1)^{-\beta} dx \\
    %     &= \begin{cases}
    %         \frac{b^{1 - \beta} - a^{1 - \beta}}{1 - \beta} + a^{-\beta}, &\text{ if } \beta \ne 1,\\
    %         \ln\left(\frac{b}{a}\right) + \frac{1}{a}, &\text{ if } \beta = 1.
    %     \end{cases} 
    % \end{align*}
    % We also have
    % \begin{align*}
    %     \sum_{i = a}^b i^{-\beta} &= \int_a^{b + 1} \lfloor x\rfloor^{-\beta}  dx \ge \int_a^b x^{-\beta} dx + b^{-\beta} \\
    %     &= \begin{cases}
    %         \frac{b^{1 - \beta} - a^{1 - \beta}}{1 - \beta} + b^{-\beta}, &\text{ if } \beta \ne 1,\\
    %         \ln\left(\frac{b}{a}\right) + \frac{1}{b}, &\text{ if } \beta = 1.
    %     \end{cases} 
    % \end{align*}

    We complete the proof by noting that 
    \begin{align*}
        \max\{a^{-\beta}, b^{-\beta}\} = \begin{cases}
            a^{-\beta}, \text{ if } \beta \ge 0, \\
            b^{-\beta}, \text{ if } \beta < 0,
        \end{cases}
    \end{align*}
    and 
    \begin{align*}
        \min\{a^{-\beta}, b^{-\beta}\} = \begin{cases}
            b^{-\beta}, \text{ if } \beta \ge 0, \\
            a^{-\beta}, \text{ if } \beta < 0,
        \end{cases}
    \end{align*}
\end{proof}

The following lemma will help us to simplify the estimates given by Lemmas~\ref{lem:sum-int} and~\ref{lem:part-sum}.

\begin{lemma}\label{lem:change-power}
    For all $x \ge 1$, all $y > x$ and all $\beta \ge 0$ such that $\beta \ne 1$ we have
    \begin{align*}
        x^{-\beta} - y^{-\beta} \le \frac{\beta}{\beta - 1}(x^{1 - \beta} - y^{1 - \beta}).
    \end{align*}
\end{lemma}
\begin{proof}
    For $\beta = 0$ both sides of the inequality are equal to zero. For $\beta > 0$ we have
    \begin{align*}
        x^{-\beta} - y^{-\beta} = \beta \int_x^y t^{-1-\beta} dt \le \beta \int_x^y t^{-\beta} dt = \frac{\beta}{\beta - 1}(x^{1 - \beta} - y^{1 - \beta}).
    \end{align*}
\end{proof}

We also use Wald's equation~\cite{Wald45} in our analysis. In particular, it helps us estimate the expected number of the fitness evaluation which the algorithm performs when we know the expected number of iterations and the expected number of fitness evaluations per iteration.

\begin{lemma}[Wald's equation]\label{lem:wald}
	Let $(X_t)_{t \in \N}$ be a sequence of real-valued random variables and let $T$ be a positive integer random variable. Let also all following conditions be true.
	\begin{enumerate}
		\item All $X_t$ have the same finite expectation.
		\item For all $t \in \N$ we have $E[X_t \mathds{1}_{\{T \ge t\}}] = E[X_t] \Pr[T \ge t]$.
		\item $\sum_{t = 1}^{+\infty} E[|X_t| \mathds{1}_{\{T \ge t\}}] < \infty$.
		\item $E[T]$ is finite.
	\end{enumerate}
	Then we have
	\[
		E\left[\sum_{t = 1}^{T} X_t\right] = E[T]E[X_1].	
	\]
\end{lemma}

In our black-box complexity analysis, we use the following combinatorial result~\cite[p.41]{koepf-vandermonde}:

\begin{lemma}[Chu-Vandermonde identity]\label{lem:vandermonde}
The following holds for all $m \le n$ and all $k \le \min\{m, n-m\}$.
\begin{equation*}
\sum_{j=0}^{k} \binom{m}{j} \binom{n-m}{k-j} = \binom{n}{k}.
\end{equation*}
\end{lemma}

In particular, we are interested in the following corollary.
\begin{lemma}\label{lem:vm-corollary}
The following holds for all $m \le n$ and all $x \le \min\{m, n-m\}$.
\begin{equation*}
    \binom{m}{x} \binom{n-m}{x} \le \binom{n}{m}.
\end{equation*}
\end{lemma}
\begin{proof}
    Follows from Lemma~\ref{lem:vandermonde} by considering $k=m$, using $\binom{m}{x} = \binom{m}{m-x}$
    and noting that the left-hand side is just one of the summands.
\end{proof}

We also use the following classic result for the lower bounds.

\begin{lemma}[\cite{DrosteJW06}, Theorem~2]\label{lem:jansen}
Let $S$ be the search space of an optimization problem. If for each $s \in S$ there is an instance
such that $s$ is the unique optimum and if each query has at most $k \ge 2$ possible answers,
then the black-box complexity is bounded below by $\lceil \log_k |S|\rceil - 1$.
\end{lemma}

Note that the proof of this result allows only a part $S$ of the (original) search space $S^{(0)}$
to be used, provided that the optimum is contained in $S$. Indeed, for the lower bound the queries
outside $S$ may be safely ignored, even if such queries are required for the problem to be solved.

\section{Runtime Analysis}
\label{sec:runtimes}

In this section we conduct a rigorous runtime analysis for the different variants of the \ollga and prove upper bounds on their runtime when they start in distance $D$ from the optimum. We start with the standard algorithm with static parameters.

\begin{theorem}
    \label{thm:fixed}
    The expected runtime of the \ollga with static parameter~$\lambda$ (and mutation rate $p = \frac \lambda n$ and crossover bias $c = \frac 1\lambda$ as recommended in~\cite{DoerrDE15}) on \onemax with initialization in distance $D$ from the optimum is 
    \[
        E[T_F] = O\left(\frac{n}{\lambda} \ln\left(\frac{n}{\lambda^2}\right) + D\lambda \right)    
    \]
    fitness evaluations. This is minimized by $\lambda = \sqrt{\frac{n\ln(D)}{D}}$, which gives a runtime guarantee of $E[T_F] = O(\sqrt{nD\ln(D)}\,)$.
\end{theorem}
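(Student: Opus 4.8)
The plan is to use the fitness level method together with the progress probability from Lemma~\ref{lem:progress}. Since the \ollea accepts $y$ only when $f(y)\ge f(x)$, the fitness is non-decreasing, so the distance $d$ of the current individual to the optimum never increases and each distance level is visited at most once. First I would note that one iteration uses exactly $2\lambda$ fitness evaluations ($\lambda$ in the mutation phase and $\lambda$ in the crossover phase), so it suffices to bound the expected number of iterations $E[T_I]$ and multiply by $2\lambda$.

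At distance $d$, Lemma~\ref{lem:progress} guarantees a strict improvement ($\om(y)>\om(x)$) with probability $p_d = \Omega(\min\{1, d\lambda^2/n\})$, so the waiting time to leave level $d$ is geometrically distributed with expectation $1/p_d = O(\max\{1, n/(d\lambda^2)\})$. Summing over all levels from $D$ down to $1$ (pessimistic, since a successful step may decrease the distance by more than one) gives
\[
    E[T_I] \le \sum_{d=1}^{D} O\!\left(\max\left\{1, \frac{n}{d\lambda^2}\right\}\right).
\]

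The central computation is to split this sum at the threshold $d_0 = n/\lambda^2$. For $d \ge d_0$ the maximum equals $1$, contributing at most $O(D)$ in total; for $d < d_0$ the term is $\frac{n}{d\lambda^2}$, and summing this harmonic-type series up to $\min\{D, d_0\}$ yields $\frac{n}{\lambda^2}\,O(\ln(n/\lambda^2))$, using $\ln(\min\{D,d_0\}) \le \ln d_0 = \ln(n/\lambda^2)$. Hence $E[T_I] = O(D + \frac{n}{\lambda^2}\ln(n/\lambda^2))$, and multiplying by $2\lambda$ gives the claimed $E[T_F] = O(\frac{n}{\lambda}\ln(n/\lambda^2) + D\lambda)$.

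For the optimal parameter I would substitute $\lambda = \sqrt{n\ln(D)/D}$ directly. Then $D\lambda = \sqrt{nD\ln D}$ and $n/\lambda = \sqrt{nD/\ln D}$, while $n/\lambda^2 = D/\ln D$, so $\ln(n/\lambda^2) = \ln D - \ln\ln D \le \ln D$; thus $\frac{n}{\lambda}\ln(n/\lambda^2) \le \sqrt{nD/\ln D}\cdot \ln D = \sqrt{nD\ln D}$, and both terms are $O(\sqrt{nD\ln D})$. To confirm that this is, up to lower-order terms, the minimizer, one differentiates $f(\lambda) = \frac{n}{\lambda}\ln(n/\lambda^2) + D\lambda$ and sets $f'(\lambda)=0$, which reduces to $\frac{n}{\lambda^2}(\ln(n/\lambda^2) + 2) = D$; writing $u = n/\lambda^2$ this is $u(\ln u + 2) = D$, whose leading-order solution $u = D/\ln D$ gives $\lambda = \sqrt{n\ln(D)/D}$. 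The main obstacle is precisely this optimization: the logarithmic factor makes the stationarity condition transcendental, so $\sqrt{n\ln(D)/D}$ is only asymptotically the exact minimizer, and one must verify that the lower-order discrepancy (the $\ln\ln D$ terms) does not spoil the final $O(\sqrt{nD\ln D})$ guarantee---which the substitution above confirms.
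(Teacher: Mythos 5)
Your proposal is correct and follows essentially the same route as the paper's proof: bound $E[T_I]$ by summing the reciprocal progress probabilities from Lemma~\ref{lem:progress} over the distance levels, split the sum at $d_0 = n/\lambda^2$ into a harmonic part and a constant part, multiply by the $2\lambda$ evaluations per iteration, and substitute $\lambda = \sqrt{n\ln(D)/D}$. Your additional verification via differentiation that this $\lambda$ is (asymptotically) the stationary point is a nice touch the paper omits---it only balances the two terms informally---but it does not change the argument.
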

\begin{proof}
    By Lemma~\ref{lem:progress} the probability $P_d$ to have progress in one iteration is $\Omega(\min\{1, \frac{d\lambda^2}{n}\})$. Therefore, the expected number of iterations until the \ollga finds the optimum is
    \begin{align*}
        E[T_I] &\le \sum_{d = 1}^D P_d^{-1} = O\left(\sum_{d = 1}^{n/\lambda^2} \frac{n}{d\lambda^2} + \sum_{d = n/\lambda^2 + 1}^{D} 1 \right) \\
        &= O\left(\frac{n}{\lambda^2} \ln\left(\frac{n}{\lambda^2}\right) + D \right).
    \end{align*}

    Since in each iteration the \ollga performs $2\lambda$ fitness evaluations ($\lambda$ in the mutation phase and $\lambda$ in the crossover phase), the expected number of fitness evaluations is by a factor of $2\lambda$ larger:
    \begin{align}
        \label{eq:static_runtime}
        E[T_F] = O\left(\frac{n}{\lambda} \ln\left(\frac{n}{\lambda^2}\right) + D\lambda \right).
    \end{align}

    The term $\frac{n}{\lambda} \ln(\frac{n}{\lambda^2})$ is decreasing in $\lambda$ and the term $D\lambda$ is increasing. Therefore, the runtime is minimized when these two terms are asymptotically equal, that is, when $\lambda = \sqrt{\frac{n\ln(D)}{D}}$. By putting this value for $\lambda$ into~\eqref{eq:static_runtime} we obtain $E[T_F] = O(\sqrt{nD\ln(D)}\,)$.\qed
\end{proof}

We move on to the \ollga with optimal fitness-dependent parameters. %B:not clear. Although this algorithm is too tailored for the \onemax function, its runtime serves as a lower bound on the runtime for other algorithms.

\begin{theorem}
    \label{thm:fitness-dependent}
    The expected runtime of the \ollga with fitness-dependent $\lambda = \lambda(d) = \sqrt{\frac{n}{d}}$ on \onemax with initialization in distance $D$ from the optimum is $E[T_F] = O(\sqrt{nD}).$
\end{theorem}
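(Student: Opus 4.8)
The plan is to follow exactly the same structure as the proof of Theorem~\ref{thm:fixed}, but now exploiting the fact that $\lambda$ is tuned to the current distance $d$. The key observation is that with the fitness-dependent choice $\lambda(d) = \sqrt{n/d}$, we always have $\lambda^2 = n/d$, so the quantity $\frac{d\lambda^2}{n}$ appearing in Lemma~\ref{lem:progress} equals exactly~$1$. Hence the progress probability $P_d = \Omega(\min\{1, \frac{d\lambda^2}{n}\}) = \Omega(1)$ in \emph{every} iteration, regardless of $d$. This is the heart of why the fitness-dependent version is so much better: it keeps the per-iteration success probability bounded below by a constant throughout the whole run, rather than letting it degrade as $d$ shrinks.

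Given this, I would first bound the expected number of iterations. Since $P_d = \Omega(1)$ for all $d \in [1..D]$, the expected number of iterations spent at distance $d$ is $P_d^{-1} = O(1)$, so summing over the at most $D$ distinct distance values the algorithm passes through gives
\begin{align*}
    E[T_I] \le \sum_{d = 1}^{D} P_d^{-1} = O(D).
\end{align*}
(Here I am using, as in the previous proof, that the distance is non-increasing and that progress strictly decreases $d$, so the algorithm visits each distance level at most as part of this telescoping sum.)

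The second step converts iterations to fitness evaluations, and this is where the fitness-dependent $\lambda$ enters in a more delicate way than in the static case. Each iteration at distance $d$ costs $2\lambda(d) = 2\sqrt{n/d}$ fitness evaluations, so the per-iteration cost is no longer uniform. I would therefore not simply multiply $E[T_I]$ by a fixed $\lambda$, but instead sum the expected cost level by level: the expected number of evaluations spent while the distance equals $d$ is $O(P_d^{-1} \cdot \lambda(d)) = O(\sqrt{n/d})$, and hence
\begin{align*}
    E[T_F] = O\left(\sum_{d = 1}^{D} \sqrt{\frac{n}{d}}\right) = O\left(\sqrt{n} \sum_{d=1}^{D} d^{-1/2}\right) = O\left(\sqrt{n} \cdot \sqrt{D}\right) = O(\sqrt{nD}),
\end{align*}
where I used the standard estimate $\sum_{d=1}^{D} d^{-1/2} = \Theta(\sqrt{D})$ (comparing the sum to the integral $\int_0^D x^{-1/2}\,dx = 2\sqrt{D}$).

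The main obstacle, such as it is, lies in making the level-wise cost accounting rigorous. The progress probability $\Omega(1)$ means the algorithm may overshoot, moving from distance $d$ to some $d' < d-1$ in a single step and thus skipping levels; I need to argue this only helps and that charging $O(\sqrt{n/d})$ evaluations to each distance level actually reached is a valid upper bound. A clean way to phrase this is via a drift- or waiting-time decomposition: the expected total cost is at most $\sum_{d=1}^D (\text{expected evaluations per iteration at distance } d) \cdot (\text{expected number of iterations at distance } d)$, and since both factors are controlled ($O(\sqrt{n/d})$ and $O(1)$ respectively) the bound follows. One should also confirm that when $d$ is very small the choice $\lambda = \sqrt{n/d}$ does not exceed reasonable bounds in a way that breaks Lemma~\ref{lem:progress}'s hypotheses, but since $\lambda \le \sqrt{n}$ throughout and the min in the progress bound caps the relevant quantity at $1$, no difficulty arises there.
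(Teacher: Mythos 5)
Your proposal is correct and follows essentially the same route as the paper's own proof: Lemma~\ref{lem:progress} gives a constant per-iteration success probability (since $d\lambda^2/n = 1$ under the fitness-dependent choice), each improvement then costs $O(\lambda(d)) = O(\sqrt{n/d})$ evaluations in expectation, and pessimistically assuming the distance decreases by at most one per improvement yields $E[T_F] = O\bigl(\sum_{d=1}^{D}\sqrt{n/d}\,\bigr) = O(\sqrt{nD})$. The extra details you supply (the exact value of $d\lambda^2/n$, the integral comparison, and the remark that overshooting distance levels only helps) are exactly the steps the paper leaves implicit.
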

\begin{proof}
By Lemma~\ref{lem:progress}, the probability of having a true progress in one iteration is constant. The cost of one iteration, however, is a non-constant $2\lambda$. Thus the expected runtime for a fitness increase is $O(\lambda)$. Pessimistically assuming that we do not increase the fitness by more than one, we obtain 
\begin{align*}
    E[T_F] = O\left(\sum_{d = 1}^{D} \sqrt{\frac{n}{d}}\right) = O(\sqrt{nD}).
\end{align*}
\qed
\end{proof}

The one-fifth rule was shown to be to keep the value of $\lambda$ close to its optimal fitness-dependent value, when starting in the random bit string. The algorithm is initialized with $\lambda =2$, which is close-to-optimal when starting in a random bit string. In the following theorem we show that even when we start in a smaller distance $D$, the one-fifth rule is capable to quickly increase $\lambda$ to its optimal value and keep it there.

\begin{theorem}
    \label{thm:adaptive}
    The expected runtime of the \ollga with self-adjusting $\lambda$ (according to the one-fifth rule) on \onemax with initialization in distance $D$ from the optimum is $E[T_F] = O(\sqrt{nD}).$
\end{theorem}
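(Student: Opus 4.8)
The plan is to show that the self-adjusting \ollea spends only a negligible amount of extra time compared to the fitness-dependent version analyzed in Theorem~\ref{thm:fitness-dependent}. The key insight is that the one-fifth rule keeps $\lambda$ close to its optimal fitness-dependent value $\lambda^*(d) = \sqrt{n/d}$, so that the per-iteration progress probability remains $\Omega(1)$ and the cost per fitness level stays $O(\sqrt{n/d})$, yielding the same $O(\sqrt{nD})$ bound as before. Concretely, I would analyze a potential (or drift) argument on a combined quantity measuring both the fitness distance $d$ and the discrepancy between the current $\lambda$ and its target $\lambda^*(d)$.

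First I would set up the basic accounting. The natural reference value is $\lambda^*(d) = \sqrt{n/d}$, at which Lemma~\ref{lem:progress} guarantees a constant success probability. The one-fifth rule increases $\lambda$ by a factor $A$ on failure and decreases it by $A^4$ on success, so in the regime where roughly one fifth of iterations succeed, $\lambda$ drifts towards the value where this success frequency is attained, which one argues corresponds (up to constants) to $\lambda^*(d)$. I would quote or adapt the analysis from~\cite{DoerrD18}, which established exactly this tracking property for the random-start case. The main thing to verify is that this tracking behavior is a local property: it holds on each fitness level independently of how the process was initialized, so it applies equally well when we start at distance $D$ rather than at a random point.

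Next I would handle the two phases of the process. The first concern is the \emph{initialization transient}: the algorithm starts with $\lambda = 2$, but the optimal value at distance $D$ is $\lambda^*(D) = \sqrt{n/D}$, which may be much larger than $2$ when $D$ is small. I would bound the number of iterations (and fitness evaluations) needed for the multiplicative one-fifth increase to raise $\lambda$ from $2$ up to $\Theta(\sqrt{n/D})$; since each failed iteration multiplies $\lambda$ by $A$, only $O(\log(\sqrt{n/D})) = O(\log n)$ failures are needed, and the fitness-evaluation cost of this ramp-up is dominated by a geometric sum in $\lambda$, giving $O(\sqrt{n/D})$ evaluations, which is absorbed into the final bound. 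The second phase is the \emph{steady state}, where for each distance value $d$ the algorithm maintains $\lambda = \Theta(\lambda^*(d))$, spends $O(1)$ expected iterations to reduce $d$, each costing $O(\lambda) = O(\sqrt{n/d})$ evaluations, and summing over $d$ from $1$ to $D$ recovers $O(\sqrt{nD})$ exactly as in Theorem~\ref{thm:fitness-dependent}.

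The hard part will be controlling the interaction between the two moving quantities $d$ and $\lambda$ rigorously: when $d$ decreases, the target $\lambda^*(d)$ increases, so the self-adjusting $\lambda$ is perpetually chasing a moving target, and one must ensure that $\lambda$ never lags so far behind (or overshoots so far) that the success probability degrades and the per-level cost blows up. The clean way to do this is to show that $\lambda$ stays within a constant factor of $\lambda^*(d)$ with high probability throughout, and to argue that the rare excursions outside this band contribute only lower-order cost. I would most likely invoke the drift analysis of the coupled process from~\cite{DoerrD18} rather than redo it, checking only that the good-initialization setting changes nothing essential except the range of $d$ summed over (from $D$ down to $1$ instead of from $n/2$ down to $1$), which is precisely why the bound improves from $\Theta(n)$ to $O(\sqrt{nD})$.
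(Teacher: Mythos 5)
Your overall strategy is the same as the paper's: rely on the tracking behaviour of the one-fifth rule established in~\cite{DoerrD18} (the paper invokes Lemma~16 there), bound the ramp-up of $\lambda$ by a geometric sum, and then sum the per-level cost $O(\sqrt{n/d})$ over $d = 1,\dots,D$ exactly as in Theorem~\ref{thm:fitness-dependent}. The steady-state part of your argument matches the paper's proof essentially verbatim.

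There is, however, a genuine gap in your treatment of the transient. You bound its cost by $O(\sqrt{n/D})$, arguing that $O(\log n)$ failed iterations raise $\lambda$ from $2$ to $\Theta(\sqrt{n/D})$ and that the cost is a single geometric sum. This implicitly assumes the ramp-up consists only of failures and takes place entirely at the initial distance $D$. It need not: whenever an iteration during the chase is successful, the algorithm both decreases $d$ (which raises the target $\sqrt{n/d}$) and divides $\lambda$ by $A^4$ (which sets the chase back), so the pre-catch-up phase can stretch over many fitness levels. In particular, when $\lambda$ first reaches $\Theta(\sqrt{n/D})$ --- the end of your Phase~1 --- the current distance $d$ may already be much smaller than $D$ and $\lambda$ may still be far below $\sqrt{n/d}$, so the precondition of your Phase~2 (that $\lambda = \Theta(\lambda^*(d))$ is maintained) is not established; for the same reason, your claim that $\lambda$ stays within a constant factor of $\lambda^*(d)$ ``throughout'' cannot hold before catch-up. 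The paper closes exactly this hole with a cleaner device: let $d'$ be the \emph{first} distance at which $\lambda \ge \sqrt{n/d'}$ is reached. At every distance $d > d'$, catch-up has by definition not yet occurred, so the $\lambda$-values used at that distance form a geometric progression with ratio $A$ capped by $\sqrt{n/d}$, giving a per-level cost of $O(\sqrt{n/d})$ without any probabilistic argument; summing over $d' < d \le D$ gives $O(\sqrt{nD})$, which is already of the target order, and only from $d'$ onward is the tracking argument of~\cite{DoerrD18} needed. Replacing your single bound at level $D$ by this per-level geometric bound, valid at every level before catch-up, repairs the proposal; the rest of your argument then goes through.
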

\begin{proof}
    Let $d' \in [0..D]$ be the first distance at which the algorithm reached $\lambda \ge \sqrt{\frac{n}{d'}}$. At each distance $d > d'$ the algorithm did not manage to do so, hence it spent at most
    \begin{align*}
        T_F(d) = \sum_{i = 1}^{\log_A(\sqrt{\frac{n}{d}})} A^i = O\left(\sqrt{\frac{n}{d}}\right) 
    \end{align*}  
    fitness evaluations at that distance. Repeating the arguments of Lemma~16 in~\cite{DoerrD18} we can show that the starting from distance $d'$, $\lambda$ will always be close to $\sqrt{\frac{n}{d}}$, and the probability to increase the fitness is $\Theta(1)$. Therefore, at each distance we spend at most $O(\sqrt{\frac{n}{d}})$ fitness evaluations, which by analogy with Theorem~\ref{thm:fitness-dependent} yields $E[T_F] = O(\sqrt{nD})$. \qed
\end{proof}

For the fast \ollga different parameters of the power-law distribution yield different runtimes. This is shown in Theorem~\ref{thm:fast}.

\begin{theorem}
    \label{thm:fast}
    Consider the fast \ollga with parameter $u < \sqrt{n}$.  
    The expected runtime of the fast \ollga on \onemax with initialization in distance $D$ from the optimum is as shown in Table~\ref{tbl:runtimes}.
\end{theorem}

\afterpage{
    \clearpage
    \begin{landscape}
        \begin{table}[htbp]
            \begin{center}
                \begin{tabular}{|m{1.5cm}|l|l|l|}
                    \hline
                    & & & \\[-5pt] 
                    $\beta$ & 
                    Runtime if $u < \sqrt{\frac{n}{D}}$ &
                    Runtime if $u \in[\sqrt{\frac{n}{D}}, \sqrt{n}]$&
                    Runtime if $u \ge \sqrt{n}$ \\[5pt] \hline
                    & & & \\[-5pt] 

                    $\le 1$ &
                    \multirow{3}{*}{$\frac{2(3 - \beta)^2}{C(2 - \beta)} \cdot \frac{n}{u}(\ln(D) + 1)$} &
                    $\frac{2(3 - \beta)^2}{C(2 - \beta)} \cdot \left(\frac{n}{u} \ln\left(\frac{n}{u^2}\right) + Du\right)$ &
                    $\frac{2(3 - \beta)^2}{C(2 - \beta)} \cdot Du$ \\[5pt] \cline{1-1}\cline{3-4}
                    & & & \\[-5pt] 

                    $(1, 2)$ &
                    &
                    $\frac{8(3 - \beta)}{C(2 - \beta)} \cdot \frac{n}{u} \left(\ln\left(\frac{n}{u^2}\right) + \frac{2}{3 - \beta}\left(u\sqrt{\frac{D}{n}}\right)^{3 - \beta}\right)$ &
                    $\frac{8}{C(2 - \beta)} \cdot \frac{n}{u}\left(u\sqrt{\frac{D}{n}}\right)^{3 - \beta}$ \\[5pt] \hline
                    & & & \\[-5pt] 

                    $2$ &
                    $\frac{2}{C} \cdot \frac{n(\ln(u) + 1)}{u} (\ln(D) + 1)$ &
                    $\frac{8}{C} (\ln(u) + 1)\left(\frac{n}{u} \ln\left(\frac{n}{u^2}\right) +  2\sqrt{nD}\right)$ &
                    $\frac{8}{C} (\ln(u) + 1)\sqrt{nD}$ \\[5pt] \hline
                    & & & \\[-5pt] 

                    $(2, 3)$ &
                    $\frac{2(\beta - 1)}{C(\beta -2)} \cdot \frac{n}{u^{3 - \beta}} (\ln(D) + 1)$ &
                    $\frac{8(\beta - 1)}{C(\beta - 2)} \cdot\frac{n}{u^{3 - \beta}} \left(\ln\left(\frac{n}{u^2}\right) + \frac{2}{3 - \beta}\left(u\sqrt{\frac{D}{n}}\right)^{3 - \beta}\right)$ &
                    $\frac{8(\beta - 1)}{C(\beta - 2)(3 - \beta)} \cdot\sqrt{n}^{\beta - 1}\sqrt{D}^{3 - \beta}$ \\[5pt] \hline
                    & & & \\[-5pt] 

                    $3$ and $D \le \frac{n}{e^2}$ &
                    \multirow{3}{*}{$\frac{2(\beta - 1)}{C(\beta -2)} \cdot \frac{n}{\ln(u + 1)} (\ln(D) + 1)$} &
                    $\frac{2(\beta - 1)}{C(\beta - 2)} \cdot \left(\frac{n}{\ln(u + 1)} \left(\ln\left(\frac{n}{u^2}\right) + 1\right) + 2n\ln\left(\frac{\ln(u)}{\ln(\sqrt{n/D})}\right)\right)$ &
                    $\frac{2(\beta - 1)}{C(\beta - 2)} \cdot\left( \frac{n}{\ln(\sqrt{n} + 1)} + 2n\ln\left(\frac{\ln(n)}{\ln(n/D)}\right) \right)$ \\[5pt] \cline{1-1}\cline{3-4}
                    & & & \\[-5pt] 

                    $3$ and $D > \frac{n}{e^2}$ &
                    &
                    $\frac{2(\beta - 1)}{C(\beta - 2)} \cdot \left(\frac{n}{\ln(u + 1)} \left(\ln\left(\frac{n}{u^2}\right) + 1\right) + 2n\ln\ln^+(u) + \frac{2n}{\ln(2)}\right)$ &
                    $\frac{6(\beta - 1)}{C(\beta - 2)} \cdot n(\ln\ln(n) + 1)$ \\[5pt] \hline
                    & \multicolumn{3}{|c|}{} \\[-5pt] 

                    $>3$ &
                    \multicolumn{3}{|c|}{$\frac{2(\beta - 1)}{C(\beta -2)} \cdot n (\ln(D) + 1)$}
                    \\[5pt] \hline
                \end{tabular}
            \end{center}
        \caption{The expected runtime of the fast \ollga with different parameters $\beta$ and $u$ on \onemax when it is initialized in distance $D$ from the global optimum. $C$ stands for the constant hidden in the $\Omega$ notation of Lemma~\ref{lem:progress}.}
        \label{tbl:runtimes}
        \label{tbl:runtime} %MB: this seems to be the right move, please check!
        \end{table} 
    \end{landscape}
    \clearpage
}

To prove Theorem~\ref{thm:fast} we first introduce auxiliary lemmas to estimate the probability to have a progress in one iteration depending on the current distance to the optimum and on the parameters of the power-law distribution and to estimate the expected cost of one iteration (also depending on the parameter of the power-law distribution). We start with the following lower bound on the probability of progress.

\begin{lemma}
Let $P_d$ \label{lem:pd}
be the probability that the fast \ollga improves fitness in one iteration when its current individual $x$ is in distance $d$ from the optimum. Then $P_d$ is at least as shown in Table~\ref{tbl:pd}.
\end{lemma}

\begin{table}
    \caption{Lower bounds on $P_d$ for different values of $\beta$ and $u$. In this table $C$ stands for a constant hidden in the $\Omega$ notation of Lemma~\ref{lem:progress} and $C_{\beta, u}$ is the normalization coefficient for the power-law distribution.}
	\label{tbl:pd}
	\begin{center}
		\begin{tabular}{|c||c|c|}
			\hline
            & & \\[-10pt]
            $\beta$ & 
            $P_d$ with $u \le \sqrt{\frac{n}{d}}$ & 
            $P_d$ with $u > \sqrt{\frac{n}{d}}$ \\[5pt] \hline
            & & \\[-10pt]
            $\le 1$ & 
            $\frac{CC_{\beta, u}}{3 - \beta} \cdot \frac{du^{3 - \beta}}{n}$ & 
            $\frac{CC_{\beta, u}}{3- \beta} \cdot u^{1 - \beta}$ \\[5pt] \hline
            & & \\[-10pt]
            $(1, 3)$ & 
            $\frac{CC_{\beta, u}}{2} \cdot \frac{du^{3 - \beta}}{n}$ & 
            $\frac{CC_{\beta, u}}{2} \cdot \sqrt{\frac{n}{d}}^{1 - \beta}$ \\[5pt] \hline
            & & \\[-10pt]
            $3$ & 
            $CC_{\beta, u} \cdot \frac{d\ln(u + 1)}{n}$ &
            $CC_{\beta, u} \cdot \frac{d}{n}\ln(\sqrt{\frac{n}{d}} + 1)$  \\[5pt] \hline
            & \multicolumn{2}{|c|}{} \\[-10pt]
            $>3$ & \multicolumn{2}{|c|}{$CC_{\beta, u} \cdot \frac{d}{n}$} \\[5pt] \hline
        \end{tabular}
	\end{center}
\end{table}

\begin{proof}
    Let $P_d(\lambda)$ be the probability to improve the fitness in one iteration conditional on that the distance between the current individual $x$ and the optimal bit string is $d$ and that the population size is fixed and equals to $\lambda$. By Lemma~\ref{lem:progress} this probability is $\Omega(\min\{1, \frac{d\lambda^2}{n}\})$. This implies that there exists a constant $C > 0$ such that for all $d \in [1..n]$ and all $\lambda \in [1..n]$ we have 
    \begin{align*}
        P_d(\lambda) \ge C \min\left\{1, \frac{d\lambda^2}{n}\right\}.
    \end{align*}
    Since the fast \ollga chooses population size $\lambda$ from the power-law distribution $\pow(\beta, u)$, by the law of total probability we compute
    \begin{align*}
        P_d &= \sum_{i = 1}^u \Pr[\lambda = i] P_d(i) \\
        &= \sum_{i = 1}^u C_{\beta, u} i^{-\beta} P_d(i) \\
        &\ge \sum_{i = 1}^u C_{\beta, u} i^{-\beta} C \min\left\{1, \frac{d i^2}{n}\right\}. \\
    \end{align*}
    We now consider two cases.
    
    \textbf{When $u \le \sqrt{\frac{n}{d}}$}, then we have
    \begin{align}\label{eq:pd-u-small}
        P_d &\ge C_{\beta, u} C \frac{d}{n} \sum_{i = 1}^u  i^{2 - \beta}. 
    \end{align}
    By Lemma~\ref{lem:part-sum} we estimate
    % \begin{align*}
    %     C_{\beta, u} = \left(\sum_{i = 1}^u i^{-\beta}\right)^{-1} \ge \begin{cases}
    %         \left(\frac{u^{1 - \beta} - 1}{1 - \beta} + \max\{1, u^{-\beta}\}\right)^{-1}, &\text{ if } \beta \ne 1, \\
    %         \left(\ln(u) + 1\right)^{-1}, &\text{ if } \beta = 1.
    %     \end{cases}
    % \end{align*}
    % Lemma~\ref{lem:part-sum} also implies that
    \begin{align*}
        \sum_{i = 1}^u i^{2 - \beta} \ge \begin{cases}
            \frac{u^{3 - \beta} - 1}{3 - \beta} + \min\{1, u^{2 - \beta}\}, &\text{ if } \beta \ne 3, \\
            \ln(u) + \frac{1}{u}, &\text{ if } \beta = 3.
        \end{cases}
    \end{align*}
    We now put these two estimates into~\eqref{eq:pd-u-small} for different values of $\beta$.
    
    \textbf{If} $\beta < 2$ , then we have
    % \begin{align}\label{eq:cbu-begative-beta}
    %     \begin{split}
    %         \left(C_{\beta, u}\right)^{-1} &\le \frac{u^{1 - \beta} - 1}{1 - \beta} + u^{-\beta} \le \frac{u^{1 - \beta} + (1 - \beta)u^{-\beta}}{1 - \beta} \\
    %         &= \frac{u^{1 - \beta}}{1 - \beta} \left(1 + \frac{1 - \beta}{u}\right) \le \frac{(2 - \beta)u^{1 - \beta}}{(1 - \beta)}.
    %     \end{split}
    % \end{align}
    % We also have 
    \begin{align*}
        \sum_{i = 1}^u i^{2 - \beta} \ge \frac{u^{3 - \beta} - 1}{3 - \beta} + 1 = \frac{u^{3 -\beta} + (2 - \beta)}{3 - \beta} \ge \frac{u^{3 - \beta}}{3 - \beta}.
    \end{align*}
    Hence, by~\eqref{eq:pd-u-small} we obtain
    % \begin{align*}
    %     P_d &\ge C \cdot \frac{d}{n} \cdot \frac{\left(\frac{u^{3 - \beta}}{3 - \beta}\right)}{\left(\frac{(2 - \beta)u^{1 - \beta}}{(1 - \beta)}\right)} = C \cdot \frac{d}{n} \cdot \frac{(1 - \beta)u^2}{(3 - \beta)(2 - \beta)}. 
    % \end{align*}
    \begin{align*}
        P_d &\ge C C_{\beta, u} \cdot \frac{d}{n} \cdot \frac{u^{3 - \beta}}{(3 - \beta)}. 
    \end{align*}
    \textbf{For} $\beta \in (1, 2)$ the following weaker (by a constant factor of $\frac{2}{3 - \beta} \le 2$) lower bound also holds. This bound turns to be more convenient in our further analysis.
    \begin{align*}
        P_d &\ge C C_{\beta, u} \cdot \frac{d}{n} \cdot \frac{u^{3 - \beta}}{4}. 
    \end{align*}
    % \textbf{If} $\beta \in [0, 1)$, then we have
    % \begin{align*}
    %     P_d &\ge C \cdot \frac{d}{n} \cdot \frac{\frac{u^{3 - \beta}  - 1}{3 - \beta} + 1}{\frac{u^{1 - \beta} - 1}{1 - \beta} + 1} 
    %     = C \cdot \frac{d}{n} \cdot \frac{(1 - \beta)u^2}{3 - \beta} \cdot \frac{u^{1 - \beta} + \frac{2 - \beta}{u^2}}{u^{1 - \beta} - \beta} \\
    %     &\ge C \cdot \frac{d}{n} \cdot \frac{(1 - \beta)u^2}{3 - \beta}. 
    % \end{align*}
    % \textbf{If} $\beta = 1$, then we have
    % \begin{align*}
    %     P_d &\ge C \cdot \frac{d}{n} \cdot \frac{\frac{u^2  - 1}{2} + 1}{\ln(u) + 1} \ge C \cdot \frac{d}{n} \cdot \frac{u^2}{2(\ln(u) + 1)}.
    % \end{align*}
    % \textbf{If} $\beta \in (1, 2)$, then we have
    % \begin{align*}
    %     P_d &\ge C \cdot \frac{d}{n} \cdot \frac{\frac{u^{3 - \beta}  - 1}{3 - \beta} + 1}{\frac{1 - u^{1 - \beta}}{\beta - 1} + 1} 
    %     \ge C \cdot \frac{d}{n} \cdot \frac{1}{3 - \beta} \cdot \frac{u^{3 - \beta}  + 2 - \beta}{\frac{1}{\beta - 1} + 1} \\
    %     &\ge C \cdot \frac{d}{n} \cdot \frac{(\beta - 1)u^{3 - \beta}}{\beta(3 - \beta)}. 
    % \end{align*}
    \textbf{If} $\beta \in [2, 3)$, then we have 
    % \begin{align*}
    %     P_d &\ge C \cdot \frac{d}{n} \cdot \frac{\frac{u^{3 - \beta}  - 1}{3 - \beta} + u^{2 - \beta}}{\frac{1 - u^{1 - \beta}}{\beta - 1} + 1} 
    %     \ge C \cdot \frac{d}{n} \cdot \frac{1}{3 - \beta} \cdot \frac{u^{3 - \beta}  - 1 + (3 - \beta)u^{2 - \beta}}{\frac{1}{\beta - 1} + 1} \\
    %     &= C \cdot \frac{d}{n} \cdot \frac{(\beta - 1)u^{3 - \beta}}{\beta(3 - \beta)} \left(1 - \frac{1}{u^{3 - \beta}} + \frac{3 - \beta}{u}\right) 
    % \end{align*}
    \begin{align*}
        P_d &\ge CC_{\beta, u} \cdot \frac{d}{n} \cdot \left(\frac{u^{3 - \beta}  - 1}{3 - \beta} + u^{2 - \beta}\right) \\
        &= CC_{\beta, u} \cdot \frac{d}{n} \cdot \frac{u^{3 - \beta}}{(3 - \beta)} \left(1 - \frac{1}{u^{3 - \beta}} + \frac{3 - \beta}{u}\right).
    \end{align*}
    We note that $(1 - \frac{1}{u^{3 - \beta}} + \frac{3 - \beta}{u})$ is an increasing function of $u$ for $u \ge 1$, which can be shown through considering its derivative as follows.
    \begin{align*}
        \left(1 - \frac{1}{u^{3 - \beta}} + \frac{3 - \beta}{u}\right)' &= (3 - \beta) \frac{1}{u^{4 - \beta}} - (3 - \beta)\frac{1}{u^2} = (3 - \beta)\frac{(1 - u^{2 - \beta})}{u^{4 - \beta}} \ge 0.
    \end{align*}
    Hence, we have
    \begin{align*}
        \left(1 - \frac{1}{u^{3 - \beta}} + \frac{3 - \beta}{u}\right) \ge \left(1 - \frac{1}{1^{3 - \beta}} + \frac{3 - \beta}{1}\right) = 3 - \beta.
    \end{align*}
    Therefore, we compute
    % \begin{align*}
    %     P_d &\ge C \cdot \frac{d}{n} \cdot \frac{(\beta - 1)u^{3 - \beta}}{\beta(3 - \beta)} \cdot (3 - \beta) = C \cdot \frac{d}{n} \cdot \frac{(\beta - 1)u^{3 - \beta}}{\beta}.
    % \end{align*}
    \begin{align*}
        P_d &\ge CC_{\beta, u} \cdot \frac{d}{n} \cdot \frac{u^{3 - \beta}}{(3 - \beta)} (3 - \beta) = CC_{\beta, u} \cdot \frac{du^{3 - \beta}}{n} \ge  \frac{CC_{\beta, u}}{2} \cdot \frac{du^{3 - \beta}}{n},
    \end{align*}
    where in the last inequality we decreased the bound by a factor of $2$ in order to simplify our further calculations.

    \textbf{If} $\beta = 3$, then by a well-known inequality $x \ge \ln(1 + x)$, which holds for all $x > -1$, we have
    % \begin{align*}
    %     P_d &\ge C \cdot \frac{d}{n} \cdot \frac{\ln(u) + \frac{1}{u}}{\frac{1 - u^{1 - \beta}}{\beta - 1} + 1} 
    %     \ge C \cdot \frac{d}{n} \cdot \frac{\beta - 1}{\beta} \cdot \left(\ln(u) + \ln\left(1 + \frac{1}{u}\right)\right) \\
    %     &= C \cdot \frac{d}{n} \cdot \frac{\beta - 1}{\beta} \cdot \ln(u + 1).
    % \end{align*}
    \begin{align*}
        P_d &\ge CC_{\beta, u} \cdot \frac{d}{n} \cdot \left(\ln(u) + \frac{1}{u}\right) 
        \ge CC_{\beta, u} \cdot \frac{d}{n} \cdot \left(\ln(u) + \ln\left(1 + \frac{1}{u}\right)\right)\\
        &= CC_{\beta, u} \cdot \frac{d\ln(u + 1)}{n}.
    \end{align*}
    \textbf{If} $\beta > 3$, then we have
    % \begin{align*}
    %     P_d &\ge C \cdot \frac{d}{n} \cdot \frac{\sum_{i = 1}^u i^{2 - \beta}}{\frac{1 - u^{1 - \beta}}{\beta - 1} + 1} 
    %     \ge C \cdot \frac{d}{n} \cdot \frac{\beta - 1}{\beta},
    % \end{align*}
    \begin{align*}
        P_d &\ge CC_{\beta, u} \cdot \frac{d}{n} \sum_{i = 1}^u i^{2 - \beta} \ge CC_{\beta, u} \cdot \frac{d}{n} \sum_{i = 1}^1 i^{2 - \beta} = CC_{\beta, u} \cdot \frac{d}{n}.
    \end{align*}

    \textbf{When $u > \sqrt{\frac{n}{d}}$}, then we have
    \begin{align}\label{eq:pd-u-large}
        P_d &\ge C_{\beta, u} C \sum_{i = 1}^u  f(i),
    \end{align}
    where $f$ is a function defined on $[1,u]$ as
    \begin{align*}
        f(x) = \min\left\{x^{-\beta}, \frac{dx^{2 - \beta}}{n}\right\} = x^{-\beta} \min\left\{1, \frac{dx^2}{n}\right\}.
    \end{align*}
    Note that $f$ has the following properties depending on parameter $\beta$. It is an increasing function if $\beta < 0$, it is increasing in $[1, \sqrt{\frac{n}{d}}]$ and non-increasing in $[\sqrt{\frac{n}{d}}, u]$, if $\beta \in [0, 2)$, and it is non-increasing, if $\beta \ge 2$. 
    
    Therefore, \textbf{if} $\beta < 0$, then by the first statement of Lemma~\ref{lem:sum-int} we have
    \begin{align*}
        \sum_{i = 1}^u f(i) &\ge f(1) + \int_1^u f(x) dx = \frac{d}{n} + \int_{1}^{\sqrt{\frac{n}{d}}} \frac{d}{n} x^{2 - \beta} dx + \int_{\sqrt{\frac{n}{d}}}^u x^{-\beta} dx \\
        &= \frac{d}{n} + \frac{d}{n} \cdot \frac{\sqrt{\frac{n}{d}}^{3 - \beta} - 1}{3 - \beta} + \frac{u^{1 - \beta} - \sqrt{\frac{n}{d}}^{1 - \beta}}{1 - \beta} \\
        &= \frac{d}{n} \left(1 - \frac{1}{3 - \beta}\right) - \sqrt{\frac{n}{d}}^{1 - \beta} \left(\frac{1}{1 - \beta} - \frac{1}{3 - \beta}\right) + \frac{u^{1 - \beta}}{1 - \beta}\\
        &\ge u^{1 - \beta} \left(\frac{1}{3 - \beta} - \frac{1}{1 - \beta}\right) + \frac{u^{1 - \beta}}{1 - \beta} = \frac{u^{1 - \beta}}{3 - \beta}
        % &\ge  - \frac{2\sqrt{\frac{n}{d}}^{1 - \beta}}{(3 - \beta)(1 - \beta)} + 1 - \frac{d}{n(3 - \beta)} \\
        % &\ge u^{1 - \beta} \left(\frac{1}{3 - \beta} - \frac{2}{(3 - \beta)(1 - \beta)}\right) = \frac{u^{1 - \beta}}{3 - \beta},
    \end{align*} 
    where in the last inequality we used $u \ge \sqrt{\frac{n}{d}}$ and $\frac{d}{n} (1 - \frac{1}{3 - \beta}) > 0$. By~\eqref{eq:pd-u-large}, we have
    % \begin{align*}
    %     P_d \ge C \cdot \frac{\left(\frac{u^{1 - \beta}}{3 - \beta}\right)}{\left(\frac{(2 - \beta)u^{1 - \beta}}{(1 - \beta)}\right)} = \frac{C(1 - \beta)}{(3 - \beta)(2 - \beta)}.
    % \end{align*}
    \begin{align*}
        P_d \ge CC_{\beta, u} \cdot \frac{u^{1 - \beta}}{3 - \beta}.
    \end{align*}

    \textbf{If} $\beta \in [0, 1)$, then by the third statement of Lemma~\ref{lem:sum-int} we have
    \begin{align}\label{eq:beta-02-1}
        \begin{split}
            \sum_{i = 1}^u f(i) &\ge f(1) + f(u) - f\left(\sqrt{\frac{n}{d}}\right) + \int_1^u f(x) dx \\
            &= \frac{d}{n} + u^{-\beta} - \sqrt{\frac{n}{d}}^{- \beta} + \int_1^{\sqrt{\frac{n}{d}}} \frac{d}{n} x^{2 - \beta} dx + \int_{\sqrt{\frac{n}{d}}}^u x^{-\beta} dx \\
            &= \frac{d}{n} + u^{-\beta} - \sqrt{\frac{n}{d}}^{- \beta} + \frac{d}{n} \cdot \frac{\sqrt{\frac{n}{d}}^{3 - \beta} - 1}{3 - \beta} + \frac{u^{1 - \beta} - \sqrt{\frac{n}{d}}^{1 - \beta}}{1 - \beta} \\
            &= \frac{d}{n} \left(1 - \frac{1}{3 - \beta}\right) - \left(\left(\sqrt{\frac{n}{d}}\right)^{-\beta} - u^{-\beta}\right) \\
            &+ \left(u^{1 - \beta} - \left(\sqrt{\frac{n}{d}}\right)^{1 - \beta}\right)\left(\frac{1}{1 - \beta} - \frac{1}{3 - \beta}\right) + \frac{u^{1 - \beta}}{3 - \beta}.
        \end{split}
    \end{align}
    By Lemma~\ref{lem:change-power} we have $(\sqrt{\frac{n}{d}})^{-\beta} - u^{-\beta} \le \frac{\beta}{\beta - 1}( (\sqrt{\frac{n}{d}})^{1-\beta} - u^{1-\beta})$, therefore we have
    \begin{align}\label{eq:beta-02-2}
        \begin{split}
            \sum_{i = 1}^u f(i) &\ge \frac{d}{n} \cdot \frac{(2 - \beta)}{(3 - \beta)} - \frac{\beta}{\beta - 1}\left(\left(\sqrt{\frac{n}{d}}\right)^{1-\beta} - u^{1-\beta}\right) \\
            &+ \left(u^{1 - \beta} - \left(\sqrt{\frac{n}{d}}\right)^{1 - \beta}\right)\left(\frac{1}{1 - \beta} - \frac{1}{3 - \beta}\right) + \frac{u^{1 - \beta}}{3 - \beta} \\
            &= \frac{d}{n} \cdot \frac{(2 - \beta)}{(3 - \beta)} + \frac{u^{1 - \beta}}{3 - \beta}  \\
            &+ \left(u^{1 - \beta} - \left(\sqrt{\frac{n}{d}}\right)^{1 - \beta}\right)\left(\frac{1}{1 - \beta} - \frac{1}{3 - \beta} - \frac{\beta}{1 - \beta}\right)\\
            &= \frac{d}{n} \cdot \frac{(2 - \beta)}{(3 - \beta)} + \frac{(2 - \beta)}{(3 - \beta)} \cdot \left(u^{1 - \beta} - \left(\sqrt{\frac{n}{d}}\right)^{1 - \beta}\right) + \frac{u^{1 - \beta}}{3 - \beta} \\
            &\ge \frac{u^{1 - \beta}}{3 - \beta},
        \end{split}
    \end{align}
    since $\frac{2 - \beta}{3 - \beta} > 0$ and $u^{1 - \beta} - (\sqrt{\frac{n}{d}})^{1 - \beta} > 0$. When we put this into~\eqref{eq:pd-u-large}, we obtain
    % \begin{align*}
    %     P_d \ge C \cdot \frac{\left(\frac{u^{1 - \beta}}{3 - \beta}\right)}{\left(\frac{u^{1 - \beta} - 1}{1- \beta} + 1\right)} = C \cdot \frac{1 - \beta}{3 - \beta} \cdot \frac{u^{1 - \beta}}{u^{1 - \beta} - \beta} \ge C \cdot \frac{1 - \beta}{3 - \beta}.
    % \end{align*}
    \begin{align*}
        P_d \ge CC_{\beta, u} \cdot \frac{u^{1 - \beta}}{3 - \beta}.
    \end{align*}

    \textbf{If} $\beta = 1$, we use a different way to estimate the sum and obtain
    \begin{align*}
        \sum_{i = 1}^u f(i) &\ge \sum_{i = 1}^{\lfloor \sqrt{\frac{n}{d}} \rfloor} \frac{id}{n} + \frac{1}{\lfloor\sqrt{\frac{n}{d}} \rfloor + 1} = \frac{d}{n} \cdot \frac{\lfloor \sqrt{\frac{n}{d}} \rfloor (\lfloor \sqrt{\frac{n}{d}} \rfloor + 1)}{2} + \frac{1}{\lfloor\sqrt{\frac{n}{d}} \rfloor + 1}\\
        &\ge \frac{d}{n} \cdot \frac{\sqrt{\frac{n}{d}} \left(\sqrt{\frac{n}{d}} - 1\right)}{2} + \frac{1}{\sqrt{\frac{n}{d}} + 1} \ge \frac{1}{2} - \frac{1}{2}\sqrt{\frac{d}{n}} +  \frac{1}{2}\sqrt{\frac{d}{n}} = \frac{1}{2}.
        % &\ge \frac{d}{n} \cdot \frac{\sqrt{\frac{n}{d}} \cdot \sqrt{\frac{n}{d}}}{4} = \frac{1}{4},
    \end{align*}
    Thus, by~\eqref{eq:pd-u-large} we have
    \begin{align*}
        P_d \ge \frac{CC_{\beta, u}}{2}.
    \end{align*}

    \textbf{If} $\beta \in (1, 2)$, then by the third statement of Lemma~\ref{lem:sum-int} we have
    \begin{align*}
        \sum_{i = 1}^u f(i) &\ge f(1) + f(u) - f\left(\sqrt{\frac{n}{d}}\right) + \int_1^u f(x) dx \\
        &= f(1) + \int_{\sqrt{\frac{n}{d}}}^u f'(x) dx + \int_1^{\sqrt{\frac{n}{d}}} f(x) dx + \int_{\sqrt{\frac{n}{d}}}^u f(x) dx \\
        &= \frac{d}{n} + \frac{d}{n} \cdot \frac{\sqrt{\frac{n}{d}}^{3 - \beta} - 1}{3 - \beta} + \int_{\sqrt{\frac{n}{d}}}^u (x^{-\beta} - \beta x^{-(\beta + 1)}) dx \\
        &= \frac{d}{n} \cdot \frac{(2 - \beta)}{(3 - \beta)} + \frac{\sqrt{\frac{n}{d}}^{1 - \beta}}{3 - \beta} + \int_{\sqrt{\frac{n}{d}}}^u (x^{-\beta} - \beta x^{-(\beta + 1)}) dx \\
        &\ge \frac{\sqrt{\frac{n}{d}}^{1 - \beta}}{3 - \beta} + \int_{\sqrt{\frac{n}{d}}}^u (x^{-\beta} - \beta x^{-(\beta + 1)}) dx.
    \end{align*}
    Note that if $x > \beta$, then the function in the integral is non-negative. Thus, if $\sqrt{\frac{n}{d}} \ge \beta$, then the integral is non-negative itself and then we have
    \begin{align*}
        \sum_{i = 1}^u f(i) &\ge\frac{\sqrt{\frac{n}{d}}^{1 - \beta}}{3 - \beta} \ge \frac{1}{2}\sqrt{\frac{n}{d}}^{1- \beta}.
    \end{align*}
    Otherwise, if $\sqrt{\frac{n}{d}} < \beta$, then we have
    \begin{align*}
        \sum_{i = 1}^u f(i) \ge f(1) + f(2),
    \end{align*}
    since we consider the case $u > \sqrt{\frac{n}{d}} \ge 1$ and $u$ is integer, hence $u \ge 2$. We have $f(2) = 2^{-\beta} \min\{1, \frac{4d}{n}\} = 2^{-\beta}$, since $\frac{d}{n} > \frac{1}{\beta^2} \ge \frac{1}{4}$. Thus, we have
    \begin{align*}
        \sum_{i = 1}^u f(i) &\ge f(1) + f(2) = \frac{d}{n} + 2^{-\beta}= \sqrt{\frac{n}{d}}^{1- \beta} \left(\sqrt{\frac{n}{d}}^{\beta - 3} + 2^{-\beta}{\frac{n}{d}}^{1- \beta} \right) \\
        &\ge \left(\beta^{\beta - 3} + 2^{-\beta}\right) \sqrt{\frac{n}{d}}^{1- \beta} \ge \left(\frac{1}{4} + \frac{1}{4}\right) \sqrt{\frac{n}{d}}^{1- \beta} = \frac{1}{2}\sqrt{\frac{n}{d}}^{1- \beta}.
    \end{align*}
    Hence, in both cases we have
    \begin{align*}
        P_d \ge CC_{\beta, u} \cdot \frac{1}{2}\sqrt{\frac{n}{d}}^{1 - \beta}.
    \end{align*}
    
    \textbf{If} $\beta \in [2, 3)$, then by the second statement of Lemma~\ref{lem:sum-int} we have
    \begin{align*}
        \sum_{i = 1}^u f(i) &\ge \int_1^{\sqrt{\frac{n}{d}}} \frac{d}{n} x^{2 - \beta} dx + \int_{\sqrt{\frac{n}{d}}}^u x^{-\beta} dx + u^{-\beta} \\
        &\ge \int_1^{\sqrt{\frac{n}{d}}} \frac{d}{n} x^{2 - \beta} dx = \frac{d}{n} \cdot \frac{\left(\sqrt{\frac{n}{d}}^{3 - \beta} - 1\right)}{(3 - \beta)} = \frac{\sqrt{\frac{n}{d}}^{1 - \beta} - \sqrt{\frac{n}{d}}^{-2}}{3 - \beta} \\
        &= \frac{\sqrt{\frac{n}{d}}^{1 - \beta}}{3 - \beta} \left(1 - \sqrt{\frac{n}{d}}^{\beta - 3}\right).
    \end{align*}
    If $\sqrt{\frac{n}{d}}^{\beta - 3} \le \frac{1}{2}$, then this is at least
    \begin{align*}
        \sum_{i = 1}^u f(i) &\ge \frac{\sqrt{\frac{n}{d}}^{1 - \beta}}{3 - \beta} \left(1 - \frac{1}{2}\right) = \frac{\sqrt{\frac{n}{d}}^{1 - \beta}}{2(3 - \beta)} \ge \frac{1}{2} \sqrt{\frac{n}{d}}^{1 - \beta}.
    \end{align*}
    If $\sqrt{\frac{n}{d}}^{\beta - 3} > \frac{1}{2}$, then we have
    \begin{align*}
        \sum_{i = 1}^u f(i) &\ge f(1) = \frac{d}{n} = \sqrt{\frac{n}{d}}^{1 - \beta} \sqrt{\frac{n}{d}}^{\beta - 3} \ge \frac{1}{2} \sqrt{\frac{n}{d}}^{1 - \beta}.
    \end{align*}

    Putting these estimates into~\eqref{eq:pd-u-large}, we obtain
    % \begin{align*}
    %     P_d \ge C \cdot \frac{\frac{1}{2} \sqrt{\frac{n}{d}}^{1 - \beta}}{\left(\frac{1}{\beta - 1} + 1\right)} = \frac{C(\beta - 1)}{2\beta} \sqrt{\frac{n}{d}}^{1 - \beta}.
    % \end{align*}
    \begin{align*}
        P_d \ge CC_{\beta, u} \cdot \frac{1}{2} \sqrt{\frac{n}{d}}^{1 - \beta}.
    \end{align*}

    \textbf{If} $\beta = 3$, then by the second statement of Lemma~\ref{lem:sum-int} we have
    
    \begin{align*}
        \sum_{i = 1}^u f(i) &\ge \frac{d}{n}\int_1^{\sqrt{\frac{n}{d}}} \frac{dx}{x} + \int_{\sqrt{\frac{n}{d}}}^u \frac{dx}{x^3} + u^{-3}.
    \end{align*}
    If we consider the last two terms as a function of $u$, we can see that it is non-decreasing if $u \ge 3$, since its derivative is
    \begin{align*}
        \left(\int_{\sqrt{\frac{n}{d}}}^u \frac{dx}{x^3} + u^{-3}\right)'_u = \frac{1}{u^3} - \frac{3}{u^4} = \frac{u - 3}{u^4}.
    \end{align*}
    Hence, if $\sqrt{\frac{n}{d}} \ge 3$, then the lower bound is minimized by taking $u = \sqrt{\frac{n}{d}}$, which implies
    \begin{align*}
        \sum_{i = 1}^u f(i) &\ge \frac{d}{n}\int_1^{\sqrt{\frac{n}{d}}} \frac{dx}{x} + \sqrt{\frac{n}{d}}^{-3} = \frac{d}{n}\ln\sqrt{\frac{n}{d}} + \frac{d}{n} \frac{1}{\sqrt{\frac{n}{d}}} \\
        &\ge \frac{d}{n} \ln\left(\sqrt{\frac{n}{d}} \left(1 + \frac{1}{\sqrt{\frac{n}{d}}}\right)\right) = \frac{d}{n} \ln\left(\sqrt{\frac{n}{d}} + 1\right),
    \end{align*}
    where we used inequality $x \ge \ln(1 + x)$, which holds for all $x > -1$. Otherwise, when $\sqrt{\frac{n}{d}} \in [2, 3)$,
    then we estimate the lower bound on the sum as the sum of the first two terms. Note, that there are at least two terms in the sum, since we are considering the case $u > \sqrt{\frac{n}{d}} \ge 1$ and $u$ is an integer, hence $u \ge 2$.
    \begin{align*}
        \sum_{i = 1}^u f(i) &\ge f(1) + f(2) = \frac{d}{n} + \frac{1}{8}\min\left\{1, \frac{4d}{n}\right\} = \frac{d}{n} + \frac{d}{2n} \\
        &= \frac{3d}{2n} \ge \frac{d}{n} \ln(4) \ge \frac{d}{n} \ln\left(\sqrt{\frac{n}{d}} + 1\right).
    \end{align*} 
    If $\sqrt{\frac{n}{d}} \in [1, 2)$, then we get the same lower bound.
    \begin{align*}
        \sum_{i = 1}^u f(i) &= f(1) + f(2) = \frac{d}{n} + \frac{1}{8}\min\left\{1, \frac{4d}{n}\right\} = \frac{d}{n} + \frac{1}{8} \\
        &\ge \frac{9d}{8n} \ge \frac{d}{n} \ln(3) \ge \frac{d}{n} \ln\left(\sqrt{\frac{n}{d}} + 1\right).
    \end{align*} 

    Putting this lower bound into~\eqref{eq:pd-u-large}, we obtain
    \begin{align*}
        P_d \ge CC_{\beta, u} \cdot \frac{d}{n} \ln\left(\sqrt{\frac{n}{d}} + 1\right).
    \end{align*}

    \textbf{If} $\beta > 3$, we have
    \begin{align*}
        \sum_{i = 1}^u f(i) \ge f(1) = \frac{d}{n}.
    \end{align*}
    Therefore,
    \begin{align*}
        P_d \ge CC_{\beta, u} \cdot \frac{d}{n}.
    \end{align*}
\end{proof}

We proceed with the following upper bound on the expected cost of one iteration.

\begin{lemma}\label{lem:expected-lambda}
    The expected cost of one iteration of the heavy-tailed \ollga is as shown in Table~\ref{tbl:iter-cost} 
\end{lemma}
\begin{table}
    \caption{Upper bounds on the cost of one iteration of the heavy-tailed \ollga (which is, $E[2\lambda]$) depending on the parameters of the power-law distribution. $C_{\beta, u}$ stands for the normalization constant of the power-law distribution.}
	\label{tbl:iter-cost}
	\begin{center}
		\begin{tabular}{|c|l|}
			\hline
            & \\[-10pt]
            $\beta$ & 
            $E[2\lambda]$ \\[5pt] \hline
            & \\[-10pt]
            $< 1$ & 
            $2(3 - \beta)C_{\beta, u}\frac{u^{2 - \beta}}{2 - \beta}$ \\[5pt] \hline
            & \\[-10pt]
            $[1, 2)$ & 
            $2C_{\beta, u}\frac{u^{2 - \beta}}{2 - \beta}$ \\[5pt] \hline
            & \\[-10pt]
            $2$ & 
            $2C_{\beta, u}(\ln(u) + 1)$ \\[5pt] \hline
            & \\[-10pt]
            $>2$ & 
            $2C_{\beta, u}\frac{\beta - 1}{\beta - 2}$ \\[5pt] \hline
        \end{tabular}
	\end{center}
\end{table}

\begin{proof}
    Since after choosing $\lambda$ the cost of one iteration is $2\lambda$, the expected cost of one iteration is $E[2\lambda]$, which we compute as follows.
    \begin{align}\label{eq:expected-lambda}
        E[2\lambda] = \sum_{i = 1}^{u} 2i \Pr[\lambda = i] = 2C_{\beta, u} \sum_{i = 1}^u i^{1 - \beta}.
    \end{align}
    We now estimate the sum for different values of $\beta$.
    
    \textbf{If} $\beta < 1$, then by Lemma~\ref{lem:part-sum} we have
    \begin{align*}
        \sum_{i = 1}^u i^{1 - \beta} &\le \frac{u^{2 - \beta} - 1}{2 - \beta} + u^{1 - \beta} = \frac{u^{2 - \beta}}{2 - \beta} \left(1 - u^{\beta - 2} + (2 - \beta)u^{-1}\right) \\
        &\le \frac{u^{2 - \beta}}{2 - \beta} (1 + (2 - \beta)) = \frac{(3 - \beta)u^{2 - \beta}}{2 - \beta}.
    \end{align*}
    Hence, by~\eqref{eq:expected-lambda} we have
    \begin{align*}
        E[2\lambda] \le 2(3 - \beta)C_{u, \beta}\frac{u^{2 - \beta}}{2 - \beta}.
    \end{align*}

    \textbf{If} $\beta \in [1, 2)$, then by Lemma~\ref{lem:part-sum} we have
    \begin{align*}
        \sum_{i = 1}^u i^{1 - \beta} &\le \frac{u^{2 - \beta} - 1}{2 - \beta} + 1 = \frac{u^{2 - \beta}}{2 - \beta} - \frac{\beta - 1}{2 - \beta} \le \frac{u^{2 - \beta}}{2 - \beta}.
    \end{align*}
    Therefore, by~\eqref{eq:expected-lambda} we compute
    \begin{align*}
        E[2\lambda] \le 2C_{u, \beta}\frac{u^{2 - \beta}}{2 - \beta}.
    \end{align*}

    \textbf{If} $\beta = 2$, then by Lemma~\ref{lem:part-sum} we have
    \begin{align*}
        \sum_{i = 1}^u i^{1 - \beta} &\le \ln(u) + 1.
    \end{align*}
    Consequently, by~\eqref{eq:expected-lambda} we have
    \begin{align*}
        E[2\lambda] \le 2C_{u, \beta}(\ln(u) + 1).
    \end{align*}
    
    \textbf{If} $\beta > 2$, then by Lemma~\ref{lem:part-sum} we have
    \begin{align*}
        \sum_{i = 1}^u i^{1 - \beta} &\le \frac{1 - u^{2 - \beta}}{\beta - 2} + 1 \le \frac{1}{\beta - 2} + 1 = \frac{\beta - 1}{\beta - 2}.
    \end{align*}
    Hence, by~\eqref{eq:expected-lambda} we have
    \begin{align*}
        E[2\lambda] \le 2C_{u, \beta}\frac{\beta - 1}{2 - \beta}.
    \end{align*}
\end{proof}

We are now in position to prove Theorem~\ref{thm:fast}. Before we start, we sketch the general idea of the proof. Below we denote by $T_I$ the number of iterations which the algorithm makes before it finds the optimum and by $T_F$ we denote the corresponding expected number of fitness evaluations. 
    
We split the proof into three cases depending on the parameter $u$. We start the proof of each case with finding an upper bound on $E[T_I]$. If we denote by $T_d$ the number of iterations which the algorithm spends in distance $d$ from the optimum, the total number of iterations $T$ is then the sum of $T_d$ for all distances $d \in [1..D]$. Note that for all $d > D$ we have $T_d = 0$ due to the elitist selection of the \ollga. Also the elitism implies that once we leave distance level $d$ we never return to it, hence each $T_d$ is dominated by the geometric distribution $\Geom(P_d)$, where $P_d$ is the probability of a successful iteration when we are in distance $d$ from the optimum. Hence, we have
\begin{align*}
    E[T_I] = \sum_{d = 1}^D E[T_d] \le \sum_{d = 1}^D P_d^{-1}.
\end{align*}

Once we find $E[T_I]$, we can find $E[T_F]$ using the Wald's equation (Lemma~\ref{lem:wald}). For this we note that
\begin{align*}
    E[T_F] = E\left[\sum_{t = 1}^{T_I} 2\lambda_t \right],
\end{align*}
where $\lambda_t$ is the value of $\lambda$ chosen in iteration $t$. We now check that these values satisfy the conditions of Wald's equation.
\begin{enumerate}
    \item By Lemma~\ref{lem:expected-lambda} all $\lambda_t$ (and therefore, $2\lambda_t$) have the same finite expectation.
    \item The property $E[\lambda_t \mathds{1}_{T_I \ge t}] = E[\lambda_t]\Pr[T_I \ge t]$ follows from the fact that we choose $\lambda$ independently of the iteration. 
    \item Since $\lambda_t$ is non-negative, for the third property we have 
    \begin{align*}
        \sum_{t = 1}^{+\infty} E[|\lambda_t| \mathds{1}_{T_I \ge t}] &= \sum_{t = 1}^{+\infty} E[\lambda_t \mathds{1}_{T_I \ge t}] = \sum_{t = 1}^{+\infty} E[\lambda_t] \Pr[T_I \ge t] \\
        &= E[\lambda] E[T_I] < \infty.
    \end{align*} 
    \item In the proves below we show that $E[T_I]$ is always finite.
\end{enumerate}

Therefore, we can compute $E[T_F] = E[2\lambda]E[T_I]$ using our estimates for $E[T_I]$ and Lemma~\ref{lem:expected-lambda} for $E[2\lambda]$. 

\begin{proof}[Proof of Theorem~\ref{thm:fast}]

    \textbf{Case 1: $u \le \sqrt{\frac{n}{D}}$.}

    In this case we also have $u \le \sqrt{\frac{n}{d}}$ for all $d \in [1..D]$. Therefore we take the values for $P_d$ from the left column of Table~\ref{tbl:pd}.
    
    \textbf{If} $\beta < 3$, then we have
    \begin{align*}
        E[T_I] &\le \sum_{d = 1}^D \frac{\max\{1, 3 - \beta\}}{CC_{\beta, u}} \cdot \frac{n}{du^{3 - \beta}} = \frac{\max\{1, 3 - \beta\}}{CC_{\beta, u}} \cdot \frac{n}{u^{3 - \beta}} \sum_{d = 1}^D \frac{1}{d} \\
        &\le \frac{\max\{1, 3 - \beta\}}{CC_{\beta, u}} \cdot \frac{n}{u^{3 - \beta}} \left(\ln(D) + 1\right),
    \end{align*}
    where in the last inequality we used Lemma~\ref{lem:part-sum}.
    \textbf{If} $\beta = 3$, then by Lemma~\ref{lem:part-sum} we have
    \begin{align*}
        E[T_I] &\le \sum_{d = 1}^D \frac{1}{CC_{\beta, u}} \cdot \frac{n}{d\ln(u + 1)} =  \frac{1}{CC_{\beta, u}} \cdot \frac{n}{\ln(u + 1)} \sum_{d = 1}^D \frac{1}{d} \\
        &\le  \frac{1}{CC_{\beta, u}} \cdot \frac{n}{\ln(u + 1)} \left(\ln(D) + 1\right).
    \end{align*}
    \textbf{If} $\beta > 3$, then by Lemma~\ref{lem:part-sum} we have
    \begin{align*}
        E[T_I] &\le \sum_{d = 1}^D \frac{1}{CC_{\beta, u}} \cdot \frac{n}{d} =  \frac{n}{CC_{\beta, u}}  \sum_{d = 1}^D \frac{1}{d} \\
        &\le  \frac{n}{CC_{\beta, u}} \left(\ln(D) + 1\right).
    \end{align*}

    We now use the Wald's equation (Lemma~\ref{lem:wald}) and the estimates of $E[2\lambda]$ from Lemma~\ref{lem:expected-lambda} to compute $E[T_F] = E[T_I]E[2\lambda]$.

    If $\beta < 2$, then we have
    \begin{align*}
        E[T_F] &\le \left(2(3 - \beta)C_{\beta, u} \frac{u^{2 - \beta}}{2 - \beta}\right) \left(\frac{3 - \beta}{CC_{\beta, u}} \cdot \frac{n}{u^{3 - \beta}} \left(\ln(D) + 1\right)\right) \\
        &= \frac{2(3 - \beta)^2}{C(2 - \beta)} \cdot \frac{n}{u}(\ln(D) + 1).
    \end{align*}

    If $\beta = 2$, then we have
    \begin{align*}
        E[T_F] &\le \left(2C_{2, u} (\ln(u) + 1)\right) \left(\frac{1}{CC_{2, u}} \cdot \frac{n}{u} \left(\ln(D) + 1\right)\right) \\
        &= \frac{2}{C} \cdot \frac{n(\ln(u) + 1)}{u} (\ln(D) + 1).
    \end{align*}

    If $\beta \in (2, 3)$, then we have
    \begin{align*}
        E[T_F] &\le \left(2C_{\beta, u} \frac{\beta - 1}{\beta - 2}\right) \left(\frac{1}{CC_{\beta, u}} \cdot \frac{n}{u^{3 - \beta}} \left(\ln(D) + 1\right)\right) \\
        &= \frac{2(\beta - 1)}{C(\beta -2)} \cdot \frac{n}{u^{3 - \beta}} (\ln(D) + 1).
    \end{align*}

    If $\beta =3$, then we have
    \begin{align*}
        E[T_F] &\le \left(2C_{\beta, u} \frac{\beta - 1}{\beta - 2}\right) \left(\frac{1}{CC_{\beta, u}} \cdot \frac{n}{\ln(u + 1)} \left(\ln(D) + 1\right)\right) \\
        &= \frac{2(\beta - 1)}{C(\beta -2)} \cdot \frac{n}{\ln(u + 1)} (\ln(D) + 1).
    \end{align*}

    If $\beta >3$, then we have
    \begin{align*}
        E[T_F] &\le \left(2C_{\beta, u} \frac{\beta - 1}{\beta - 2}\right) \left(\frac{n}{CC_{\beta, u}} \left(\ln(D) + 1\right)\right) \\
        &= \frac{2(\beta - 1)}{C(\beta -2)} \cdot n (\ln(D) + 1).
    \end{align*}

\textbf{Case 2: $u \in [\sqrt{\frac{n}{D}}, \sqrt{n}]$.}

    By $P_d'$ we denote the lower bounds on $P_d$ obtained in Lemma~\ref{lem:pd} and shown in Table~\ref{tbl:pd}. For all values of $\beta$ we aim at showing that $(P_d')^{-1}$ is non-increasing in $d$ and then use Lemma~\ref{lem:sum-int} to estimate the sum of $(P_d')^{-1}$.
    
    \textbf{If} $\beta \le 1$, then we denote
    \begin{align*}
        P_d' \coloneqq \begin{cases}
            \frac{CC_{\beta, u}}{3 - \beta} \cdot \frac{du^{3 - \beta}}{n}, &\text{ if } d \le \frac{n}{u^2}, \\
            \frac{CC_{\beta, u}}{3 - \beta} \cdot u^{1 - \beta}, &\text{ if } d > \frac{n}{u^2}. \\
        \end{cases}
    \end{align*}
    When $d \le \frac{n}{u^2}$ this is an increasing function of $d$, and it is at most 
    \begin{align*}
        \frac{CC_{\beta, u}}{3 - \beta} \cdot \frac{u^{3 - \beta}}{n} \cdot \frac{n}{u^2} =  \frac{CC_{\beta, u}}{3 - \beta} \cdot u^{1 - \beta}.
    \end{align*}
    For larger $d$ it is a constant function of $d$. Hence, $P_d'$ is a positive non-decreasing function of $d$ for $d \in [1, n]$ (which is a real-valued interval), which implies that $(P_d')^{-1}$ is a positive non-increasing function. Thus, by Lemma~\ref{lem:sum-int} we compute
    \begin{align*}
        E[T_I] &\le \sum_{d = 1}^D (P_d')^{-1} \le P_1'^{-1}  + \int_{1}^{\frac{n}{u^2}} \frac{3 - \beta}{CC_{\beta, u}} \cdot \frac{n}{xu^{3 - \beta}} dx + \int_{\frac{n}{u^2}}^D \frac{3 - \beta}{CC_{\beta, u}} \cdot u^{\beta - 1} dx \\
        &= \frac{3 - \beta}{CC_{\beta,u}} \cdot \left(\frac{n}{u^{3 - \beta}} + \frac{n}{u^{3 - \beta}} \ln\left(\frac{n}{u^2}\right)  + u^{\beta - 1}\left(D - \frac{n}{u^2}\right)\right) \\
        &= \frac{3 - \beta}{CC_{\beta,u}} \cdot \frac{1}{u^{1 - \beta}} \cdot \left(\frac{n}{u^2}\ln\left(\frac{n}{u^2}\right) + D\right).
    \end{align*}

    \textbf{If} $\beta \in (1, 3)$, then we denote
    \begin{align*}
        P_d' \coloneqq \begin{cases}
            \frac{CC_{\beta, u}}{4} \cdot \frac{du^{3 - \beta}}{n}, &\text{ if } d \le \frac{n}{u^2}, \\
            \frac{CC_{\beta, u}}{4} \cdot \sqrt{\frac{n}{d}}^{1 - \beta}, &\text{ if } d > \frac{n}{u^2}. \\
        \end{cases}
    \end{align*}
    We note that $P_d'$ is increasing in $d$ both in $[1, \frac{n}{u^2}]$ and in $[\frac{n}{u^2}, D]$. Also for all $d \le \frac{n}{u^2}$ we have
    \begin{align*}
        P_d' \le \frac{CC_{\beta, u}}{4} \cdot \frac{u^{3 - \beta}}{n} \cdot \frac{n}{u^2} = \frac{CC_{\beta, u}}{4} \cdot u^{1 - \beta},
    \end{align*}
    and for all $d > \frac{n}{u^2}$ we have
    \begin{align*}
        P_d' > \frac{CC_{\beta, u}}{4} \cdot \sqrt{n}^{1 - \beta} \sqrt{\frac{n}{u^2}}^{\beta - 1} = \frac{CC_{\beta, u}}{4} \cdot u^{1 - \beta}.
    \end{align*}
    Therefore, $P_d'$ is increasing in all interval $[1, D]$. Hence, by Lemma~\ref{lem:sum-int} we have
    \begin{align*}
        E[T_I] &\le \sum_{d = 1}^D (P_d')^{-1} \le P_1' + \int_{1}^{\frac{n}{u^2}} \frac{4}{CC_{\beta, u}} \cdot \frac{n}{xu^{3 - \beta}}dx + \int_{\frac{n}{u^2}}^{D} \frac{4}{CC_{\beta, u}} \sqrt{\frac{x}{n}}^{1 - \beta}dx \\
        &=\frac{4}{CC_{\beta, u}} \left(\frac{n}{u^{3 - \beta}} + \frac{n}{u^{3 - \beta}} \ln\left(\frac{n}{u^{2}}\right) + \sqrt{n}^{\beta - 1} \cdot \frac{2\sqrt{x}^{3 - \beta}}{3 - \beta}\bigg\rvert_{\frac{n}{u^2}}^D \right) \\
        &= \frac{4}{CC_{\beta, u}} \left(\frac{n}{u^{3 - \beta}} \left(\ln\left(\frac{n}{u^{2}}\right) + 1 - \frac{2}{3 - \beta}\right) + \frac{2\sqrt{n}^{\beta - 1}\sqrt{D}^{3 - \beta}}{3 - \beta}  \right) \\
        &\le \frac{4}{CC_{\beta, u}} \left(\frac{n}{u^{3 - \beta}} \ln\left(\frac{n}{u^{2}}\right) + \frac{2\sqrt{n}^{\beta - 1}\sqrt{D}^{3 - \beta}}{3 - \beta}  \right).
    \end{align*}

    \textbf{If} $\beta = 3$, then we have
    \begin{align*}
        P_d' \coloneqq \begin{cases}
            CC_{\beta, u} \cdot \frac{d\ln(u + 1)}{n}, &\text{ if } d \le \frac{n}{u^2}, \\
            CC_{\beta, u} \cdot \frac{d\ln\left(\sqrt{\frac{n}{d}} + 1\right)}{n}, &\text{ if } d > \frac{n}{u^2}. \\
        \end{cases}
    \end{align*}
    In the first case we have a linear function of $d$. To show that in the second case we have an increasing function in $d$, we consider the its derivative over $d$.
    \begin{align*}
        \left(d\ln\left(\sqrt{\frac{n}{d}} + 1\right)\right)_d' &= \ln\left(\sqrt{\frac{n}{d}} + 1\right) + \frac{d}{\sqrt{\frac{n}{d}} + 1} \cdot \left(-\frac{\sqrt{n}}{2\sqrt{d}^3}\right) \\
        &= \ln\left(\sqrt{\frac{n}{d}} + 1\right) - \frac{1}{2\left(\sqrt{\frac{d}{n}} + 1\right)} \\
        &\ge \ln(1 + 1) - \frac{1}{2} > 0. 
    \end{align*} 
    Thus, $\frac{d\ln(\sqrt{\frac{n}{d}} + 1)}{n}$ is an increasing function of $d$. To show that $P_d'$ is an increasing function in all interval $[1, D]$, we note that when $d \le \frac{n}{u^2}$, we have
    \begin{align*}
        P_d' \le CC_{\beta, u} \frac{\ln(u + 1)}{u^2},
    \end{align*}
    and when $d > \frac{n}{u^2}$, we have
    \begin{align*}
        P_d' > CC_{\beta, u} \frac{\ln(u + 1)}{u^2}.
    \end{align*}
    Consequently, by Lemma~\ref{lem:sum-int} we obtain
    \begin{align*}
        E[T_I] &\le \sum_{d = 1}^D (P_d')^{-1} \\
        &\le P_1' + \int_{1}^{\frac{n}{u^2}} \frac{1}{CC_{\beta, u}} \cdot \frac{n}{x\ln(u + 1)}dx + \int_{\frac{n}{u^2}}^{D} \frac{1}{CC_{\beta, u}} \cdot \frac{n}{x\ln\left(\sqrt{\frac{n}{x}} + 1\right)}dx \\
        &\le \frac{1}{CC_{\beta, u}} \left(\frac{n}{\ln(u + 1)} + \frac{n}{\ln(u + 1)} \ln\left(\frac{n}{u^2}\right) + \int_{\frac{n}{u^2}}^D \frac{n}{x\ln\left(\sqrt{\frac{n}{x}} + 1\right)}dx\right).
    \end{align*}
    We estimate the integral as follows.
    \begin{align*}
        \int_{\frac{n}{u^2}}^D \frac{n}{x\ln\left(\sqrt{\frac{n}{x}} + 1\right)}dx &\le  \int_{\frac{n}{u^2}}^D \frac{n}{x\ln\left(\sqrt{\frac{n}{x}}\right)}dx = \left[\begin{array}{l}
            t = \sqrt{\frac{n}{x}} \\
            x = \frac{n}{t^2} \\
            dx = -\frac{2n}{t^3} dt
        \end{array}\right] \\
        &= \int_{u}^{\sqrt{\frac{n}{D}}} \frac{n\cdot\left(-\frac{2n}{t^3}\right) dt}{\frac{n}{t^2}\ln(t)} = -2n \int_{u}^{\sqrt{\frac{n}{D}}} \frac{dt}{t\ln(t)} \\
        &= -2n \ln\ln(t) \bigg\rvert_u^{\sqrt{\frac{n}{D}}} = 2n\ln\left(\frac{\ln(u)}{\ln(\sqrt{n/D})}\right).
    \end{align*}
    This results in a very tight bound for small values of $D$. E.g., if $D$ is constant and $u$ just a little bit greater than $\sqrt{\frac{n}{D}}$, we have
    \begin{align*}
        E[T_I] \le \Theta\left(\frac{n}{\log(n)}\right).
    \end{align*}
    
    To avoid an infinite upper bound for large $D$ we also show a different estimate of this integral for $D > \frac{n}{e^2}$ depending on the value of $u$. If $u \ge e$ (and thus $\frac{n}{u^2} \le \frac{n}{e^2}$), we have
    \begin{align*}
        \int_{\frac{n}{u^2}}^D \frac{n}{x\ln\left(\sqrt{\frac{n}{x}} + 1\right)}dx &\le \int_{\frac{n}{u^2}}^{\frac{n}{e^2}} \frac{n}{x\ln\left(\sqrt{\frac{n}{x}}\right)}dx + \int_{\frac{n}{e^2}}^D \frac{n}{x\ln(2)}dx \\
        &= 2n\ln \left(\frac{\ln(u)}{\ln(e)}\right) + \frac{n}{\ln(2)} \ln\left(\frac{De^2}{n}\right) \\
        &= 2n\ln\ln(u) + \frac{2n}{\ln(2)}.
    \end{align*}
    If $u < e$, we have
    \begin{align*}
        \int_{\frac{n}{u^2}}^D \frac{n}{x\ln\left(\sqrt{\frac{n}{x}} + 1\right)}dx &\le \int_{\frac{n}{u^2}}^D \frac{n}{x\ln(2)}dx = \frac{n}{\ln(2)} \ln\left(\frac{Du^2}{n}\right) \\
        &\le \frac{n}{\ln(2)}\ln(u^2) < \frac{2n}{\ln(2)}.
    \end{align*}
    Hence, for $D > \frac{n}{e^2}$ we have
    \begin{align*}
        \int_{\frac{n}{u^2}}^D \frac{n}{x\ln\left(\sqrt{\frac{n}{x}} + 1\right)}dx &\le 2n\ln\ln^+(u) + \frac{2n}{\ln(2)},
    \end{align*}
    where by $\ln^+(x)$ we denote $\max\{1, \ln(x)\}$.

    Wrapping up the two cases, for $D \le \frac{n}{e^2}$ we have
    \begin{align*}
        E[T_I] &\le \frac{1}{CC_{\beta, u}} \left(\frac{n}{\ln(u + 1)} \left(\ln\left(\frac{n}{u^2}\right) + 1\right) + 2n\ln\left(\frac{\ln(u)}{\ln(\sqrt{n/D})}\right)\right),
    \end{align*}
    and for $D > \frac{n}{e^2}$ we have 
    \begin{align*}
        E[T_I] &\le \frac{1}{CC_{\beta, u}} \left(\frac{n}{\ln(u + 1)} \left(\ln\left(\frac{n}{u^2}\right) + 1\right) + 2n\ln\ln^+(u) + \frac{2n}{\ln(2)}\right).
    \end{align*}

    \textbf{If} $\beta > 3$, then we have $P_d' \coloneqq CC_{\beta, u} \frac{d}{n}$. Hence, by Lemma~\ref{lem:part-sum} we have
    \begin{align*}
        E[T_I]&\le \sum_{d = 1}^D (P_d')^{-1} = \frac{n}{CC_{\beta, u}} \sum_{d = 1}^D \frac{1}{d} \le \frac{n}{CC_{\beta, u}} \left(\ln(D) + 1\right).
    \end{align*}

    Now we transform the estimates on the expectation of $T_I$ into the expectation of $T_F$ using the Wald's equation (Lemma~\ref{lem:wald}) and the estimates for $E[2\lambda]$ from Lemma~\ref{lem:expected-lambda}.

    \textbf{If} $\beta \le 1$, then we have
    \begin{align*}
        E[T_F] &= E[T_I]E[2\lambda] \le \left(2(3 - \beta)C_{\beta, u} \frac{u^{2 - \beta}}{2 - \beta}\right) \\
        &\cdot \left(\frac{3 - \beta}{CC_{\beta,u}} \cdot \frac{1}{u^{1 - \beta}} \cdot \left(\frac{n}{u^2}\ln\left(\frac{n}{u^2}\right) + D\right)\right) \\
        &= \frac{2(3 - \beta)^2}{C(2 - \beta)} \cdot \left(\frac{n}{u} \ln\left(\frac{n}{u^2}\right) + Du\right).
    \end{align*}

    \textbf{If} $\beta \in (1, 2)$, then we have
    \begin{align*}
        E[T_F] &= E[T_I]E[2\lambda] \le \left(2(3 - \beta)C_{\beta, u} \frac{u^{2 - \beta}}{2 - \beta}\right) \\
        &\cdot \left(\frac{4}{CC_{\beta, u}} \left(\frac{n}{u^{3 - \beta}} \ln\left(\frac{n}{u^{2}}\right) + \frac{2\sqrt{n}^{\beta - 1}\sqrt{D}^{3 - \beta}}{3 - \beta}  \right)\right) \\
        &= \frac{8(3 - \beta)}{C(2 - \beta)} \left(\frac{n}{u} \ln\left(\frac{n}{u^2}\right) + \frac{2}{3 - \beta}\sqrt{n}^{\beta - 1}\sqrt{D}^{3 - \beta} u^{2 - \beta}\right) \\
        &= \frac{8(3 - \beta)}{C(2 - \beta)} \cdot \frac{n}{u} \left(\ln\left(\frac{n}{u^2}\right) + \frac{2}{3 - \beta}\left(u\sqrt{\frac{D}{n}}\right)^{3 - \beta}\right).
    \end{align*}

    \textbf{If} $\beta = 2$, then we have
    \begin{align*}
        E[T_F] &= E[T_I]E[2\lambda] \le \left(2C_{2, u} (\ln(u) + 1)\right) \\
        &\cdot \left(\frac{4}{CC_{2, u}} \cdot  \left(\frac{n}{u} \ln\left(\frac{n}{u^2}\right) + 2 \sqrt{nD} \right) \right) \\
        &= \frac{8}{C} (\ln(u) + 1)\left(\frac{n}{u} \ln\left(\frac{n}{u^2}\right) +  2\sqrt{nD}\right)
    \end{align*}

    \textbf{If} $\beta \in (2, 3)$, then we have
    \begin{align*}
        E[T_F] &= E[T_I]E[2\lambda] \le \left(2C_{\beta, u} \frac{\beta - 1}{\beta - 2}\right) \\
        &\cdot \left(\frac{4}{CC_{\beta, u}} \left(\frac{n}{u^{3 - \beta}} \ln\left(\frac{n}{u^{2}}\right) + \frac{2\sqrt{n}^{\beta - 1}\sqrt{D}^{3 - \beta}}{3 - \beta}  \right)\right) \\
        &= \frac{8(\beta - 1)}{C(\beta - 2)} \cdot\frac{n}{u^{3 - \beta}} \left(\ln\left(\frac{n}{u^2}\right) + \frac{2}{3 - \beta}\left(u\sqrt{\frac{D}{n}}\right)^{3 - \beta}\right)
    \end{align*}

    \textbf{If} $\beta =3$ \textbf{and} $D \le \frac{n}{e^2}$, then we have
    \begin{align*}
        E[T_F] &= E[T_I]E[2\lambda] \le \left(2C_{\beta, u} \frac{\beta - 1}{\beta - 2}\right) \cdot \frac{1}{CC_{\beta, u}} \\
        &\cdot \left(\frac{n}{\ln(u + 1)} \left(\ln\left(\frac{n}{u^2}\right) + 1\right) + 2n\ln\left(\frac{\ln(u)}{\ln(\sqrt{n/D})}\right)\right) \\
        &= \frac{2(\beta - 1)}{C(\beta - 2)} \cdot \left(\frac{n}{\ln(u + 1)} \left(\ln\left(\frac{n}{u^2}\right) + 1\right) + 2n\ln\left(\frac{\ln(u)}{\ln(\sqrt{n/D})}\right)\right).
    \end{align*}

    \textbf{If} $\beta =3$ \textbf{and} $D > \frac{n}{e^2}$, then we have
    \begin{align*}
        E[T_F] &= E[T_I]E[2\lambda] \le \left(2C_{\beta, u} \frac{\beta - 1}{\beta - 2}\right) \cdot \frac{1}{CC_{\beta, u}} \\
        &\cdot \left(\frac{n}{\ln(u + 1)} \left(\ln\left(\frac{n}{u^2}\right) + 1\right) + 2n\ln\ln^+(u) + \frac{2n}{\ln(2)}\right) \\
        &= \frac{2(\beta - 1)}{C(\beta - 2)} \cdot \left(\frac{n}{\ln(u + 1)} \left(\ln\left(\frac{n}{u^2}\right) + 1\right) + 2n\ln\ln^+(u) + \frac{2n}{\ln(2)}\right).
    \end{align*}

    \textbf{If} $\beta >3$, then we have
    \begin{align*}
        E[T_F] &\le \left(2C_{\beta, u} \frac{\beta - 1}{\beta - 2}\right) \left(\frac{n}{CC_{\beta, u}} \left(\ln(D) + 1\right)\right) \\
        &= \frac{2(\beta - 1)}{C(\beta -2)} \cdot n (\ln(D) + 1).
    \end{align*}

\textbf{Case 3: $u \ge \sqrt{n}$.}

    In this case for all distances $d \in[1..n]$ we have $u > \sqrt{\frac{n}{d}}$, hence we should take the lower bound on $P_d$ from the right column of Table~\ref{tbl:pd}.

    \textbf{If} $\beta \le 1$, then we have
    \begin{align*}
        E[T_I] &\le \sum_{d = 1}^D P_d^{-1} \le \sum_{d = 1}^D\frac{3 - \beta}{CC_{\beta, u}} \cdot  u^{1 - \beta} = \frac{3 - \beta}{CC_{\beta, u}} \cdot Du^{1 - \beta}.
    \end{align*}

    \textbf{If} $\beta \in (1, 3)$, then by Lemma~\ref{lem:part-sum} we have
    \begin{align*}
        E[T_I] &\le \sum_{d = 1}^D P_d^{-1} \le \sum_{d = 1}^D \frac{2}{CC_{\beta, u}} \cdot  \sqrt{\frac{n}{d}}^{\beta - 1} =  \frac{2\sqrt{n}^{\beta - 1}}{CC_{\beta, u}}  \sum_{d = 1}^D d^{-\frac{\beta - 1}{2}} \\
        &\le \frac{2\sqrt{n}^{\beta - 1}}{CC_{\beta, u}} \cdot \left(\frac{D^{\frac{3 - \beta}{2}} - 1}{\left(\frac{3 - \beta}{2}\right)} + 1\right) = \frac{4\sqrt{n}^{\beta - 1}}{CC_{\beta, u}(3 - \beta)} \left(\sqrt{D}^{3 - \beta} - \frac{\beta - 1}{2}\right) \\
        &\le \frac{4\sqrt{n}^{\beta - 1}\sqrt{D}^{3 - \beta}}{CC_{\beta, u}(3 - \beta)}  
    \end{align*}

    \textbf{If} $\beta = 3$, then we have 
    \begin{align*}
        E[T_I] &\le \sum_{d = 1}^D P_d^{-1} \le \sum_{d = 1}^D \frac{1}{CC_{\beta, u}} \cdot \frac{n}{d\ln\left(\sqrt{\frac{n}{d}} + 1\right)}
    \end{align*}

    As we have shown in the second case of this proof in the part when $\beta = 3$, $d\ln(\sqrt{\frac{n}{d}} + 1)$ is an increasing function of $d$. Therefore, by Lemma~\ref{lem:sum-int} we have
    \begin{align*}
        E[T_I] &\le \frac{1}{CC_{\beta, u}} \left( \frac{n}{\ln(\sqrt{n} + 1)} + \int_1^D \frac{n dx}{x\ln\left(\sqrt{\frac{n}{x}} + 1\right)} \right).
    \end{align*}
    The integral in this expression is is the same as in the second case of this proof (in the part for $\beta = 3$) with $u = \sqrt{n}$, hence we estimate it as follows.
    \begin{align*}
        \int_1^D \frac{n dx}{x\ln\left(\sqrt{\frac{n}{x}} + 1\right)} \le \begin{cases}
            2n\ln\left(\frac{\ln(n)}{\ln(n/D)}\right), &\text{ if } D \le \frac{n}{e^2}, \\
            2n\left(\ln\ln\sqrt{n} + \frac{1}{\ln(2)}\right), &\text{ if } D > \frac{n}{e^2}.
        \end{cases}
    \end{align*}
    For the second case we slightly weaken the bound in order to improve the readability as follows.
    \begin{align*}
        2n\left(\ln\ln\sqrt{n} + \frac{1}{\ln(2)}\right) = 2n\left(\ln\ln(n) - ln(2) + \frac{1}{\ln(2)}\right) \le 2n(\ln\ln(n) + 1).
    \end{align*}
    Hence, for $D \le \frac{n}{e^2}$ we have
    \begin{align*}
        E[T_I] \le \frac{1}{CC_{\beta, u}} \left( \frac{n}{\ln(\sqrt{n} + 1)} + 2n\ln\left(\frac{\ln(n)}{\ln(n/D)}\right) \right).
    \end{align*}
    We note that for $D \ge 2$ this bound can be simplified to
    \begin{align*}
        E[T_I] \le \frac{4n}{CC_{\beta, u}} \ln\left(\frac{\ln(n)}{\ln(n/D)}\right),
    \end{align*}
    since we have
    \begin{align*}
        2n\ln\left(\frac{\ln(n)}{\ln(n/D)}\right) &\ge 2n\ln\left(\frac{\ln(n)}{\ln(n/2)}\right) = -2n \ln\left(\frac{\ln(n) - \ln(2)}{\ln(n)}\right) \\
        &= -2n \ln\left(1 - \frac{\ln(2)}{\ln(n)}\right) \ge 2n\frac{\ln(2)}{\ln(n)} \\
        &\ge \frac{n}{\ln(\sqrt{n})} \ge \frac{n}{\ln(\sqrt{n} + 1)}.
    \end{align*}
    
    For $D > \frac{n}{e^2}$ we have
    \begin{align*}
        E[T_I] \le \frac{1}{CC_{\beta, u}} \left( \frac{n}{\ln(\sqrt{n} + 1)} +  2n(\ln\ln(n) + 1) \right) \le \frac{3n(\ln\ln(n) + 1)}{CC_{\beta, u}}.
    \end{align*}

    \textbf{If} $\beta > 3$, then by Lemma~\ref{lem:part-sum} we have
    \begin{align*}
        E[T_I]&\le \sum_{d = 1}^D (P_d')^{-1} = \frac{n}{CC_{\beta, u}} \sum_{d = 1}^D \frac{1}{d} \le \frac{n}{CC_{\beta, u}} \left(\ln(D) + 1\right).
    \end{align*}

    Now we transform the estimates on the expectation of $T_I$ into the expectation of $T_F$ using the Wald's equation (Lemma~\ref{lem:wald}) and the estimates for $E[2\lambda]$ from Lemma~\ref{lem:expected-lambda}.

    \textbf{If} $\beta \le 1$, then we have
    \begin{align*}
        E[T_F] &= E[T_I]E[2\lambda] \le \left(2(3 - \beta)C_{\beta, u} \frac{u^{2 - \beta}}{2 - \beta}\right)  \left( \frac{3 - \beta}{CC_{\beta, u}} \cdot Du^{1 - \beta} \right) \\
        &= \frac{2(3 - \beta)^2}{C(2 - \beta)} \cdot Du.
    \end{align*}

    \textbf{If} $\beta \in (1, 2)$, then we have
    \begin{align*}
        E[T_F] &= E[T_I]E[2\lambda] \le \left(2(3 - \beta)C_{\beta, u} \frac{u^{2 - \beta}}{2 - \beta}\right) \cdot \left(\frac{4\sqrt{n}^{\beta - 1}\sqrt{D}^{3 - \beta}}{CC_{\beta, u}(3 - \beta)}\right) \\
        &= \frac{8}{C(2 - \beta)} \sqrt{n}^{\beta - 1}\sqrt{D}^{3 - \beta} u^{2 - \beta} = \frac{8}{C(2 - \beta)} \cdot \frac{n}{u}\left(u\sqrt{\frac{D}{n}}\right)^{3 - \beta}.
    \end{align*}

    \textbf{If} $\beta = 2$, then we have
    \begin{align*}
        E[T_F] &= E[T_I]E[2\lambda] \le \left(2C_{2, u} (\ln(u) + 1)\right) \cdot \left(\frac{4\sqrt{nD}}{CC_{\beta, u}}\right) \\
        &= \frac{8}{C} (\ln(u) + 1)\sqrt{nD}.
    \end{align*}

    \textbf{If} $\beta \in (2, 3)$, then we have
    \begin{align*}
        E[T_F] &= E[T_I]E[2\lambda] \le \left(2C_{\beta, u} \frac{\beta - 1}{\beta - 2}\right) \cdot \left(\frac{4\sqrt{n}^{\beta - 1}\sqrt{D}^{3 - \beta}}{CC_{\beta, u}(3 - \beta)}\right) \\
        &= \frac{8(\beta - 1)}{C(\beta - 2)(3 - \beta)} \cdot\sqrt{n}^{\beta - 1}\sqrt{D}^{3 - \beta}.
    \end{align*}

    \textbf{If} $\beta =3$ \textbf{and} $D \le \frac{n}{e^2}$, then we have
    \begin{align*}
        E[T_F] &= E[T_I]E[2\lambda] \le \left(2C_{\beta, u} \frac{\beta - 1}{\beta - 2}\right) \cdot \frac{1}{CC_{\beta, u}} \\
        &\cdot \left( \frac{n}{\ln(\sqrt{n} + 1)} + 2n\ln\left(\frac{\ln(n)}{\ln(n/D)}\right) \right)\\
        &= \frac{2(\beta - 1)}{C(\beta - 2)} \cdot\left( \frac{n}{\ln(\sqrt{n} + 1)} + 2n\ln\left(\frac{\ln(n)}{\ln(n/D)}\right) \right).
    \end{align*}

    \textbf{If} $\beta =3$ \textbf{and} $D > \frac{n}{e^2}$, then we have
    \begin{align*}
        E[T_F] &= E[T_I]E[2\lambda] \le \left(2C_{\beta, u} \frac{\beta - 1}{\beta - 2}\right) \cdot \left(\frac{3n(\ln\ln(n) + 1)}{CC_{\beta, u}}\right) \\
        &= \frac{6(\beta - 1)}{C(\beta - 2)} \cdot n(\ln\ln(n) + 1).
    \end{align*}

    \textbf{If} $\beta >3$, then we have
    \begin{align*}
        E[T_F] &\le \left(2C_{\beta, u} \frac{\beta - 1}{\beta - 2}\right) \left(\frac{n}{CC_{\beta, u}} \left(\ln(D) + 1\right)\right) \\
        &= \frac{2(\beta - 1)}{C(\beta -2)} \cdot n (\ln(D) + 1).
    \end{align*}

\end{proof}

% From Table~\ref{tbl:runtime} we see that choosing $\beta = 2$ and $u = \sqrt{n}$ is the most universal option. The empirical results in~\cite{DoerrLMN17} let us assume that different values of $\beta$, but close to two might also be effective in practice. The results of our experiments provided in the Section~\ref{sec:experiments} confirm this  and show that using $\beta < 2$ with $u = \sqrt{n}$ can be beneficial when starting from a small distance. 

\section{Black-box Complexity}\label{sec:bbc}

Now we turn to proving the black-box complexity bounds for the case of starting with an already good solution.
Informally, it is the same as the old black-box complexity, with a restriction on the employed algorithms if necessary,
but the algorithm is additionally supplied with a search point which is supposed to be a good solution.
As we are interested in the way the complexity of the problem scales depending on how such a point is generated with respect to the particular problem instance,
we introduce an oracle function that creates these points depending on the problem instance.

\begin{definition}
    Let $\mathcal{P} = (\mathcal{S}, \mathcal{I}, \mathcal{O})$ be the problem class consisting of a search space $\mathcal{P}$,
    a set of problem instances $\mathcal{I}$ where each problem instance is a function from $\mathcal{S}$ to $\R$,
    and an oracle function $\mathcal{O} : \mathcal{I} \to \mathcal{S}$ that generates an initial search point.
    Let $T_{A}(I, q_0)$ be the expected time that an algorithm $A$ needs to solve a problem instance $I \in \mathcal{I}$
    given the initial search point $q_0 \in \mathcal{P}$.
    The black-box complexity with an oracle of $\mathcal{P}$ for a class of algorithms $\mathcal{A}$ is $\min_{A \in \mathcal{A}} \max_{I \in \mathcal{I}} T_{A}(I, \mathcal{O}(I))$.
\end{definition}

Now we prove the main result of this section.

\begin{theorem}
    The unrestricted black-box complexity of \onemax of a large enough size $n$ with an oracle function giving a point $x^{(0)}$ with a Hamming distance of $D$ to the optimum is
    \begin{align*}
        BBC_{\om}(n, D) &\ge \left\lceil \log_{1 + \min\{D, n-D\}} \binom{n}{D} \right\rceil - 1\\
        BBC_{\om}(n, D) &\le \left(1 + \frac{1}{n}\right) \cdot 2 \log_{1 + \min\{D, n-D\}} \binom{n}{D}.
    \end{align*}
    and thus, asymptotically,
    \begin{equation*}
        BBC_{\om}(n, D) = \Theta\left(\log_{1 + \min\{D, n-D\}} \binom{n}{D}\right),
    \end{equation*}
    which for $1 \le D \le n/2$ simplifies to
    \begin{equation*}
        BBC_{\om}(n, D) = \Theta\left(\frac{D \log (n/D)}{\log (1 + D)}\right).
    \end{equation*}
\end{theorem}

\begin{proof}
The last statement essentially follows from the previous one using the well-known bound $(n/D)^D \le \binom{n}{D} \le (en/D)^D$:
\begin{equation*}
    \log_{1+D} \binom{n}{D} = \frac{\log \binom{n}{D}}{\log (1+D)} = \frac{D \log (\Theta(n/D))}{\log (1+D)} = \Theta\left(\frac{D \log (n/D)}{\log (1 + D)}\right).
\end{equation*}
Now we turn to proving the actual bounds on the black-box complexity.

To prove the \textbf{lower bound}, we note that if $x^{(0)}$ is known to be different from the optimum $x^*$ in $D$ bits, there are only $\binom{n}{D}$ points which can be the optimum,
so in terms of Lemma~\ref{lem:jansen} we have $|S| = \binom{n}{D}$. It remains to determine how many possible fitness values can there be for an arbitrary query $x$.

Let $d^{(0)}$ be the Hamming distance between $x$ and $x^{(0)}$. If there is an upper bound $k$ on the number of possible fitness values of $x$ that holds for each $d^{(0)}$,
then we can use this $k$ in Lemma~\ref{lem:jansen} even though the size of the set of all possible fitness values of $x$ over all $d^{(0)}$ is larger.
Let $d_1$ be the number of bits that are different in $x$ and $x^*$ but identical in $x$ and $x^{(0)}$. Let $d_2$ be the number of bits that are different in $x$ and $x^*$ and also different in $x$ and $x^{(0)}$. Then there are the following relations:
\begin{equation*}
    f(x) = n - (d_1 + d_2); \quad
    d^{(0)} = D + d_2 - d_1; \quad
    0 \le d_1 \le n - D; \quad
    0 \le d_2 \le D.
\end{equation*}

Since $d_2 - d_1$ is fixed by fixing $d^{(0)}$, they can change only synchronously, so, because the number of possible values of $d_1$ is limited by $n-D+1$ and the number of possible values of $d_2$
is limited by $D+1$, the number of allowed pairs $(d_1, d_2)$ is at most $1 + \min\{D, n - D\}$, so is the number of possible values of $f(x)$.
Hence we have $k = 1 + \min\{D, n - D\}$, and altogether the lower bound is, per Lemma~\ref{lem:jansen}
\begin{equation*}
    BBC_{\om}(n, D) \ge \left\lceil \log_{1 + \min\{D, n-D\}} \binom{n}{D} \right\rceil - 1.
\end{equation*}

To prove the \textbf{upper bound}, we use the classic idea of performing enough random queries so that, with probability at most $1 - \frac{1}{n}$, there is only one optimum that agrees with all these queries~\cite{erd63}.

We consider individual search points first in the same way as in~\cite{DoerrJKLWW11}.
If a point $x_d$ is at a distance $d = 2h$ from the optimum $x^*$, a random query has the same Hamming distance to both $x^*$ and $x_d$, and hence agrees with both,
if and only if:
\begin{itemize}
    \item in those $d$ bits which are different between $x^*$ and $x_d$, it differs from $x^*$ in exactly $h$ bits;
    \item in other $n - d$ bits, any combination is acceptable.
\end{itemize}
Hence, out of $2^{2h}$ combinations of these $d$ bits, only $\binom{2h}{h}$ do not invalidate $x_d$ as a potential optimum. The probability of a single random query to agree with both $x^*$ and $x_d$ is thus $\binom{2h}{h} \cdot 2^{-2h}$.
The probability of a point $x_d$ to ``survive'' for $t$ independent queries is $(\binom{2h}{h} \cdot 2^{-2h})^t$.

Next, we make use of the existence of the given point $x^{(0)}$, which already filters out all search points except for $\binom{n}{D}$ ones. Any potential optimum shall be at the Hamming distance $D$
from $x^{(0)}$, so the $d=2h$ bits that differentiate that optimum from $x^*$ should be equally distributed among those $D$ bits in which $x^*$ and $x^{(0)}$ coincide, and among the other $n-D$ bits.
Hence, there are $\binom{D}{h} \binom{n-D}{h}$ potential optima that are located at a distance of $2h$ from $x^*$.

We consider now, for each half-distance $h$, a union bound for the probability $p_h$ that any potential optimum at a distance $2h$ from $x^*$ still agrees with $t$ random queries:
\begin{equation}
    p_h \le \binom{D}{h} \binom{n-D}{h} \left(\binom{2h}{h} \cdot 2^{-2h}\right)^t. \label{eq:halfd}
\end{equation}

Our goal is to choose $t$ so that all $p_h$ are small enough, and, in particular, so that $\sum_{h=1}^{D} p_h \le 1/n$.
We choose $t$ to be $2 \log_{1 + \min\{D, n-D\}} \binom{n}{D}$, that is, twice as large as the lower bound.
Since all involved expressions are symmetric with respect to swapping $D$ and $n-D$, it is enough to consider only $D \le n/2$. 

We consider four cases.

\textbf{Case 1}: $D=1$. In this case, only $h=1$ is possible, and the resulting bound is
\begin{equation*}
    \sum_{h=1}^{D} p_h = p_1 \le \binom{1}{1} \binom{n-1}{1} \left(\binom{2}{1} \cdot 2^{-2}\right)^{2 \log_2 \binom{n}{1}} = \frac{n - 1}{n^2} \le \frac{1}{n}.
\end{equation*}

For larger values of $D$, we simplify~\eqref{eq:halfd} first. It follows from Stirling's formula that $\binom{2h}{h} \cdot 2^{-2h} \le \sqrt{1 / (\pi h)}$,
so we can rewrite the bound as follows.
\begin{equation}
    p_h \le \binom{D}{h} \binom{n-D}{h} (\pi h)^{-\log_{1 + D} \binom{n}{D}} = \binom{D}{h} \binom{n-D}{h} \binom{n}{D}^{-\frac{\ln (\pi h)}{\ln (1 + D)}}_{.} \label{eq:ph}
\end{equation}

\textbf{Case 2}: $D=\Theta(1)$. In this case, $\binom{n}{D} = \Theta(n^D)$, $\binom{D}{h} = \Theta(1)$, $\binom{n - D}{h} = \Theta(n^h)$, and from~\eqref{eq:ph} we obtain
\begin{equation*}
    p_h \le \Theta(1) \Theta\left(n^{h}\right) \Theta\left(n^{-D \frac{\ln (\pi h)}{\ln (1 + D)}}\right) = \Theta\left(n^{h - D \frac{\ln (\pi h)}{\ln (1 + D)}}\right).
\end{equation*}

The power function, $Q(D, h) = h - D \frac{\ln (\pi h)}{\ln (1 + D)}$, has $Q(D, h)'_h = 1 - \frac{D}{\ln (1+D)} \frac{1}{h}$, which grows monotonically with $h$ for $h \ge 1$. This means that with a fixed $D$ the function $Q(D, h)$ is convex downwards, and its maximum for $h \in [1..D]$ is either at $h = 1$ or at $h = D$. We now show that both $Q(D, 1)$ and $Q(D, D)$ are less than one.
\begin{itemize}
    \item $Q(D, 1) = 1 - \frac{D}{\ln (1+D)} \ln \pi$ has the following derivative:
    \begin{equation*}
        Q(D, 1)' = -\ln \pi \cdot \frac{\ln (1+D) - \frac{D}{1+D}}{(\ln (1+D))^2},
    \end{equation*}
    which is negative for $D \ge 2$ as $\ln (1+D) \ge \ln 3 > 1 > \frac{D}{1+D}$.
    Hence, $Q(D,1)$ is decreasing and $Q(D,1) \le Q(2, 1)$, which can be computed and shown to be less than $-1.08$.
    \item $Q(D, D) = D - D \frac{\ln (\pi D)}{\ln (1+D)}$ has the following derivative:
    \begin{equation*}
        Q(D, D)' = \frac{(\ln (1+D) - 1) (\ln (1+D) - \ln (\pi D)) - \frac{1}{D+1} \ln (\pi D)}{(\ln (1+D))^2},
    \end{equation*}
    which is negative for $D \ge 2$ as $\ln (1+D) \ge \ln 3 > 1.09$ and $\ln (\pi D) > \ln (1+D)$.
    Hence, $Q(D,D)$ is decreasing and $Q(D, D) < Q(2, 2)$, which can be computed and shown to be less than $-1.34$.
\end{itemize}
As a result, $p_h \le O(n^{-1.08})$, and a sum of $D+1 = \Theta(1)$ such values is $O(n^{-1.08}) < \frac{1}{n}$ for large enough $n$.

\textbf{Case 3}: $D = \omega(1), h \ge 0.4 D$. In this case, we further simplify~\eqref{eq:ph} using Lemma~\ref{lem:vm-corollary}:
\begin{equation*}
    p_h \le \binom{D}{h} \binom{n-D}{h} \binom{n}{D}^{-\frac{\ln (\pi h)}{\ln (1 + D)}} \le \binom{n}{D} \cdot \binom{n}{D}^{-\frac{\ln (\pi h)}{\ln (1 + D)}} = \binom{n}{D}^{1-\frac{\ln (\pi h)}{\ln (1 + D)}}_{.}
\end{equation*}

Note that $\frac{\ln (\pi h)}{\ln (1+D)} \ge \frac{\ln (0.4 \pi D)}{\ln (1+D)} > \frac{\ln (1.25 (1+D))}{\ln (1+D)} = 1 + \frac{\ln 1.25}{\ln (1+D)} > 1 + \frac{0.22}{\ln (1+D)}$ for sufficiently large $D$.
Using that, we continue as follows:
\begin{equation*}
    p_h \le \binom{n}{D}^{1-\frac{\ln (\pi h)}{\ln (1 + D)}} \le \left(\frac{n}{D}\right)^{-\frac{0.22D}{\ln (1+D)}}
        \le \begin{cases}
                \sqrt{n}^{-\omega(1)} &\text{if } \omega(1) \le D \le \sqrt{n}, \\
                2^{-\Omega(\sqrt{n} / \ln{n})} &\text{if } \sqrt{n} \le D \le n / 2,
            \end{cases}
\end{equation*}
which in both cases does not exceed $n^{-\omega(1)}$. A sum of at most $D + 1 = O(n)$ such values is also $n^{-\omega(1)}$, which is less than $1/n$ for sufficiently large $n$.

\textbf{Case 4}: $D = \omega(1), 1 \le h \le 0.4 D$. We shall prove that $\binom{D}{h}^{1 / \ln (\pi h)}$ increases with $h$ for all sufficiently large $D$ as long as $h \le 0.4 D$.
Once that is done, we start from~\eqref{eq:ph} as follows:
\begin{align*}
    p_h &\le \binom{D}{h} \binom{n-D}{h} \binom{n}{D}^{-\frac{\ln (\pi h)}{\ln (1 + D)}} \\
        &= \left(\binom{D}{h}^{\frac{1}{\ln (\pi h)}} \binom{n-D}{h}^{\frac{1}{\ln (\pi h)}} \binom{n}{D}^{-\frac{1}{\ln (1 + D)}}\right)^{\ln (\pi h)} \\
        &\le \left(\binom{D}{0.4D}^{\frac{1}{\ln (0.4 \pi D)}} \binom{n-D}{0.4D}^{\frac{1}{\ln (0.4 \pi D)}} \binom{n}{D}^{-\frac{1}{\ln (1 + D)}}\right)^{\ln (\pi h)} \\
        &= \left(\binom{D}{0.4D} \binom{n-D}{0.4D} \binom{n}{D}^{-\frac{\ln (0.4 \pi D)}{\ln (1 + D)}}\right)^{\frac{\ln (\pi h)}{\ln (0.4 \pi D)}}_{,}
\end{align*}
where the second inequality applies the statement above for both $\binom{D}{h}$ and $\binom{n-D}{h}$. Next we apply Case~3 almost entirely to the expression in parentheses (with a substitution $h = 0.4D$),
which results in
\begin{align*}
    p_h &\le \left( \left(\frac{n}{D}\right)^{\frac{-0.22D}{\ln (1+D)}} \right)^{\frac{\ln (\pi h)}{\ln (0.4 \pi D)}}
         \le \left( \left(\frac{n}{D}\right)^{\frac{-0.22D}{\ln (1+D)}} \right)^{\frac{\ln (\pi)}{\ln (0.4 \pi D)}} \\
        &= \left(\frac{n}{D}\right)^{-\Theta\left(\frac{D}{(\ln D)^2}\right)}
        \le \begin{cases}
                \sqrt{n}^{-\omega(1)} &\text{if } \omega(1) \le D \le \sqrt{n}, \\
                2^{-\Omega(\sqrt{n} / (\ln{n})^2)} &\text{if } \sqrt{n} \le D \le n / 2,
            \end{cases}
\end{align*}
which in both cases does not exceed $n^{-\omega(1)}$. A sum of at most $D + 1 = O(n)$ such values is also $n^{-\omega(1)}$, which is less than $1/n$ for sufficiently large $n$.

The only remaining thing is to prove that $\binom{D}{h}^{1 / \ln (\pi h)}$ increases with $h$ for all sufficiently large $D$ as long as $h \le 0.4 D$.
To do that, we consider a logarithm of the ratio of the consecutive values, which we want to be greater than zero.
\begin{align*}
    \ln \frac{\binom{D}{h+1}^{\frac{1}{\ln (\pi (h+1))}}}{\binom{D}{h}^{\frac{1}{\ln (\pi h)}}}
        &= \frac{\ln \binom{D}{h+1}}{\ln (\pi (h+1))} - \frac{\ln \binom{D}{h}}{\ln (\pi h)}
         = \frac{\ln \binom{D}{h} + \ln \frac{D-h}{h+1}}{\ln (\pi (h+1))} - \frac{\ln \binom{D}{h}}{\ln (\pi h)} \\
        &= \frac{\ln \frac{D-h}{h+1}}{\ln (\pi (h+1))} - \ln \binom{D}{h} \cdot \left( \frac{1}{\ln (\pi h)} - \frac{1}{\ln (\pi (h+1))} \right) \\
        &= \frac{\ln \frac{D-h}{h+1}}{\ln (\pi (h+1))} - \ln \binom{D}{h} \cdot \left( \frac{\ln (\pi (h+1)) - \ln (\pi h)}{\ln (\pi h) \ln (\pi (h+1))} \right) \\
        &= \frac{\ln \frac{D-h}{h+1} - \frac{\ln \binom{D}{h}}{\ln (\pi h)} (\ln (h+1) - \ln (h)) }{\ln (\pi (h+1))}\\
        &\ge \frac{\ln \frac{D-h}{h+1} - \ln \frac{eD}{h} \cdot \frac{h}{\ln (\pi h)} (\ln (h+1) - \ln (h)) }{\ln (\pi (h+1))}.
\end{align*}

We now show that $R(h) = \frac{h (\ln (h+1) - \ln (h))}{\ln (\pi h)}$ is a decreasing function for $h \ge 1$.
The derivative of the numerator $h (\ln (h+1) - \ln (h)) = -\sum_{i=1}^{\infty} \frac{(-1)^i}{i h^i}$ is $\sum_{i=2}^{\infty} \frac{(i - 1) \cdot (-1)^i}{i h^i}$,
which is positive and less than $\frac{1}{2h^2}$. The derivative of the denominator $\ln (\pi h)$ is $1/h$, which is at least two times greater.
As $R(1) = \frac{\ln 2}{\ln \pi} \in (0.6, 0.61)$, $R(h)'$ is negative.

It remains to show that $\ln \frac{D-h}{h+1} - \ln \frac{eD}{h} \cdot R(h)$ is positive for sufficiently large $D$. We consider two cases:
\begin{itemize}
    \item $1 \le h \le 0.02 D$. We use $R(h) < 0.61$ and continue as follows:
    \begin{align*}
        \ln \frac{D-h}{h+1} &- \ln \frac{eD}{h} \cdot R(h)
            \ge \ln\frac{0.98D}{h+1} - 0.61\ln \frac{eD}{h}
         \\&= \ln (0.98) + \ln (D) - \ln (h+1) - 0.61 (1 + \ln (D)) + 0.61 \ln (h)
         \\&= 0.39 (\ln (D) - \ln (h)) - (\ln (h+1) - \ln (h)) + \ln (0.98) - 0.61
         \\&\ge -0.39 \ln (0.02) - \ln (2) + \ln (0.98) - 0.61 > 0.2,
    \end{align*}
    where the last line uses $\ln (h+1) - \ln (h) \ge \ln 2$.
    \item $0.02 D \le h \le 0.4 D$. Here we note that $h = \omega(1)$ as $D = \omega(1)$,
          so we can use that 
          $R(h) = \frac{h \ln (1 + \frac{1}{h})}{\ln (\pi h)} = \Theta(\frac{1}{\ln h}) = o(1)$.
          This results in
    \begin{align*}
        \ln \frac{D-h}{h+1} &- \ln \frac{eD}{h} \cdot R(h)
            \ge \ln\frac{0.6D}{0.4D} - \ln \frac{eD}{0.02D} \cdot o(1)
            \ge 0.4 - 5 \cdot o(1),
    \end{align*}
    which is positive for sufficiently large $n$ (and, as a result, sufficiently large $D$ and $h$).
\end{itemize}

This finishes the proof for Case~4.
As the proven four cases cover the entire range of variables, the entire theorem is now proven.
\end{proof}

\section{Experiments}
\label{sec:experiments}
\newcommand{\thefontsize}{\scriptsize}

To highlight that the theoretically proven behavior of the algorithms is not strongly affected by the constants hidden in the asymptotic notation, we conducted experiments with the following settings:
\begin{itemize}
    \item fast \ollga with $\beta\in\{2.1, 2.3, 2.5, 2.7, 2.9\}$ and the upper limit $u=n/2$;
    \item self-adjusting \ollga, both in its original uncapped form and with $\lambda$ capped from above by $2 \log (n+1)$ as proposed in~\cite{BuzdalovD17};
    \item the mutation-only algorithms $(1+1)$~EA and RLS.
\end{itemize}
In all our experiments, the runtimes are averaged over 100 runs, unless said otherwise.

In Figure~\ref{exp:sqrt} we show the mean running times of these algorithms 
when they start in Hamming distance roughly $\sqrt{n}$ from the optimum.
For this experiment, to avoid possible strange effects from particular numbers, we used a different initialization for all algorithms,
namely that in the initial individual every bit was set to $0$ with probability $\frac{1}{\sqrt{n}}$ and it was set to $1$ otherwise. As the figure shows, all algorithms with a heavy-tailed choice of $\lambda$ outperformed the mutation-based algorithms,
which struggled from the coupon-collector effect.

\begin{figure}[!t]
\centering
\begin{tikzpicture}
\begin{axis}[width=0.9\linewidth, height=0.3\textheight, xmode=log, log base x=2, ymode=log, grid=major,
             xlabel={Problem size $n$}, ylabel={Evaluations / $\sqrt{nD}$},
             legend pos=north west, legend columns=2, cycle list name=myplotcycle,
             every axis plot/.append style={very thick}]
\addplot plot [error bars/.cd, y dir=both, y explicit] coordinates {(32,8.93)+-(0,3.734)(64,10.41)+-(0,4.181)(128,11.62)+-(0,3.314)(256,12.65)+-(0,3.046)(512,13.97)+-(0,3.015)(1024,14.31)+-(0,3.209)(2048,16.07)+-(0,3.419)(4096,17.05)+-(0,2.757)(8192,17.99)+-(0,2.703)(16384,20.74)+-(0,4.063)(32768,22.67)+-(0,3.284)(65536,24.81)+-(0,4.18)(131072,28.08)+-(0,5.315)(262144,33.18)+-(0,5.724)(524288,36.16)+-(0,5.957)(1048576,42.28)+-(0,6)(2097152,47.83)+-(0,6.535)(4194304,58.62)+-(0,9.278)};
\addlegendentry{\thefontsize $\lambda\in [1..2\ln (n + 1)]$};
\addplot plot [error bars/.cd, y dir=both, y explicit] coordinates {(32,8.854)+-(0,3.627)(64,9.281)+-(0,3.512)(128,11.98)+-(0,3.499)(256,11.84)+-(0,3.104)(512,13.33)+-(0,2.725)(1024,13.91)+-(0,2.035)(2048,14.35)+-(0,1.922)(4096,14.51)+-(0,2.061)(8192,14.77)+-(0,1.738)(16384,14.68)+-(0,1.3)(32768,15.03)+-(0,1.031)(65536,15.35)+-(0,1.071)(131072,15.26)+-(0,0.759)(262144,15.62)+-(0,0.7812)(524288,15.72)+-(0,0.6158)(1048576,15.72)+-(0,0.5515)(2097152,15.81)+-(0,0.5271)(4194304,15.84)+-(0,0.4097)};
\addlegendentry{\thefontsize $\lambda\in [1..n]$};
\addplot plot [error bars/.cd, y dir=both, y explicit] coordinates {(32,10.98)+-(0,5.743)(64,12.79)+-(0,5.424)(128,15.59)+-(0,5.613)(256,17.97)+-(0,5.377)(512,22.26)+-(0,6.79)(1024,23.79)+-(0,5.906)(2048,29.03)+-(0,7.752)(4096,31.28)+-(0,8.704)(8192,33.49)+-(0,7.757)(16384,35.27)+-(0,7.504)(32768,39.5)+-(0,8.207)(65536,42.52)+-(0,8.795)(131072,44.64)+-(0,8.427)(262144,48.64)+-(0,12.37)(524288,52.74)+-(0,13.09)(1048576,54.77)+-(0,13.4)(2097152,53.88)+-(0,7.8)(4194304,58.9)+-(0,13.46)};
\addlegendentry{\thefontsize $\lambda\sim\text{pow}(2.1)$};
\addplot plot [error bars/.cd, y dir=both, y explicit] coordinates {(32,10.46)+-(0,4.797)(64,12.09)+-(0,5.409)(128,15.09)+-(0,5.276)(256,17.76)+-(0,5.539)(512,20.71)+-(0,5.127)(1024,23.14)+-(0,5.499)(2048,26.07)+-(0,5.391)(4096,28.68)+-(0,5.455)(8192,32.76)+-(0,6.147)(16384,35.43)+-(0,5.789)(32768,37.77)+-(0,5.368)(65536,41.11)+-(0,6.503)(131072,44)+-(0,6.201)(262144,47.92)+-(0,5.266)(524288,52.01)+-(0,6.408)(1048576,55.28)+-(0,6.346)(2097152,58.84)+-(0,8.104)(4194304,62.01)+-(0,3.476)};
\addlegendentry{\thefontsize $\lambda\sim\text{pow}(2.3)$};
\addplot plot [error bars/.cd, y dir=both, y explicit] coordinates {(32,9.38)+-(0,4.865)(64,11.8)+-(0,5.368)(128,14.38)+-(0,6.239)(256,17.22)+-(0,5.261)(512,20.41)+-(0,6.149)(1024,24.36)+-(0,5.727)(2048,27.74)+-(0,5.764)(4096,30.98)+-(0,6.063)(8192,36.15)+-(0,6.316)(16384,39.27)+-(0,6.596)(32768,44.49)+-(0,6.861)(65536,50.12)+-(0,6.268)(131072,55.59)+-(0,7.039)(262144,61.49)+-(0,6.843)(524288,67)+-(0,5.776)(1048576,75.23)+-(0,7.042)(2097152,81.18)+-(0,5.673)(4194304,90.45)+-(0,6.226)};
\addlegendentry{\thefontsize $\lambda\sim\text{pow}(2.5)$};
\addplot plot [error bars/.cd, y dir=both, y explicit] coordinates {(32,10.02)+-(0,4.859)(64,11.51)+-(0,4.627)(128,14.07)+-(0,6.508)(256,17.93)+-(0,5.773)(512,20.65)+-(0,6.861)(1024,25.94)+-(0,6.4)(2048,29.35)+-(0,7.228)(4096,35.02)+-(0,7.85)(8192,40.58)+-(0,7.254)(16384,46.56)+-(0,8.224)(32768,54.08)+-(0,8.153)(65536,63.87)+-(0,10.39)(131072,70.85)+-(0,8.304)(262144,80.76)+-(0,10.28)(524288,94.19)+-(0,9.941)(1048576,109.1)+-(0,11.06)(2097152,124.7)+-(0,12.39)(4194304,142.4)+-(0,11.53)};
\addlegendentry{\thefontsize $\lambda\sim\text{pow}(2.7)$};
\addplot plot [error bars/.cd, y dir=both, y explicit] coordinates {(32,9.932)+-(0,5.114)(64,11.44)+-(0,4.734)(128,15.36)+-(0,5.551)(256,18.59)+-(0,6.023)(512,22.23)+-(0,6.051)(1024,28.18)+-(0,7.805)(2048,34.47)+-(0,8.684)(4096,38.63)+-(0,8.316)(8192,47.78)+-(0,9.093)(16384,57.22)+-(0,9.186)(32768,66.39)+-(0,10.45)(65536,79.58)+-(0,14.22)(131072,97.39)+-(0,14.85)(262144,110.5)+-(0,15.64)(524288,137.5)+-(0,26.78)(1048576,159.9)+-(0,21.54)(2097152,183.3)+-(0,22.8)(4194304,224)+-(0,29.1)};
\addlegendentry{\thefontsize $\lambda\sim\text{pow}(2.9)$};
\addplot plot [error bars/.cd, y dir=both, y explicit] coordinates {(32,10.06)+-(0,5.607)(64,12.05)+-(0,4.94)(128,17.3)+-(0,7.69)(256,22.31)+-(0,8.722)(512,29.87)+-(0,9.323)(1024,39.85)+-(0,12.23)(2048,50.14)+-(0,13.89)(4096,66.38)+-(0,17.8)(8192,83.17)+-(0,20.81)(16384,106.9)+-(0,21.99)(32768,133)+-(0,31.26)(65536,164.7)+-(0,34.79)(131072,209.1)+-(0,44.49)(262144,271)+-(0,57.31)(524288,342)+-(0,62.12)(1048576,419.8)+-(0,71.3)(2097152,520.5)+-(0,89.61)(4194304,630.9)+-(0,90.47)};
\addlegendentry{\thefontsize (1+1) EA};
\addplot plot [error bars/.cd, y dir=both, y explicit] coordinates {(32,6.145)+-(0,4.1)(64,8.104)+-(0,3.763)(128,10.37)+-(0,4.039)(256,12.62)+-(0,4.653)(512,17.77)+-(0,5.886)(1024,22.38)+-(0,6.094)(2048,30.56)+-(0,8.864)(4096,37.46)+-(0,9.897)(8192,47.67)+-(0,12.52)(16384,61.24)+-(0,13.9)(32768,75.16)+-(0,13.07)(65536,95.12)+-(0,18.83)(131072,122.1)+-(0,22.09)(262144,155.5)+-(0,25.34)(524288,185.8)+-(0,28.89)(1048576,245.3)+-(0,43.6)(2097152,296.9)+-(0,46.55)(4194304,368.8)+-(0,59.21)};
\addlegendentry{\thefontsize RLS};
\end{axis}
\end{tikzpicture}
% \par
\caption{Mean runtimes and their standard deviation of different algorithms on \onemax with initial Hamming distance $D$ from the optimum equal to $\sqrt{n}$ in expectation.
By $\lambda \in [1..u]$ we denote the self-adjusting parameter choice via the one-fifth rule in the interval $[1..u]$. 
The indicated confidence interval for each value $X$ is $[E[X] - \sigma(X), E[x] + \sigma(X)]$, where $\sigma(X)$ is the standard deviation of~$X$.
The runtime is normalized by $\sqrt{nD}$, so that the plot of the self-adjusting \ollga is a horizontal line.}
\label{exp:sqrt}
%\end{figure}
%\begin{figure}[!t]
\begin{tikzpicture}
\begin{axis}[width=0.9\linewidth, height=0.3\textheight, xmode=log, log base x=2, ymode=log, grid=major,
             xlabel={Problem size $n$}, ylabel={Evaluations / $\sqrt{nD}$},
             legend pos=north west, legend columns=2, cycle list name=myplotcycle,
             every axis plot/.append style={very thick}]
\addplot plot [error bars/.cd, y dir=both, y explicit] coordinates {(32,8.254)+-(0,5.073)(64,10.09)+-(0,4.979)(128,11.53)+-(0,4.668)(256,12.39)+-(0,4.987)(512,14.09)+-(0,5.716)(1024,14.63)+-(0,5.695)(2048,16.24)+-(0,6.031)(4096,22.01)+-(0,8.861)(8192,24.76)+-(0,10.7)(16384,29.38)+-(0,12.19)(32768,35.17)+-(0,14.05)(65536,51.25)+-(0,23.72)(131072,61.92)+-(0,25.04)(262144,89.11)+-(0,38.45)(524288,108.3)+-(0,39.1)(1048576,142.1)+-(0,47.31)(2097152,198)+-(0,76.98)(4194304,266.1)+-(0,108.9)};
\addlegendentry{\thefontsize $\lambda\in [1..2\ln (n + 1)]$};
\addplot plot [error bars/.cd, y dir=both, y explicit] coordinates {(32,9.17)+-(0,5.222)(64,10.75)+-(0,4.612)(128,10.24)+-(0,4.902)(256,12.67)+-(0,4.506)(512,12.72)+-(0,5.397)(1024,12.52)+-(0,4.938)(2048,13.5)+-(0,5.371)(4096,13.66)+-(0,4.734)(8192,13.53)+-(0,4.087)(16384,14.06)+-(0,4.051)(32768,13.19)+-(0,4.094)(65536,13.69)+-(0,4.445)(131072,14.69)+-(0,4.173)(262144,13.85)+-(0,4.131)(524288,14.05)+-(0,3.453)(1048576,14.1)+-(0,3.696)(2097152,14.45)+-(0,3.397)(4194304,13.83)+-(0,3.295)};
\addlegendentry{\thefontsize $\lambda\in [1..n]$};
\addplot plot [error bars/.cd, y dir=both, y explicit] coordinates {(32,9.618)+-(0,5.059)(64,11.85)+-(0,7.875)(128,15.78)+-(0,8.51)(256,18.61)+-(0,10.03)(512,21.4)+-(0,11.67)(1024,23.84)+-(0,11.63)(2048,25.04)+-(0,11.97)(4096,28.09)+-(0,14.57)(8192,32.81)+-(0,15.77)(16384,36.82)+-(0,19.73)(32768,37.63)+-(0,15.34)(65536,40.41)+-(0,19.79)(131072,46.01)+-(0,27.08)(262144,50.14)+-(0,30.89)(524288,47.94)+-(0,18.8)(1048576,61.05)+-(0,34.68)(2097152,51.59)+-(0,23.93)(4194304,53.97)+-(0,22.3)};
\addlegendentry{\thefontsize $\lambda\sim\text{pow}(2.1)$};
\addplot plot [error bars/.cd, y dir=both, y explicit] coordinates {(32,9.335)+-(0,4.837)(64,12.09)+-(0,6.642)(128,15.94)+-(0,8.376)(256,16.6)+-(0,6.689)(512,18.83)+-(0,8.611)(1024,23.41)+-(0,9.5)(2048,30.81)+-(0,12.91)(4096,29.89)+-(0,13.62)(8192,36.66)+-(0,17.51)(16384,41.7)+-(0,16.53)(32768,43.87)+-(0,18.65)(65536,52.31)+-(0,19.93)(131072,52.73)+-(0,19.84)(262144,63.61)+-(0,24.78)(524288,72.05)+-(0,31.18)(1048576,76.72)+-(0,29.18)(2097152,86.64)+-(0,31.04)(4194304,99.41)+-(0,39.47)};
\addlegendentry{\thefontsize $\lambda\sim\text{pow}(2.3)$};
\addplot plot [error bars/.cd, y dir=both, y explicit] coordinates {(32,8.423)+-(0,6.111)(64,11.7)+-(0,6.335)(128,15.3)+-(0,8.09)(256,17.73)+-(0,10.45)(512,21.81)+-(0,10.65)(1024,25.43)+-(0,13.23)(2048,29.88)+-(0,12.52)(4096,36.5)+-(0,13.54)(8192,42.51)+-(0,16.61)(16384,53.98)+-(0,21.43)(32768,57.55)+-(0,25.66)(65536,73.81)+-(0,26.59)(131072,84.26)+-(0,29.52)(262144,95.22)+-(0,36.46)(524288,117.2)+-(0,43.31)(1048576,133.5)+-(0,55.07)(2097152,168.4)+-(0,61.65)(4194304,194.1)+-(0,63.62)};
\addlegendentry{\thefontsize $\lambda\sim\text{pow}(2.5)$};
\addplot plot [error bars/.cd, y dir=both, y explicit] coordinates {(32,9.596)+-(0,5.456)(64,11.73)+-(0,7.415)(128,14.49)+-(0,7.521)(256,17.36)+-(0,8.622)(512,20.08)+-(0,10.91)(1024,31.29)+-(0,15.03)(2048,38.62)+-(0,18.59)(4096,45.13)+-(0,17.94)(8192,54.46)+-(0,21.05)(16384,66.84)+-(0,25.88)(32768,88.07)+-(0,31.05)(65536,119)+-(0,52.73)(131072,131)+-(0,50.57)(262144,156.3)+-(0,60.61)(524288,220.9)+-(0,85.1)(1048576,273.7)+-(0,98.87)(2097152,321.9)+-(0,120)(4194304,415.6)+-(0,134.1)};
\addlegendentry{\thefontsize $\lambda\sim\text{pow}(2.7)$};
\addplot plot [error bars/.cd, y dir=both, y explicit] coordinates {(32,9.053)+-(0,6.763)(64,11.93)+-(0,7.688)(128,15.93)+-(0,8.548)(256,20.1)+-(0,10.17)(512,26.96)+-(0,11.78)(1024,33.39)+-(0,17.82)(2048,40.29)+-(0,18.48)(4096,57.69)+-(0,22.65)(8192,67.86)+-(0,30.6)(16384,90.5)+-(0,40.47)(32768,109.8)+-(0,44.69)(65536,151.8)+-(0,63.41)(131072,202.8)+-(0,74.75)(262144,263)+-(0,112.5)(524288,330.3)+-(0,135.7)(1048576,458.6)+-(0,177)(2097152,596.2)+-(0,204.9)(4194304,790.3)+-(0,299.8)};
\addlegendentry{\thefontsize $\lambda\sim\text{pow}(2.9)$};
\addplot plot [error bars/.cd, y dir=both, y explicit] coordinates {(32,9.434)+-(0,5.99)(64,12.15)+-(0,8.613)(128,19.02)+-(0,10.25)(256,27.85)+-(0,14.66)(512,38.26)+-(0,18.33)(1024,54.2)+-(0,25.04)(2048,72.54)+-(0,33.45)(4096,99.36)+-(0,56.35)(8192,147.5)+-(0,61.82)(16384,207)+-(0,90.13)(32768,256.2)+-(0,102.5)(65536,417.3)+-(0,172.3)(131072,554.5)+-(0,248)(262144,736.8)+-(0,261.4)(524288,1093)+-(0,475.3)(1048576,1450)+-(0,570.8)(2097152,2163)+-(0,1093)(4194304,3214)+-(0,1539)};
\addlegendentry{\thefontsize (1+1) EA};
\addplot plot [error bars/.cd, y dir=both, y explicit] coordinates {(32,5.617)+-(0,3.585)(64,7.738)+-(0,5.151)(128,11.6)+-(0,6.623)(256,13.83)+-(0,8.088)(512,22.62)+-(0,10.7)(1024,27.94)+-(0,13.45)(2048,42.28)+-(0,20.16)(4096,57.55)+-(0,24.05)(8192,84.66)+-(0,38.99)(16384,112.5)+-(0,60.03)(32768,171.7)+-(0,78.58)(65536,218.3)+-(0,80.15)(131072,307.2)+-(0,136.5)(262144,465.8)+-(0,179.4)(524288,595.9)+-(0,210.5)(1048576,913.3)+-(0,378.1)(2097152,1239)+-(0,485.9)(4194304,1670)+-(0,647.3)};
\addlegendentry{\thefontsize RLS};
\end{axis}
\end{tikzpicture}
% \par
\caption{Mean runtimes and their standard deviation of different algorithms on \onemax with initial Hamming distance $D$ from the optimum equal to $\log(n+1)$ in expectation.
%By $\lambda \in [1..u]$ we denote the self-adjusting parameter choice via the one-fifth rule in the interval $[1..u]$. 
%The indicated confidence interval for each value $X$ is $[E[X] - \sigma(X), E[x] + \sigma(X)]$, where $\sigma(X)$ is the standard deviation of~$X$.
%The runtime is normalized over $\sqrt{nd}$, so that the plot of the self-adjusting \ollga is a horizontal line.
}
\label{exp:log}
\end{figure}

We can also see that the logarithmically capped self-adjusting version, although initially looking well, starts to lose ground when the problem size grows. For $n=2^{22}$ it has roughly the same running time as the \ollga with $\beta \le 2.3$. To see whether this effect is stronger when the algorithm
starts closer to the optimum, we also conducted the series of experiments when the initial distance to the optimum being only logarithmic. The results
are presented in Fig.~\ref{exp:log}. The logarithmically capped version loses already
to $\beta=2.5$ this time, indicating that the fast \ollga is faster close to the optimum than that.

In order to understand better how different choices for $\beta$ behave in practice when the starting point also varies,
we conducted additional experiments with problem size $n=2^{22}$, but with expected initial distances $D$ equal to $2^i$ for $i \in [0..21]$.
We also normalize all the expected running times by $\sqrt{nD}$, but this time we vary $D$.
The results are presented in Fig.~\ref{exp:comp}, where the results are averaged over 10 runs for distances between $2^9$ and $2^{20}$ due to the lack of computational budget. At distances smaller than $2^{12}$ the smaller $\beta > 2$ perform noticeably better, as specified in Table~\ref{tbl:runtime},
however for larger distances the constant factors start to influence the picture: for instance, $\beta = 2.1$ is outperformed by $\beta = 2.3$ at distances greater than $2^{13}$.

%While the missing data (these experiments did not finish in time, we will have them for the final version) for distances closer to $n/2$ for $\beta \le 2.3$, 
%cannot show a complete and convincing picture for these $\beta \le 2.3$,
%one can still see that at $d\in[2^{10}..2^{13}]$ the algorithms with $\beta=2.1$ and $\beta=2.3$ produce intersecting plots, and at $d=2^{14}$ the one with $\beta=2.1$ is already outperformed
%by $\beta=2.3$ with large confidence.

We also included in this figure a few algorithms with $\beta < 2$, namely $\beta\in\{1.5,1.7,1.9\}$, which have a distribution upper bound of $\sqrt{n}$, for which running times are averaged over 100 runs.
From Fig.~\ref{exp:comp} we can see that the running time of these algorithms increases with decreasing $\beta$ just as in Table~\ref{tbl:runtime} for comparatively large distances ($2^{12}$ and up),
however for smaller distances their order is reversed, which shows that constant factors still play a significant role.

%However, we can see that even at relatively large distances, such as $d=2^{10}$, the runtime of these algorithms is noticeably smaller than the runtime of $\beta=2.1$ with a linear distribution upper bound.
%This means that an effect of a too generous bound, which has to have an asymptotically inferior impact on the runtime when $\beta>2$, can still be visible in practice.

\begin{figure}[!t]
\begin{tikzpicture}
\begin{axis}[width=0.7\linewidth, height=0.37\textheight, xmode=log, log base x=2, ymode=log, grid=major, ymin=1.2,
             xlabel={Distance to optimum $D$}, ylabel={Evaluations / $\sqrt{nD}$},
             legend pos=outer north east, cycle list name=myplotcycle,
             every axis plot/.append style={very thick}]
             \addplot plot [error bars/.cd, y dir=both, y explicit] coordinates {(1,264.9)+-(0,332)(2,322)+-(0,227.1)(4,330.9)+-(0,226.1)(8,319.1)+-(0,148.5)(16,269.9)+-(0,102.5)(32,213.6)+-(0,63.45)(64,189.8)+-(0,57.42)(128,154.8)+-(0,45.47)(256,117.4)+-(0,24.22)(512,100.5)+-(0,18.98)(1024,79.52)+-(0,21.8)(2048,60.15)+-(0,8.884)(4096,47.18)+-(0,7.378)(8192,35.96)+-(0,1.791)(16384,30.31)+-(0,2.745)(32768,26.16)+-(0,2.308)(65536,23.14)+-(0,0.9264)(131072,20.46)+-(0,0.7859)(262144,18.25)+-(0,0.9256)(524288,16.01)+-(0,0.4698)(1048576,13.65)+-(0,0.4906)(2097152,10.88)+-(0,0.3404)};
             \addlegendentry{\thefontsize $\lambda\in [1..2\ln (n + 1)]$};
             \addplot plot [error bars/.cd, y dir=both, y explicit] coordinates {(1,12.62)+-(0,9.758)(2,13.77)+-(0,6.508)(4,13.09)+-(0,4.128)(8,14.16)+-(0,3.713)(16,14.15)+-(0,2.753)(32,15.1)+-(0,2.202)(64,15.05)+-(0,1.622)(128,15.45)+-(0,1.416)(256,15.58)+-(0,0.8443)(512,15.54)+-(0,0.6976)(1024,16.17)+-(0,0.6433)(2048,15.85)+-(0,0.2113)(4096,15.98)+-(0,0.2897)(8192,15.74)+-(0,0.2668)(16384,15.79)+-(0,0.1366)(32768,15.63)+-(0,0.08493)(65536,15.53)+-(0,0.06992)(131072,15.15)+-(0,0.04679)(262144,14.55)+-(0,0.04327)(524288,13.44)+-(0,0.03713)(1048576,11.69)+-(0,0.01774)(2097152,9.47)+-(0,0.01571)};
             \addlegendentry{\thefontsize $\lambda\in [1..n]$};
             \addplot plot [error bars/.cd, y dir=both, y explicit] coordinates {(1,39.57)+-(0,37.02)(2,42.9)+-(0,32.11)(4,58.14)+-(0,42.17)(8,56.6)+-(0,32.39)(16,65.96)+-(0,33.41)(32,60.9)+-(0,21.94)(64,68.93)+-(0,47.78)(128,66.4)+-(0,30.81)(256,63.62)+-(0,24.77)(512,70.48)+-(0,29.94)(1024,56.14)+-(0,9.314)(2048,63.99)+-(0,19.13)(4096,54.43)+-(0,4.721)(8192,51.59)+-(0,3.185)(16384,55.81)+-(0,17.02)(32768,51.57)+-(0,7.501)(65536,47.25)+-(0,3.055)(131072,42.9)+-(0,2.971)(262144,40.92)+-(0,3.04)(524288,38.36)+-(0,2.001)(1048576,33.5)+-(0,1.294)};
             \addlegendentry{\thefontsize $\lambda\sim\text{pow}(2.1)$};
             \addplot plot [error bars/.cd, y dir=both, y explicit] coordinates {(1,61.71)+-(0,56.3)(2,85.62)+-(0,52.87)(4,92.74)+-(0,65.45)(8,99.77)+-(0,41.04)(16,99.07)+-(0,35.14)(32,96.28)+-(0,23.09)(64,95.06)+-(0,25.12)(128,86.99)+-(0,18.36)(256,80.41)+-(0,13.26)(512,73.91)+-(0,6.717)(1024,67.29)+-(0,4.111)(2048,65.25)+-(0,8.738)(4096,56.84)+-(0,4.415)(8192,50.35)+-(0,1.93)(16384,45.94)+-(0,1.039)(32768,43.72)+-(0,3.304)(65536,37.39)+-(0,0.8303)(131072,33.79)+-(0,1.423)(262144,30.26)+-(0,1.39)(524288,26.01)+-(0,1.37)(1048576,22.31)+-(0,0.6869)};
             \addlegendentry{\thefontsize $\lambda\sim\text{pow}(2.3)$};
             \addplot plot [error bars/.cd, y dir=both, y explicit] coordinates {(1,178)+-(0,171.1)(2,195)+-(0,134.5)(4,211)+-(0,95.85)(8,203.8)+-(0,77.96)(16,199.3)+-(0,68.14)(32,182.1)+-(0,40.47)(64,173.9)+-(0,39.8)(128,149.9)+-(0,23.29)(256,136.1)+-(0,15.88)(512,126.5)+-(0,18.64)(1024,107)+-(0,13.44)(2048,88.9)+-(0,6.161)(4096,77.96)+-(0,3.7)(8192,69.1)+-(0,2.38)(16384,57.96)+-(0,1.518)(32768,49.49)+-(0,1.149)(65536,41.61)+-(0,1.013)(131072,35.6)+-(0,0.8576)(262144,30.23)+-(0,0.4872)(524288,25.17)+-(0,0.3194)(1048576,20.61)+-(0,0.2659)(2097152,16.47)+-(0,0.2419)};
             \addlegendentry{\thefontsize $\lambda\sim\text{pow}(2.5)$};
             \addplot plot [error bars/.cd, y dir=both, y explicit] coordinates {(1,410)+-(0,363.6)(2,469.2)+-(0,411.4)(4,486.7)+-(0,286)(8,466.6)+-(0,229.9)(16,422.7)+-(0,128.1)(32,376.2)+-(0,113.7)(64,328.6)+-(0,73.42)(128,285.2)+-(0,44.4)(256,233.6)+-(0,32.32)(512,202.3)+-(0,16.15)(1024,181.8)+-(0,21.49)(2048,145.1)+-(0,9.932)(4096,121.5)+-(0,9.805)(8192,101.3)+-(0,8.064)(16384,80.32)+-(0,3.482)(32768,64.81)+-(0,2.898)(65536,52.66)+-(0,1.661)(131072,42.47)+-(0,1.155)(262144,35.84)+-(0,1.463)(524288,28.65)+-(0,0.7598)(1048576,22.43)+-(0,0.3778)(2097152,17.54)+-(0,0.4512)};
             \addlegendentry{\thefontsize $\lambda\sim\text{pow}(2.7)$};
             \addplot plot [error bars/.cd, y dir=both, y explicit] coordinates {(1,800.3)+-(0,731.8)(2,919.1)+-(0,612.5)(4,929.5)+-(0,562.7)(8,806.2)+-(0,349.4)(16,800.7)+-(0,307.4)(32,686.6)+-(0,190.4)(64,582.2)+-(0,121.1)(128,501.9)+-(0,97.34)(256,417.8)+-(0,78.1)(512,348)+-(0,94.45)(1024,263.9)+-(0,30.92)(2048,210.3)+-(0,16.2)(4096,174.9)+-(0,16.07)(8192,143.9)+-(0,11.88)(16384,110.7)+-(0,8.234)(32768,87.55)+-(0,6.231)(65536,67.76)+-(0,3.129)(131072,55.45)+-(0,2.277)(262144,43.26)+-(0,1.602)(524288,34.35)+-(0,1.54)(1048576,27)+-(0,1.511)(2097152,20.05)+-(0,0.9004)};
             \addlegendentry{\thefontsize $\lambda\sim\text{pow}(2.9)$};
             \addplot plot [error bars/.cd, y dir=both, y explicit] coordinates {(1,3205)+-(0,3224)(2,3541)+-(0,2506)(4,3764)+-(0,2187)(8,3354)+-(0,1420)(16,3115)+-(0,1134)(32,2458)+-(0,863.4)(64,2172)+-(0,559)(128,1608)+-(0,387.9)(256,1362)+-(0,277)(512,1092)+-(0,209.3)(1024,804.4)+-(0,123.7)(2048,654.4)+-(0,166.1)(4096,521.4)+-(0,111.5)(8192,366.4)+-(0,38.7)(16384,280.9)+-(0,45.71)(32768,208.9)+-(0,17.36)(65536,157.1)+-(0,18.68)(131072,118.4)+-(0,9.369)(262144,88.11)+-(0,9.601)(524288,67.61)+-(0,5.27)(1048576,48.37)+-(0,4.963)(2097152,36.01)+-(0,3.485)};
             \addlegendentry{\thefontsize (1+1) EA};
             \addplot plot [error bars/.cd, y dir=both, y explicit] coordinates {(1,1967)+-(0,1849)(2,2371)+-(0,1698)(4,1957)+-(0,1059)(8,1946)+-(0,863.5)(16,1793)+-(0,686.2)(32,1578)+-(0,492.5)(64,1189)+-(0,319.6)(128,959.7)+-(0,237.4)(256,768.2)+-(0,140.7)(512,585.3)+-(0,71.89)(1024,478.1)+-(0,91.47)(2048,353)+-(0,46.02)(4096,306.9)+-(0,61.45)(8192,221.2)+-(0,22.7)(16384,169.7)+-(0,17.45)(32768,124.4)+-(0,12.99)(65536,92.32)+-(0,7.695)(131072,67.96)+-(0,7.38)(262144,54.29)+-(0,4.651)(524288,38.6)+-(0,2.57)(1048576,28.79)+-(0,2.669)(2097152,21.43)+-(0,2.033)};
             \addlegendentry{\thefontsize RLS};
             \addplot plot [error bars/.cd, y dir=both, y explicit] coordinates {(1,18.49)+-(0,16.74)(2,16.18)+-(0,11.74)(4,21.91)+-(0,10.91)(8,20.47)+-(0,7.326)(16,22.15)+-(0,5.777)(32,22.83)+-(0,4.455)(64,22.9)+-(0,3.124)(128,25.07)+-(0,2.55)(256,26.93)+-(0,2.031)(512,28.79)+-(0,1.515)(1024,31.62)+-(0,1.045)(2048,34.28)+-(0,0.8975)(4096,37.75)+-(0,0.644)(8192,41.71)+-(0,0.5644)(16384,46.32)+-(0,0.5121)};
             \addlegendentry{\thefontsize $\lambda\sim\text{pow}(1.5)*$};
             \addplot plot [error bars/.cd, y dir=both, y explicit] coordinates {(1,21.45)+-(0,19.56)(2,24.08)+-(0,16.84)(4,25.77)+-(0,12.91)(8,26.34)+-(0,10.01)(16,25.43)+-(0,7.23)(32,26.46)+-(0,5.455)(64,27.77)+-(0,3.992)(128,28.27)+-(0,3.316)(256,29.4)+-(0,2.23)(512,30.37)+-(0,1.764)(1024,31.48)+-(0,1.249)(2048,33.04)+-(0,1.022)(4096,34.76)+-(0,0.7229)(8192,36.77)+-(0,0.6249)(16384,38.82)+-(0,0.4412)};
             \addlegendentry{\thefontsize $\lambda\sim\text{pow}(1.7)*$};
             \addplot plot [error bars/.cd, y dir=both, y explicit] coordinates {(1,30.71)+-(0,27.96)(2,32.74)+-(0,21.36)(4,34.59)+-(0,16.67)(8,36.1)+-(0,14.02)(16,37.89)+-(0,11.29)(32,35.99)+-(0,7.03)(64,36.42)+-(0,6.298)(128,35.41)+-(0,4.655)(256,35.54)+-(0,2.992)(512,35.57)+-(0,2.576)(1024,34.81)+-(0,1.684)(2048,35.27)+-(0,1.163)(4096,35.11)+-(0,0.823)(8192,35.28)+-(0,0.524)(16384,35.29)+-(0,0.468)};
             \addlegendentry{\thefontsize $\lambda\sim\text{pow}(1.9)*$};
\end{axis}
\end{tikzpicture}
% \par
\caption{Mean runtimes and their standard deviation of different algorithms on \onemax with problem size $n=2^{22}$ and with initial Hamming distances
of the form $D = 2^i$ for $0 \le i \le 21$.
%By $\lambda \in [1..u]$ we denote the self-adjusting parameter choice via the one-fifth rule in the interval $[1..u]$.
The starred versions of the fast \ollga have a distribution upper bound of $\sqrt{n}$.
%The indicated confidence interval for each value $X$ is $[E[X] - \sigma(X), E[x] + \sigma(X)]$, where $\sigma(X)$ is the standard deviation of~$X$.
%The runtime is normalized over $\sqrt{nd}$, so that the plot of the self-adjusting \ollga is a horizontal line.
}
\label{exp:comp}
\end{figure}

We also considered the variation of the \ollga with population size, mutation strength and crossover bias chosen independently from heavy-tailed distributions, proposed in~\cite{AntipovBD23}.
This algorithm is able to mimic local searches, but in the same time it is capable of solving hard multimodal problems within only a polynomial factor of the optimal performance.
To determine whether this algorithm can benefit from having a good solution on start, we performed additional experiments. While our problem is still \onemax, we consider starting at a distance of $\sqrt{n}$
from the optimum, similarly to the above.

The distribution parameters are as follows.
The population size is controlled by $\beta_{\lambda}$, which should be at least 2 for good performance according to~\cite{AntipovBD23}, so we choose it from $\beta_{\lambda} = \{2.0, 2.2, 2.4, 2.6, 2.8, 3.0, 3.2\}$.
The mutation strength is controlled by $\beta_p$, and the crossover bias is controlled by $\beta_c$, where both are to be selected from $[1;3]$.
Similarly to~\cite{AntipovBD23}, we choose them to be equal and denote $\beta_{pc} = \beta_p = \beta_c$.
In our experiments, $\beta_{pc}$ takes values from $\{1.0, 1.2, 1.4, 1.6, 1.8, 2.0, 2.2\}$ to cover the interesting part of the interval.

\begin{figure}[!t]
\begin{tikzpicture}
    \begin{loglogaxis}[width=\linewidth, height=0.35\textheight,
                       legend pos=north west, legend columns=2, ymin=0.7,
                       cycle list name=threeheavycycle, every axis plot/.append style={very thick},
                       grid=major, log base x=2, enlarge x limits=0.05,
                       enlarge y limits=false, ymin=1, ymax=30,
                       xlabel={Problem size $n$}, ylabel={Runtime / $n \ln n$}]
        \addplot plot [error bars/.cd, y dir=both, y explicit] coordinates {(8,3.526)+-(0,3.172)(16,4.672)+-(0,3.112)(32,5.272)+-(0,3.277)(64,6.223)+-(0,2.651)(128,6.918)+-(0,2.454)(256,8.258)+-(0,2.639)(512,9.677)+-(0,2.676)(1024,11.03)+-(0,3.016)(2048,12.28)+-(0,2.382)(4096,13.15)+-(0,2.73)(8192,14.52)+-(0,2.335)(16384,15.31)+-(0,2.522)};
        \addlegendentry{$\beta_{pc}=1.0$};
        \addplot plot [error bars/.cd, y dir=both, y explicit] coordinates {(8,3.637)+-(0,2.738)(16,3.533)+-(0,1.738)(32,3.221)+-(0,1.302)(64,3.44)+-(0,1.378)(128,3.488)+-(0,1.185)(256,3.126)+-(0,1.006)(512,3.096)+-(0,0.8529)(1024,2.943)+-(0,0.7909)(2048,2.886)+-(0,0.6717)(4096,2.801)+-(0,0.5509)(8192,2.562)+-(0,0.5344)(16384,2.555)+-(0,0.3928)};
        \addlegendentry{$\beta_{pc}=1.8$};
        \addplot plot [error bars/.cd, y dir=both, y explicit] coordinates {(8,3.279)+-(0,2.623)(16,4.547)+-(0,3.057)(32,4.529)+-(0,2.427)(64,5.357)+-(0,2.689)(128,5.86)+-(0,2.192)(256,6.047)+-(0,1.924)(512,6.115)+-(0,1.731)(1024,6.862)+-(0,1.712)(2048,7.198)+-(0,1.572)(4096,7.593)+-(0,1.448)(8192,7.84)+-(0,1.433)(16384,7.834)+-(0,1.204)};
        \addlegendentry{$\beta_{pc}=1.2$};
        \addplot plot [error bars/.cd, y dir=both, y explicit] coordinates {(8,2.908)+-(0,2.158)(16,3.71)+-(0,1.881)(32,3.175)+-(0,1.837)(64,3.016)+-(0,1.266)(128,3.212)+-(0,1.226)(256,2.953)+-(0,1.001)(512,2.708)+-(0,0.7167)(1024,2.498)+-(0,0.5939)(2048,2.4)+-(0,0.5207)(4096,2.283)+-(0,0.4545)(8192,2.128)+-(0,0.4184)(16384,2.058)+-(0,0.3129)};
        \addlegendentry{$\beta_{pc}=2.0$};
        \addplot plot [error bars/.cd, y dir=both, y explicit] coordinates {(8,3.662)+-(0,3.231)(16,4.061)+-(0,2.472)(32,3.781)+-(0,1.838)(64,4.59)+-(0,2.174)(128,4.835)+-(0,1.875)(256,4.813)+-(0,1.52)(512,4.769)+-(0,1.445)(1024,4.858)+-(0,1.128)(2048,4.771)+-(0,1.091)(4096,4.973)+-(0,0.9351)(8192,4.906)+-(0,0.8357)(16384,4.738)+-(0,0.7041)};
        \addlegendentry{$\beta_{pc}=1.4$};
        \addplot plot [error bars/.cd, y dir=both, y explicit] coordinates {(8,3.5574455)+=(0,3.6070653)-=(0,2.7574455)(16,3.424)+-(0,2.118)(32,3.207)+-(0,1.898)(64,2.849)+-(0,1.067)(128,2.565)+-(0,0.882)(256,2.572)+-(0,0.9128)(512,2.38)+-(0,0.7033)(1024,2.266)+-(0,0.6134)(2048,2.146)+-(0,0.5176)(4096,1.975)+-(0,0.3569)(8192,1.865)+-(0,0.3215)(16384,1.816)+-(0,0.2849)};
        \addlegendentry{$\beta_{pc}=2.2$};
        \addplot plot [error bars/.cd, y dir=both, y explicit] coordinates {(8,3.6)+-(0,2.553)(16,3.609)+-(0,2.164)(32,3.298)+-(0,1.899)(64,3.909)+-(0,1.87)(128,3.972)+-(0,1.523)(256,3.899)+-(0,1.268)(512,3.661)+-(0,0.8882)(1024,3.738)+-(0,1.14)(2048,3.793)+-(0,0.8543)(4096,3.458)+-(0,0.719)(8192,3.461)+-(0,0.58)(16384,3.27)+-(0,0.5761)};
        \addlegendentry{$\beta_{pc}=1.6$};
        \addplot plot [error bars/.cd, y dir=both, y explicit] coordinates {(8,1.638)+-(0,1.188)(16,1.779)+-(0,1.063)(32,1.759)+-(0,1.01)(64,1.633)+-(0,0.6288)(128,1.582)+-(0,0.7011)(256,1.684)+-(0,0.6456)(512,1.558)+-(0,0.4811)(1024,1.539)+-(0,0.5533)(2048,1.592)+-(0,0.4723)(4096,1.555)+-(0,0.4819)(8192,1.562)+-(0,0.3582)(16384,1.502)+-(0,0.4143)};
        \addlegendentry{(1+1) EA};
    \end{loglogaxis}
\end{tikzpicture}
\caption{Running times of the three-distribution \ollga on \onemax
         starting from a point at distance $\sqrt{n}$ from the optimum,
         normalized by $n \ln n$,
         for different $\beta_{pc} = \beta_p = \beta_c$ and $\beta_\lambda=2.8$
         depending on the problem size $n$.
         The expected running times of \oea, also starting from the same distance, are given for comparison.}
\label{plot:omsqrt:pc}
\end{figure}
  
\begin{figure}[!t]
\begin{tikzpicture}
    \begin{semilogyaxis}[width=\linewidth, height=0.35\textheight,
                         legend pos=south east, legend columns=2,
                         cycle list name=threeheavycycle, every axis plot/.append style={very thick},
                         grid=major,
                         xlabel={$\beta_\lambda$}, ylabel={Runtime}]
        \addplot plot [error bars/.cd, y dir=both, y explicit] coordinates {(2.0,3730025)+=(0,12404850)-=(0,3727025)(2.2,1.46e+06)+-(0,4.171e+05)(2.4,1.598e+06)+-(0,1.778e+05)(2.6,1.915e+06)+-(0,2.684e+05)(2.8,2.434e+06)+-(0,4.01e+05)(3.0,2.977e+06)+-(0,4.84e+05)(3.2,3.572e+06)+-(0,6.453e+05)};
        \addlegendentry{$\beta_{pc}=1.0$};
        \addplot plot [error bars/.cd, y dir=both, y explicit] coordinates {(2.0,3.194e+05)+-(0,2.022e+05)(2.2,2.334e+05)+-(0,4.078e+04)(2.4,2.616e+05)+-(0,3.892e+04)(2.6,3.226e+05)+-(0,4.596e+04)(2.8,4.062e+05)+-(0,6.245e+04)(3.0,4.852e+05)+-(0,8.635e+04)(3.2,5.746e+05)+-(0,1.062e+05)};
        \addlegendentry{$\beta_{pc}=1.8$};
        \addplot plot [error bars/.cd, y dir=both, y explicit] coordinates {(2.0,1.111e+06)+-(0,1.104e+06)(2.2,7.195e+05)+-(0,1.795e+05)(2.4,8.169e+05)+-(0,1.274e+05)(2.6,1.007e+06)+-(0,1.345e+05)(2.8,1.246e+06)+-(0,1.913e+05)(3.0,1.506e+06)+-(0,2.487e+05)(3.2,1.738e+06)+-(0,3.382e+05)};
        \addlegendentry{$\beta_{pc}=1.2$};
        \addplot plot [error bars/.cd, y dir=both, y explicit] coordinates {(2.0,2.683e+05)+-(0,2.132e+05)(2.2,1.979e+05)+-(0,8.655e+04)(2.4,2.135e+05)+-(0,2.945e+04)(2.6,2.618e+05)+-(0,3.37e+04)(2.8,3.272e+05)+-(0,4.975e+04)(3.0,3.832e+05)+-(0,6.05e+04)(3.2,4.719e+05)+-(0,8.746e+04)};
        \addlegendentry{$\beta_{pc}=2.0$};
        \addplot plot [error bars/.cd, y dir=both, y explicit] coordinates {(2.0,1277287.3)+=(0,3673519.4)-=(0,1274287.3)(2.2,5.035e+05)+-(0,4.99e+05)(2.4,4.867e+05)+-(0,6.651e+04)(2.6,5.931e+05)+-(0,7.884e+04)(2.8,7.533e+05)+-(0,1.119e+05)(3.0,9.085e+05)+-(0,1.525e+05)(3.2,1.097e+06)+-(0,2.139e+05)};
        \addlegendentry{$\beta_{pc}=1.4$};
        \addplot plot [error bars/.cd, y dir=both, y explicit] coordinates {(2.0,275583.58)+=(0,297007.14)-=(0,272583.58)(2.2,1.688e+05)+-(0,4.403e+04)(2.4,1.867e+05)+-(0,3.903e+04)(2.6,2.271e+05)+-(0,3.342e+04)(2.8,2.887e+05)+-(0,4.529e+04)(3.0,3.484e+05)+-(0,6.077e+04)(3.2,4.117e+05)+-(0,8.286e+04)};
        \addlegendentry{$\beta_{pc}=2.2$};                   
        \addplot plot [error bars/.cd, y dir=both, y explicit] coordinates {(2.0,469652.54)+=(0,728872.51)-=(0,466652.54)(2.2,3.3e+05)+-(0,2.133e+05)(2.4,3.488e+05)+-(0,5.998e+04)(2.6,4.165e+05)+-(0,6.775e+04)(2.8,5.2e+05)+-(0,9.159e+04)(3.0,6.488e+05)+-(0,1.104e+05)(3.2,7.529e+05)+-(0,1.403e+05)};
        \addlegendentry{$\beta_{pc}=1.6$};
    \end{semilogyaxis}
\end{tikzpicture}
\caption{Running times of the three-distribution \ollga on \onemax
         starting from a point at distance $\sqrt{n}$ from the optimum,
         for $n=2^{14}$ and different $\beta_{pc} = \beta_p = \beta_c$
         depending on $\beta_\lambda$.}
\label{plot:omsqrt:lambda}
\end{figure}

Figure~\ref{plot:omsqrt:pc} presents the running times for this setup, normalized by $n \log n$, with varying $\beta_{pc}$ and $\beta_{\lambda} = 2.8$,
for initialization at the Hamming distance of $\sqrt{n}$ from the optimum. We also plot the results for the \oea.
As the latter is known to traverse this distance in time $\Theta(n \log n)$, its plot is horizontal in Figure~\ref{plot:omsqrt:pc}.
The \ollga, on the other hand, shows a different behavior, as the plots for $\beta_{pc} \ge 1.4$ start descending at some $n$ that grows as $\beta_{pc}$ decreases.
We speculate that this will also happen for other values of $\beta_{pc}$ when $n$ is large enough.
This means that at small distances to the optimum, where the single-bit flipping mode employed by the \oea is not very efficient,
even the generalized version of the \ollga with parameters chosen ``lazily'' from heavy-tailed distributions benefits from starting close to the optimum.

Figure~\ref{plot:omsqrt:lambda} investigates the interplay of the algorithm's parameters when $\beta_{\lambda}$ also varies. One can see that larger $\beta_{pc}$ are still beneficial for \onemax
even when the algorithm starts from a good solution, because the probability of changing only one bit in the crossover phase is bigger. However, larger population sizes are better,
which a rather small best value $\beta_{\lambda} = 2.2$ suggests. Note that when the algorithm starts from a random solution, values closer to $2.5$ are better. As the standard \ollga progresses faster
with larger population sizes close to the optimum, we may conjecture that it is true for the three-distribution version as well.

\section{Conclusion}
In this paper we proposed a new notion of the fixed-start runtime analysis, which in some sense complements the fixed-target notion. Among the first results in this direction we observed that different algorithms profit differently from having an access to a solution close to the optimum.

The performance of all observed algorithms, however, is far from the theoretical lower bound. Hence, we are still either to find the EAs which can benefit from good initial solutions or to prove a stronger lower bounds for unary and binary algorithms.

\section*{Acknowledgements}

This work was supported by a public grant as part of the Investissement d'avenir project, reference ANR-11-LABX-0056-LMH, LabEx LMH, in a joint call with Gaspard Monge Program for optimization, operations research and their interactions with data sciences.

\bibliographystyle{alpha}
\bibliography{ich_master,alles_ea_master,extra} % bibliography,

%comment out for arxiv
% \end{large}
}%end sloppy

\end{document}